\documentclass{article}

\usepackage[preprint]{neurips_2026}

\usepackage{times}
\usepackage{soul}
\usepackage{url}
\usepackage[hidelinks]{hyperref}
\usepackage[utf8]{inputenc}
\usepackage[small]{caption}
\usepackage{graphicx}
\usepackage{amsmath}
\usepackage{amsthm}
\usepackage{booktabs}
\usepackage{algorithm}
\usepackage{algpseudocode} 
\usepackage{amssymb}
\usepackage{xcolor}
\usepackage{longtable}

\newcommand{\sig}[1]{\ifmmode^{#1}\else\textsuperscript{#1}\fi}
\newtheorem{theorem}{Theorem}
\newtheorem{lem}{Lemma}

\title{PLATO: Pointer Learner for Agent and Task Openness}
\author{%
  Alireza Saleh Abadi \\
  School of Computing\\
  University of Nebraska-Lincoln\\
  \texttt{asalehabadi2@huskers.unl.edu} \\
  \And
  Leen-Kiat Soh \\
  School of Computing\\
  University of Nebraska-Lincoln\\
  \texttt{lksoh@unl.edu} \\
  \And
  Daniel Alan Redder \\
  School of Computing\\
  University of Georgia\\
  \texttt{Daniel.Redder@uga.edu} \\
  \AND
  Adam Eck \\
  Oberlin College\\
  \texttt{aeck@oberlin.edu} \\
  \And
  Prashant Doshi \\
  School of Computing\\
  University of Georgia\\
  \texttt{pdoshi@uga.edu} \\
}

\begin{document}

\maketitle
\begin{abstract}
Open agent systems (OASYS) are increasingly prevalent in real-world domains where the sets of agents and tasks change unpredictably over time. Such openness, including agent openness (AO) and task openness (TO), poses a fundamental challenge to multi-agent reinforcement learning (MARL), which typically assumes fixed state and action spaces. Existing methods address openness only partially: padding and masking approaches introduce artificial bounds, while recent graph-based or hypergraph methods handle one dimension of openness but still depend on restrictive assumptions. In this paper, we introduce Pointer Learner for Agent and Task Openness (PLATO), a pointer-network-based actor combined with a centralized graph neural network (GNN) critic, trained with multi-agent proximal policy optimization under a centralized training and decentralized execution paradigm. Our pointer-based actor outputs distributions directly over the current task set. This directly supports changing action spaces without masking or retraining. Our GNN critic encodes agent–task interactions as a graph that changes shape with task and agent composition. Together, these components consider AO and TO without the boundedness of existing approaches. We formalize PLATO in a Task-and-Agent-Open Markov Game (TaAgO-MG), extending prior task-open formulations, and prove it is well-defined over the resulting unbounded state and action spaces. We evaluate PLATO with the Methods for Open Agent Systems Evaluation Initiative (MOASEI) wildfire suppression domain, an environment designed for open multi-agent system evaluation, and we demonstrate strong performance and more consistent zero-shot generalization than state-of-the-art baselines in OASYS.

\end{abstract}



\section{Introduction}\label{sec:introduction}
In many real-world multi-agent systems (MAS), the world does not remain fixed while agents are operating. New tasks emerge unpredictably, agents may leave or join mid-operation and even their goals and capabilities can change. Consider a wildfire response scenario, where fires can ignite, be suppressed, or burn out, spread unpredictably across the landscape, and where firefighters may join or leave as their suppressant runs out. This dynamic nature, known broadly as openness \citep{aimag_eck}, defies the common assumptions of multi-agent reinforcement learning (MARL), where the sets of agents and tasks are instead assumed to be fixed during training and execution. Systems with these characteristics are referred to as \emph{open} agent systems (OASYS). Two common forms of openness \citep{aimag_eck} are agent openness (AO), where agents enter or leave, and task openness (TO), where tasks appear or disappear. Our work handles AO and TO together.

Handling openness is critical for robust, scalable deployment of MARL in practical settings because systems must continue to perform even as their composition changes. The challenge is that openness causes the underlying decision problem to change during execution: agents or tasks may appear or disappear, and policies that were optimal moments earlier may no longer apply. Environments are therefore \emph{unbounded} in the quantity and composition of agents and tasks with which agents must coordinate. Practical MARL methods must therefore adapt their policies to these unbounded changes without retraining.

A few notable MARL approaches attempt to address facets of openness in MAS but none has addressed AO and TO jointly without limitations. For example, \textbf{MOHITO}~\citep{anilmohito} explicitly targets \textit{task openness} under TaO-MG; while its interaction hypergraph is expressive enough to represent AO, MOHITO does not explicitly model or evaluate handling stochastic AO. \textbf{Deep Implicit Coordination Graphs (DICG)}~\citep{dicg}, though not explicitly designed for openness, introduce flexible coordination structures that provide limited support for AO and TO but still operate within bounded and padded inputs. \textbf{Graph convolutional reinforcement learning (DGN)}~\citep{dgn} performs coordination by graph convolutions over an agent graph and can, in principle, accommodate changing agent neighborhoods, but it is not designed to model task openness and assumes a fixed action set and agent-centric graph structure.  
Taken together, these methods represent important progress, yet none jointly addresses both AO and TO without restrictive assumptions or fixed bounds on the agent–task space. 


We address these challenges with \textbf{Pointer Learner for Agent and Task Openness}, namely \textbf{PLATO}, an actor-critic MARL algorithm with centralized training and decentralized execution (CTDE) \citep{lowe2017multi}. PLATO uses a pointer network-based \citep{ptr} actor with a centralized graph neural network (GNN) \citep{gnn} critic, and we train it with multi-agent proximal policy optimization (MAPPO) \citep{mappo}.
PLATO handles TO and AO by learning a content-based pointing policy--inspired by pointer networks \citep{ptr}--in which agents point to tasks via query–key attention over only the currently available tasks, so the action distribution adapts as agents or tasks enter and leave without retraining or bounding.



The main contributions of this work are: (1) \textbf{PLATO}, the first MARL architecture to handle both AO and TO jointly via content-based pointing, without fixed bounds or retraining; (2) a \textbf{TaAgO-MG} formalization extending prior task-open formulations, with formal guarantees that PLATO is well-defined and permutation-invariant over the resulting unbounded state and action spaces (Appendix~\ref{app:well-defined} and~\ref{app:stats-perm}); (3) an \textbf{empirical evaluation} of PLATO's performance and action efficiency across AO/TO setups and grid sizes in the wildfire domain, comparing against smart and heuristic baselines; and (4) a \textbf{zero-shot generalizability assessment} by deploying trained policies on unseen grid sizes without retraining.

\section{Background}\label{sec:background}



\subsection{Open Agent Systems (OASYS)}\label{oasys}

Openness in multi-agent systems refers to environments where either agents or tasks can change over time. A recent survey \citep{aimag_eck} and earlier discussions \citep{original-omas,Calmet04,jumadinova2013strategic} identify three main forms: (1) \emph{Agent openness (AO)} occurs when agents join or leave the system, altering team composition and available capabilities; (2) \emph{Task openness (TO)} arises when tasks appear or disappear during operation, requiring agents to adapt to new or vanishing opportunities; and (3) \emph{Frame openness (FO)} refers to changes in agents' internal configurations--e.g., goals, preferences, or capabilities, redefining how agents perceive and interact in the environment. 

Among the three categories, AO has received the most attention in prior work. It was first defined over two decades ago as the addition of agents beyond the initial set \citep{original-omas}, and later refined as the complexity of transforming a system to add or remove an agent \citep{Jamroga13}. Subsequent studies showed that explicitly reasoning about agent presence improves system behavior \citep{Chandrasekaran,cohen2017open,aimag_eck,Kakarlapudi22}. 


By comparison, TO is less studied than AO, and joint AO+TO settings are rarely addressed in a single learning framework. Some model task arrivals and departures alongside agent dynamics \citep{jumadinova2013strategic,chen2016collaborative}, and in MARL, MOHITO explicitly addresses TO \citep{anilmohito}. Many domains exhibit exogenous TO while the agent set can also change online, requiring policies that remain well-defined under both forms of openness.


While FO captures deeper behavioral and cognitive variability, AO and TO make the environment structure dynamic fundamentally. Consequently, our work focuses on AO and TO, which directly determine the system’s composition and coordination patterns and leave FO for future work. 

\subsection{Task Open Markov Games}
\label{sec:tao-mg}
\noindent Task-Open Markov Games model stochastic multi-agent decision-making problems, extending Markov games \citep{Littman} for task openness \citep{anilmohito}. A TaO-MG is defined as:
$$
\text{TaO-MG} \triangleq \langle {M}, X, \Psi \rangle 
$$
\noindent where $\bullet\ M$ the base model, is a Markov game, where $N$ is the set of agents, $S$ the state space, $A_i$ the action space of agent $i$, $T: S \times \prod_{i \in N} A_i \to \Delta(S)$ the transition function, $R_i: S \times \prod_{i \in N} A_i \to \mathbb{R}$ the reward function for agent $i$, $\gamma \in [0,1)$ the discount factor, and $s_0$ the initial state. $\bullet\ X$ is the multiset of current tasks; $\bullet\ \Psi$ is a generator function whose components, $(\Psi_S,\Psi_A,\Psi_T,\Psi_R)$ for a MG, adapt $M^t$ to $M^{t+1}$ for a new set of tasks, $X^{t+1}$, at time $t+1$. 

Reinforcement learning in a TaO-MG constitutes each agent $i\in N$ learning a policy $\pi_i$ that maximizes that agent's expected sum of discounted rewards,
$$
\mathbb{E}_{\tau\sim\pi} \left [ \sum^\infty_{t=0} \gamma^t R^t_{i}(s^t,\mathbf{a}^t | s=s_0) \right ],
$$
\noindent where trajectories are in the context of the current model, $\tau=\langle s^t,\mathbf{a}^t,X^t\rangle$, and $R_i^t=\Psi_R(R_i^{t-1},X^t)$. 
This formalization makes task-driven changes explicit, but it leaves open how policies should act over a variable task set (and does not capture agent openness), motivating our TaAgO-MG extension and set-conditioned action selection in Section~\ref{sec:open-mas}.




\subsection{Pointer Networks}\label{pointer_background}

Pointer networks~\citep{ptr} generate outputs by selecting from the \emph{current} input set rather than from a fixed vocabulary, making them naturally suited to variable-sized action spaces. This is directly relevant for OASYS: as tasks appear and disappear (TO), the selectable action set changes; as agents enter and leave (AO), the team context conditioning those selections changes. PLATO uses a single-step pointer at each timestep---each agent forms a query summarizing its team context and scores each available task via additive attention~\citep{bahdanau2014neural}---yielding a distribution whose support matches the current task set. The formal mechanism is detailed in Section~\ref{actor}.

\noindent

\section{Open Agent-Task in MAS}\label{sec:open-mas}
In this section, we define the \textbf{Task–Agent-Open Markov Game (TaAgO-MG)}, an extension of TaO-MG (see Section 2.2), to consider both AO and TO. Since both the task set and the agent set may change over time, the feasibility and utility of each task-action depend on the current tasks and team composition. Thus, a practical policy should evaluate the currently available tasks and the current team context, rather than assume fixed action labels with permanent meaning. 

In the following, we instantiate the Markov game using the popular wildfire suppression domain \citep{Chandrasekaran,Kakarlapudi22,anilmohito,eck2020scalable,patino2025inaugural} that exhibits both TO and AO. Briefly, the environment is a grid where firefighter agents are stationed in fixed positions and each fire is a task with a location and size \citep{patino2025inaugural}. Fires may ignite (e.g., from lightning or spread), burn out, or become extinguished by firefighters, changing the set of active tasks and creating \emph{task openness}. At each timestep, an agent chooses to either fight a fire or remain idle (\textsc{no-op}); it may also exit the environment when out of suppressant, creating \emph{agent openness}. Rewards also vary: larger fires are more challenging and rewarding, new fires create suppression opportunities, while burnouts and spread alter potential gains and losses.

Consider:
\[
\text{TaAgO-MG} \triangleq \langle M, X, N, \Psi \rangle,
\]
where $N$ and $X$ are the changing sets of agents and tasks, respectively, rather than a fixed set. We extend the generator, $\Psi$ from MOHITO \citep{anilmohito} to also cover $N$. Specifically, we treat $\Psi$ as a generator that takes the base game $M$ and the current open sets $(X^t,N^t)$ and returns the time-indexed game, i.e., $M^t := \Psi(M, X^t, N^t)$. This lets us account for arbitrary deterministic change which new agents entail for the game. Formally the generator is defined as
$$
M^t=\big\langle \Psi_S(M,X^t,N^t),\Psi_A(M,X^t,N^t),\Psi_T(M,X^t,N^t),\Psi_R(M,X^t,N^t)\big\rangle .
$$
$\Psi_S$ updates the state, $\Psi_A$ updates the action set, 
$\Psi_T$ generates the transition function, and $\Psi_R$ generates the reward functions.
This decomposition allows the environment to update all components when the task set or agent set changes. Now, let $A_i^t$ denote the action space of agent $n_i$ at time $t$, and let \textsf{suppress}$(x)$ denote the action of suppressing fire $x \in X^t$. (Note that in our wildfire instantiation, each task has one action, so selecting a task uniquely determines the non-\textsc{no-op} action.)

\textbf{Task Arrival.}
If a new fire appears at time $t$, $X^{t+1} = X^t \cup \{x_{\text{new}}\}$. The generator induces the action update, $A_{i}^{t+1} = A_{i}^t
\cup
\{\textsf{suppress}(x_{\text{new}})\},
\qquad \forall n_i \in N^t$, corresponding to $\Psi_A(M,X^{t+1},N^t)$.

\textbf{Agent Departure.}
If agent $n_i$ depletes its suppressant and exits, $N^{t+1} = N^t \setminus \{n_i\}$. The generator produces the updated game components: 
$S^{t+1} = S^t \setminus \{n_i\}$ and $R^{t+1} := \{R_j^{t}\}_{n_j \in N^{t+1}}$, which correspond to $\Psi_S(M,X^{t+1},N^{t+1})$ and
$\Psi_R(M,X^{t+1},N^{t+1})$, respectively.

\section{Methodology}\label{sec:methodology}

PLATO is a CTDE method for multi-agent systems with TO and AO, as shown in Figure~\ref{fig:architecture}. PLATO's core novelty is reformulating action selection from fixed-index classification to \emph{content-based pointing}: each agent forms a context vector (query) and selects among the currently available tasks by matching this query against task feature vectors (keys). The policy output size is determined by the current task set, so tasks and agents can appear or disappear without retraining or architectural changes, keeping the policy well-defined as openness changes the environment. 

\begin{figure}[htbp]
  \centering
  \includegraphics[width=0.7\linewidth]{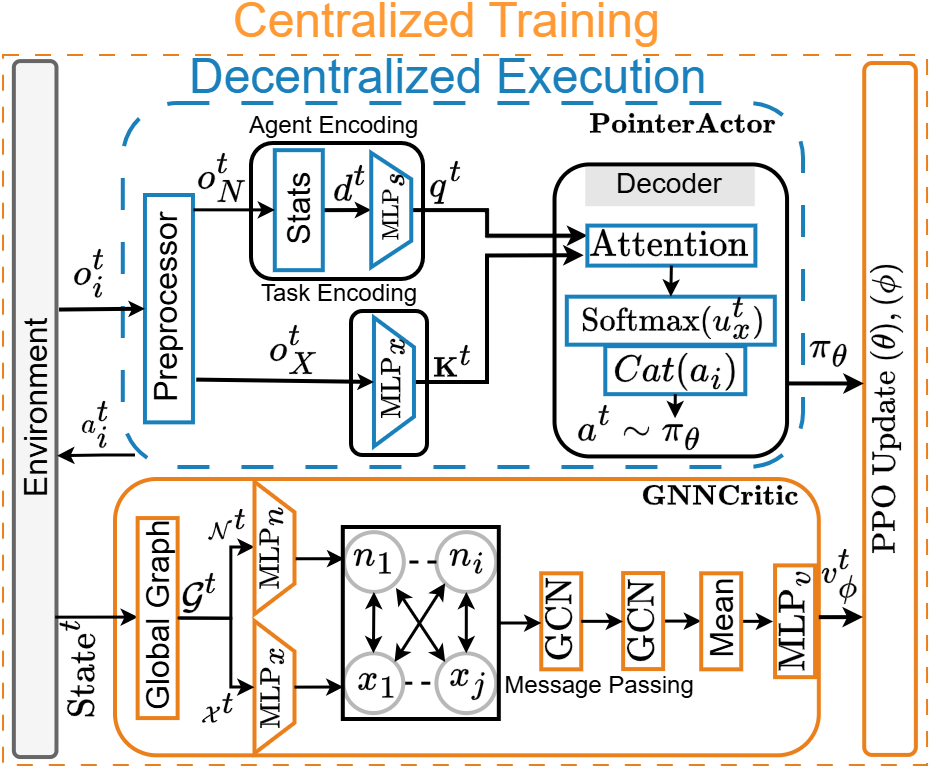}
  \caption{\small \textbf{PLATO architecture for OASYS.} It uses a pointer actor approach and CTDE training. 
  }
\label{fig:architecture}
\end{figure}

\subsection{Pointer Actor}\label{actor}
The pointer actor is the decentralized policy used at execution time. It supports TO by producing a policy whose support matches the currently available tasks, and supports AO by conditioning decisions on a permutation-invariant summary of the current team. The actor (1) deterministically factors $o_i^t$ into an agent-feature matrix and a task-feature matrix, (2) encodes the current agent set into a fixed-dimensional query, (3) encodes each available task into a key, and (4) applies additive cross-attention to obtain a categorical distribution over the available task-actions. 

\subsection{Observation Factorization}\label{factorization}  In TaAgO-MG, the observation of the $i^{\text{th}}$ agent at time $t$, $o_i^t$, is its local view of the global state, $\mathrm{State}^t$, restricted to the current agent set $N^t$ and task set $X^t$ that are visible or relevant to agent $i$.  
The preprocessor factors $o_i$ into an agent-feature matrix $o_N^t \in \mathbb{R}^{|N^t|\times f_N}$ and a task-feature matrix $o_X^t \in \mathbb{R}^{|X^t|\times f_X}$. In these feature matrices, rows correspond to $i$'s observation of tasks and other agents, similar to node factorization for MOHITO's critic graph \citep{anilmohito}. 


\begin{algorithm}[t]
\caption{\textsc{PLATO: Pointer Actor, GNN Critic and training}}
\label{alg:plato}
\footnotesize
\begin{algorithmic}[1]
\State \textbf{Inputs:} env $e$, horizon $T$, actor $\pi_\theta$, critic $V_\phi$, initial observation $O^0$
\For{$t=1\ldots T$}
  \State $a^t_i\sim \textsc{PointerActor}(O_i^t) \quad \forall i \in N^t$
  \State $O^t,\text{State}^t,r^t\gets e(a^t)$
  \State $\hat V^t \gets \textsc{GNNCritic}(\text{State}^t;\phi)$;\; store $(\cdot)$
  \State compute GAE~\citep{gae} $\hat A^t$;\; update $\theta$ by PPO-clip;\; update $\phi$ by MSE$(\hat V^t,\hat R^t)$
\EndFor
\end{algorithmic}
\noindent\rule{\linewidth}{0.3pt}
\begin{minipage}[t]{0.52\linewidth}
\begin{algorithmic}[1]
\setcounter{ALG@line}{7}
\Procedure{\textsc{PointerActor}}{$O_i^t$}
  \State $o_N^t,o_X^t\gets \textsc{Preprocessor}(O_i^t)$
  \State $d^t\!\gets\![\mu,\mathrm{Var},\min,\max](o^t_{N})$
  \State $q^t \gets \textsc{MLP}_s(d^t)$
  \State $K^t \gets \{\mathrm{MLP}_x(o) \mid o \in o_X^t\} \cup \{\mathrm{MLP}_x(o_{\textsc{no-op}})\}$
  \State $u_x^t\!\gets\!\mathbf{v}^\top\tanh(\mathbf{W}_K k_x^t+\mathbf{W}_q q^t) \quad \forall k_x^t \in K^t$
  \State \textbf{return} $\mathrm{Cat}(\mathrm{softmax}([u_x^t]_{x \in X^t \cup \{\textsc{no-op}\}}))$
\EndProcedure
\end{algorithmic}
\end{minipage}%
\hfill
\begin{minipage}[t]{0.46\linewidth}
\begin{algorithmic}[1]
\setcounter{ALG@line}{15}
\Procedure{\textsc{GNNCritic}}{$\text{State}^t$}
  \State form $\mathcal{G}^t=\langle\mathcal{X}^t,\mathcal{N}^t,\mathcal{E}^t\rangle$
  \State $h_0^t \gets \text{Concat}\left[\text{MLP}_{\mathcal{N}}\!\left(\mathcal{N}^t\right),\text{MLP}_\mathcal{X}\!\left(\mathcal{X}^t\right)\right]$
  \State $h^t_l \gets \textsc{GCN}(h^t_{l-1}, \mathcal{E}^t) \quad \text{for each layer }l$
  \State $M^{(L)}\!\gets\!\frac{1}{|\mathcal{N}^t|+|\mathcal{X}^t|}\sum_i(h^t_L + h^t_0)_i$ \label{line:critic_pool}
  \State \textbf{return} $V=W_v\,\mathrm{MLP}_v(M^{(L)})+b_v$ \label{line:critic_value}
\EndProcedure
\end{algorithmic}
\end{minipage}
\end{algorithm}

\textbf{Preprocessor.}\label{actor:preprocessor}
Each agent $i$ receives a local observation $o_i^t \in O^t$ covering the agents and tasks relevant to $i$ at time $t$. A deterministic preprocessor factors this into an agent-feature matrix and a task-feature matrix:
$$
    o^t_N \in \mathbb{R}^{|N^t|\times f_N}, \quad o^t_X \in \mathbb{R}^{|X^t|\times f_X}.
$$
\textbf{Encoder.}\label{actor:encoder}
This step constructs one \emph{query} from the agent set and a variable-sized set of \emph{keys} from the task set. We use MLP encoders since the actor makes a single decision per timestep from fully observable features; sequential decoding is not required.
 We address AO by aggregating four statistics (i.e., mean, variance, minimum, and maximum) over agents in $o^t_N$ to produce a fixed-length vector $d^t$ (Alg.~\ref{alg:plato}, line~10) and then encoding it as $q^t = \mathrm{MLP}_s(d^t)$ (Alg.~\ref{alg:plato}, line~11).
 In Lemma~\ref{lem:stats}, we show the encoder is permutation invariant to agent order (proof in Appendix~\ref{app:stats-perm}).
 We use multiple statistics to reduce aliasing, where different agent sets could otherwise produce identical representations. We address TO through attention. We construct $|X^t|+1$ keys $K^t$ from $o^t_X$ (Alg.~\ref{alg:plato}, line~12) in the task encoder: one key $k_x^t = \mathrm{MLP}_x(o_x^t)$ per task, plus a \textsc{no-op} key. We assume one action per task with a persistent \textsc{no-op}, so $a_i^t=\{\textsc{no-op}\}\cup\{a_x\mid x\in X^t\}$. 
\begin{lem}[Encoder Permutation Invariance]
    \label{lem:stats}
    Each team statistic is permutation invariant, $MLP_s$ operates on the aggregate, and $MLP_x$ operates on task observations independently. Therefore, the agent and task encodings are permutation invariant.
\end{lem}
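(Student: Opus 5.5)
The proof is direct and formalizes the three claims in the statement. Let $P\in\mathbb{R}^{|N^t|\times|N^t|}$ be any permutation matrix acting on the rows of $o^t_N$, and $Q\in\mathbb{R}^{|X^t|\times|X^t|}$ any permutation matrix acting on the rows of $o^t_X$. I will show two things: $q^t(Po^t_N)=q^t(o^t_N)$, which is invariance of the agent encoding; and the key set satisfies $K^t(Qo^t_X)=\{k_{\sigma(x)}\}$ with each task keeping its own key, which is equivariance, i.e.\ invariance of $K^t$ as a multiset.

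\textbf{Step 1: the team statistics.} Write $o^t_N$ as rows $r_1,\dots,r_n$. Each coordinate of $d^t$ is a symmetric function of these rows, applied column-wise:
\[
\mu=\tfrac1n\sum_j r_j,\qquad \mathrm{Var}=\tfrac1n\sum_j (r_j-\mu)^{\odot 2},
\]
together with the elementwise $\min_j r_j$ and $\max_j r_j$. Sums are invariant under reindexing $j\mapsto\pi(j)$, and so are min and max over a finite index set. $\mathrm{Var}$ depends on the rows only through sums and through $\mu$, which is already invariant. Hence $d^t(Po^t_N)=d^t(o^t_N)$, where $d^t$ is the concatenation in line~10 of Alg.~\ref{alg:plato}.

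\textbf{Step 2: composition.} $q^t=\mathrm{MLP}_s(d^t)$ is a function of $d^t$ alone, so invariance passes through composition. Note that $d^t$ has fixed length $4f_N$ regardless of $|N^t|$, so $\mathrm{MLP}_s$ is well defined for every agent count.

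\textbf{Step 3: task keys.} The keys are formed row-wise, $k_x=\mathrm{MLP}_x(o_x)$, with shared weights and no interaction between rows. Permuting the rows by $Q$ therefore permutes the keys identically: $K(Qo_X)=QK(o_X)$. The set $\{k_x\}\cup\{\mathrm{MLP}_x(o_{\textsc{no-op}})\}$ is thus unchanged, and each task's key depends only on its own features.

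The only subtle point is that a fixed-length vector of keys cannot itself be permutation invariant. Invariance must therefore be stated at the level of the multiset $K^t$, or equivalently as equivariance of the key map. This is the step I would phrase carefully. I would add a remark that equivariance is what the pointer needs: the scores $u_x$ depend only on $(k_x,q^t)$, so the softmax distribution assigns each task the same probability under any ordering. Handling ties in min and max needs no special care, since those operators return values, not indices.
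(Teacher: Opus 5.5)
Your proof is correct and follows the same route as the paper: the appendix proof checks, feature by feature, that the mean, variance, minimum and maximum are unchanged when the agent rows are reindexed by a bijection, which is exactly your Step~1. Your Steps~2--3 (passing invariance through $\mathrm{MLP}_s$, and phrasing the task side as equivariance of the key map, i.e.\ invariance of $K^t$ as a multiset) are left implicit in the paper's proof of this lemma and appear only informally in the proof of Theorem~1(3), so your version is, if anything, more complete.
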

\paragraph{Pointer Decoder.}\label{actor:decoder} The Pointer Decoder is a one-step variant of a pointer network \citep{ptr} that maps the \emph{query} and \emph{keys} to a stochastic policy over currently available \emph{task-actions}.  We use one-step, additive attention \citep{bahdanau2014neural} to decode $(q^t,k_x^t)$ to $u_x^t$, a scalar attention score (Alg.~\ref{alg:plato}, line~13), in which, \(\mathbf{W}_K\) and \(\mathbf{W}_q\) are learnable projection matrices, and
\(\mathbf{v}\) is a learnable vector projecting the compatibility space to a scalar score. We then normalize over all attention scores using SOFTMAX to obtain the policy (Alg.~\ref{alg:plato}, line~14).  

\subsection{GNN Critic} \label{critic}
The critic is centralized during training and estimates $V_\phi(s^t)$ under CTDE. It represents the open global state as a bipartite agent--task graph whose node sets vary with $|N^t|$ and $|X^t|$. Agent and task features are embedded into a shared latent space, message passing propagates relational information across agent--task edges, and pooling yields a fixed-size graph summary that is mapped to a scalar value. This avoids fixed-size concatenation and supports variable numbers of agents and tasks.


$\mathcal{G}^t = \langle\mathcal{X}^t, \mathcal{N}^t, \mathcal{E}^t\rangle$
encodes the state, $\text{State}^t$, at time $t$, where
(1) $\mathcal{X}^t \in \mathbb{R}^{|X^t|\times f_X}$ are \emph{task nodes} and contain $f_X$ many features, describing each task at time $t$; (2) $\mathcal{N}^t \in \mathbb{R}^{|N^t|\times f_N}$ are \emph{agent nodes} and contain $f_N$ many features, describing each agent at time $t$; and (3) 
$\mathcal{E}^t$ includes undirected edges between all agent-task pairs which have an action, $\{\langle n,x\rangle | n\in \mathcal{N}^t \text{ and }x\in \mathcal{X}^t\}$.  


We build a critic network using a GCN \citep{kipf2016semi} to propagate information between agent and task nodes by message passing. First, we use MLP encoders to map $\mathcal{N}^t$ and $\mathcal{X}^t$ into a shared latent space and concatenate them to form $\mathbf{h}_0^t$ (Alg.~\ref{alg:plato}, line~18), where $\mathbf{h}_0^t$  is the embedding of all nodes in $\mathcal{G}^t$ and $f_h$ is the hidden dimension. We then perform message passing (Alg.~\ref{alg:plato}, line~19) with a skip connection \citep{he2016deep} that adds $\mathbf{h}_0^t$ to the final layer output before pooling, preserving individual node features that would otherwise be smoothed out by message passing. We use two layers to allow information to propagate between agents and tasks through the bipartite graph.
Finally, we map the node embeddings to a scalar value by mean pooling over nodes followed by an $\mathrm{MLP}_v$ value head (Alg.~\ref{alg:plato}, lines~\ref{line:critic_pool}--\ref{line:critic_value}). We outline in Theorem~\ref{theorem:well-defined} and show in Appendix~\ref{app:well-defined} that the actor and critic are well defined functions over the open task and agent spaces.

\begin{theorem}[PLATO is well defined]
    \label{theorem:well-defined}
    The actor and critic are permutation invariant, as shown in Lemma~\ref{lem:stats} and Appendix~\ref{app:well-defined}. The MLPs act on observations independently or on aggregates. The pointer decoder and critic GCNs are defined over a single unbounded dimension. Therefore, the actor and critic are well defined functions over the countably infinite open agent--task space.
\end{theorem}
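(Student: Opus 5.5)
The plan is to make ``well defined'' precise and then check each component against that definition. The open agent--task space is
\[
\mathcal{S}=\bigsqcup_{n\ge 1,\,m\ge 0}\mathbb{R}^{n\times f_N}\times\mathbb{R}^{m\times f_X},
\]
a countable disjoint union of finite-dimensional spaces indexed by $(|N^t|,|X^t|)=(n,m)$. It should be taken modulo row permutations within each block. A map on $\mathcal{S}$ is well defined if it satisfies three conditions. First, it is specified by a single finite parameter set $(\theta,\phi)$ that does not depend on $(n,m)$. Second, on every component it returns an output of the prescribed type: a probability vector in $\Delta^{m}$ indexed by $X^t\cup\{\textsc{no-op}\}$ for the actor, and a scalar for the critic. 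Third, it is invariant (for the critic) or equivariant (for the actor's support, indexed by task identity) under permutations, so that its value does not depend on which representative of the equivalence class is chosen. Having fixed this definition, I would verify the three conditions component by component.

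\emph{Actor.} The argument proceeds in three steps.
\begin{enumerate}
\item For any $n\ge1$, the statistics $d^t=[\mu,\mathrm{Var},\min,\max](o_N^t)$ are finite sums and coordinatewise extrema over $n$ rows, so they lie in $\mathbb{R}^{4f_N}$ regardless of $n$. Lemma~\ref{lem:stats} gives their permutation invariance. Hence $q^t=\mathrm{MLP}_s(d^t)$ is a fixed-size vector computed with fixed weights.
\item Each key $k_x^t=\mathrm{MLP}_x(o_x^t)$ is computed rowwise, and so is the score $u_x^t=\mathbf{v}^\top\tanh(\mathbf{W}_Kk_x^t+\mathbf{W}_qq^t)$. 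This yields a finite real vector of length $m+1$. Permuting tasks therefore permutes the scores identically, which is the equivariance condition.
\item Softmax is defined on $\mathbb{R}^{m+1}$ for every finite $m$, and the persistent \textsc{no-op} key guarantees $m+1\ge1$, so the normalizer is strictly positive. The output is therefore a valid categorical distribution whose support is exactly the currently available actions.
\end{enumerate}

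\emph{Critic.} Given $(n,m)$, the graph $\mathcal{G}^t$ is constructed deterministically, with the complete bipartite edge set $\mathcal{E}^t$. The node encoders act rowwise and produce $h_0^t\in\mathbb{R}^{(n+m)\times f_h}$. Each GCN layer computes $\hat D^{-1/2}\hat A\hat D^{-1/2}hW$, where $\hat A$ is the adjacency matrix with self-loops. Self-loops make every degree at least one, so the normalization is always defined. The weight $W$ has size $f_h\times f_h$, independent of $n+m$; this is the sense in which the layer ``acts over a single unbounded dimension'' (the node count). Relabeling the nodes by a permutation $P$ sends $\hat A\mapsto P\hat AP^\top$ and $h\mapsto Ph$, so each layer is permutation equivariant, and so is the skip connection $h_L+h_0$. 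Mean pooling divides by $n+m\ge1$ and is permutation invariant, yielding a vector in $\mathbb{R}^{f_h}$. The value head then maps this vector to a scalar.

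Finally, I would combine the two parts. Every component is a composition of maps that are well defined on each stratum $(n,m)$ and share the same parameters across all strata. Taking the union over the countably many pairs $(n,m)$ therefore gives a single function on all of $\mathcal{S}$, which is the claim of Theorem~\ref{theorem:well-defined}. The main obstacle is the edge cases, not the algebra. The empty agent set $n=0$ must be excluded, or else handled by a convention: the actor is only ever invoked for some $i\in N^t$, so $n\ge1$ whenever it runs, but the critic still needs a convention (for example $V=0$) when no agents remain. Similarly, the case of no tasks, $m=0$, must be shown harmless; here the \textsc{no-op} node in the actor and the agent nodes in the critic keep both maps nonempty. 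I would also need to state explicitly that the variance is the population variance, so that it is defined when $n=1$.
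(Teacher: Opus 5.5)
Your proposal is correct and follows essentially the paper's route. The paper splits well-definedness into totality of $\pi$ and $V$ on every state (fixed-length team statistics, row-wise MLPs, a fixed-parameter scorer and GCN, and the actor's support matched to $\Psi_A$ via the one-action-per-task assumption) and single-valuedness (determinism plus permutation invariance), which correspond to your conditions (b) and (c); your handling of task equivariance, the GCN normalization, and the $n=1$ and $m=0$ edge cases is more careful than the paper's. One small correction to your last paragraph: the critic needs no convention when $n=0$ but $m\ge 1$, because $h_0$ then contains only task nodes and mean pooling divides by $n+m\ge 1$, so only the fully empty graph $n=m=0$ needs one.
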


\subsection{Training}\label{training}

We train PLATO under CTDE using MAPPO (\citep{mappo,ppo}) with a centralized critic and a single shared policy, while at execution time only the decentralized pointer actor is used. The actor is optimized with the PPO clipped objective (\citep{ppo}). The critic uses the standard PPO clipped value regression loss, and we compute advantages with generalized advantage estimation (GAE) \citep{gae} (Alg.~\ref{alg:plato}, line~6). All actor ($\theta = \{MLP_S, MLP_X, \mathbf{W}_K, \mathbf{W}_q\}$) and critic ($\phi = \{MLP_{\mathcal{X}}, MLP_{\mathcal{N}}, MLP_v, \text{GCNs}\}$) components are standard differentiable modules optimized jointly. The per-timestep training cost holds all neural network dimensions constant and scales only with the number of active agents $|N^t|$ and active tasks $|X^t|$, as formalized in Theorem~\ref{thm:analysis} (proof details in Appendix~\ref{app:complexity}).

\begin{theorem}[Complexity Analysis for PLATO]
    \label{thm:analysis}
    Treating all neural network dimensions and the number of GCN layers as constants, the per-timestep cost of one PLATO training step---actor forward pass over all $|N^t|$ agents and critic forward pass combined---is $\mathcal{O}(|N^t|^2+|N^t||X^t|)$.
\end{theorem}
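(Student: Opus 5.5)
We bound the actor and the critic separately and then add the two costs, treating $f_N$, $f_X$, $f_h$, the MLP widths and depths, the attention dimensions of $\mathbf{W}_K$, $\mathbf{W}_q$, $\mathbf{v}$, and the number of GCN layers $L$ as $\mathcal{O}(1)$. Throughout we write $n=|N^t|$ and $m=|X^t|$.

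\emph{Actor, per agent.} Following the steps of \textsc{PointerActor} in Algorithm~\ref{alg:plato}:
\begin{itemize}
\item The preprocessor emits $o_N^t\in\mathbb{R}^{n\times f_N}$ and $o_X^t\in\mathbb{R}^{m\times f_X}$. Assuming it reads each row a constant number of times, it costs $\mathcal{O}(n+m)$.
\item Computing the four statistics $[\mu,\mathrm{Var},\min,\max]$ over the $n$ rows is a single pass per feature, so $\mathcal{O}(n f_N)=\mathcal{O}(n)$. Variance can be computed in one pass or in two; either way the cost is linear.
\item $\mathrm{MLP}_s$ acts on a fixed-length vector $d^t$, so it costs $\mathcal{O}(1)$.
\item $\mathrm{MLP}_x$ is applied independently to each of the $m+1$ task rows, including the \textsc{no-op} row. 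This is the independence already used in Lemma~\ref{lem:stats}, and it costs $\mathcal{O}(m)$.
\item The additive scores $u_x^t=\mathbf{v}^\top\tanh(\mathbf{W}_K k_x^t+\mathbf{W}_q q^t)$ cost $\mathcal{O}(1)$ each. The term $\mathbf{W}_q q^t$ can be computed once. So the scores total $\mathcal{O}(m)$.
\item The softmax and sampling over $m+1$ entries cost $\mathcal{O}(m)$.
\end{itemize}
One agent's forward pass is therefore $\mathcal{O}(n+m)$. Summing over all $n$ agents gives $\mathcal{O}(n^2+nm)$ for the actor. The $n^2$ term comes from each agent summarizing the whole team.

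\emph{Critic, one forward pass per timestep.}
\begin{itemize}
\item The graph $\mathcal{G}^t$ has $n+m$ nodes. Its edge set $\mathcal{E}^t$ is the complete bipartite set of agent--task pairs, so $|\mathcal{E}^t|=nm$.
\item Building the graph costs $\mathcal{O}(n+m+nm)$.
\item Node encoding by $\mathrm{MLP}_{\mathcal{N}}$ and $\mathrm{MLP}_{\mathcal{X}}$ costs $\mathcal{O}(n+m)$.
\item We implement each GCN layer sparsely. The per-node linear transform costs $\mathcal{O}(n+m)$. The normalized neighborhood aggregation costs $\mathcal{O}(|\mathcal{E}^t|)$, since each edge carries one $f_h$-dimensional message in each direction. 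Computing the degrees for the symmetric normalization also costs $\mathcal{O}(|\mathcal{E}^t|+n+m)$. Each layer is thus $\mathcal{O}(n+m+nm)$, and the $L=\mathcal{O}(1)$ layers keep that bound.
\item The skip connection and mean pooling over $n+m$ nodes cost $\mathcal{O}(n+m)$.
\item The value head $\mathrm{MLP}_v$ costs $\mathcal{O}(1)$.
\end{itemize}
The critic totals $\mathcal{O}(n+m+nm)$. If the critic is instead evaluated once per agent, as happens with per-agent values in MAPPO, its cost becomes $\mathcal{O}(n(n+m+nm))$. That would break the stated bound, so we will state explicitly that $V_\phi(\text{State}^t)$ is computed once per timestep, matching line~5 of Algorithm~\ref{alg:plato}.

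\emph{Combining.} Adding the two parts gives $\mathcal{O}(n^2+nm+n+m)$. The main obstacle is absorbing the stray linear terms $n+m$, and in particular $m$, because $nm$ does not dominate $m$ when $n=0$. We handle this with the convention that $n\ge 1$ whenever a training step occurs, since otherwise there are no actions to train on. Then $n\le n^2$ and $m\le nm$, and the total collapses to $\mathcal{O}(n^2+nm)$.

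A secondary point to justify is the dense versus sparse GCN implementation. A dense adjacency multiplication would cost $\mathcal{O}((n+m)^2)$ and introduce an $m^2$ term. We will therefore state the sparse edge-list message passing assumption explicitly, noting that the bipartite graph has no task--task edges, so no $m^2$ work is needed.
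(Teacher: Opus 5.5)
Your proposal is correct and follows the paper's own argument: an $\mathcal{O}(|N^t|+|X^t|)$ per-agent actor pass summed over the $|N^t|$ agents sharing the policy, plus a single critic pass dominated by message passing over the $|N^t||X^t|$ agent--task edges. Your extra care (assuming $|N^t|\ge 1$ to absorb the leftover linear terms, and stating the sparse-aggregation and once-per-timestep-critic assumptions explicitly) covers points the paper leaves implicit; for instance, its claim that $\mathcal{O}(|N^t||X^t|)$ dominates the $\mathcal{O}(|N^t|+|X^t|)$ node projections fails when $|X^t|=0$, although the actor's $|N^t|^2$ term still rescues the final bound.
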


\section{Experiments}\label{sec:experiments}
We evaluate PLATO against three smart baselines and four naive heuristic-based baselines across TO and AO configurations on $2\times 3$, $3\times 3$, and $4\times 4$ grids in our experiments. 
Note that we distinguish \textit{native} evaluation (training and testing on the same grid) from \textit{zero-shot} evaluation (deploying a trained policy on an unseen grid without retraining); see Appendix~\ref{app:Setup_figures} and~\ref{app:zs} for setup illustrations and zero-shot details. Additional native and zero-shot results are in Appendix~\ref{Detailed_Results}.




\textbf{Domain and Setups.}\label{sec:to_ao_settings}
We use the \textsc{Wildfire} domain~\citep{patino2025inaugural} (Appendix~\ref{wildfire}): a $r\times c$ grid where firefighter agents at fixed positions suppress fires under finite suppressant. \emph{Endogenous TO} arises from fire \emph{spread} (existing fires probabilistically ignite neighbors); \emph{exogenous TO} from \emph{random ignitions}; and \emph{AO} from stochastic suppressant depletion triggering an exit-and-refill cycle. At each step, agents may suppress an in-range fire or select \textsc{no-op}; extinguishing a fire of size $s$ yields $+2^s$ while a burnout costs $-2^s$. We define four setups (S0--S3) over a fixed 100-step horizon (Table~\ref{tab:setup_summary_app}, Appendix~\ref{app:domain_setups}): S0 is a \emph{closed} baseline (no spread, no ignitions, unlimited suppressant); S1 adds \emph{endogenous TO} (spread prob.\ $0.6$, refill $1.0$); S2 introduces \emph{exogenous TO} (ignition prob.\ $0.6$, refill $0.8$); S3 combines \emph{both with refill} $0.6$, the most demanding joint AO+TO condition. Openness event counts and grid configuration details are in Appendix~\ref{app:A1} and~\ref{app:configuration_tables}.


%

\textbf{Models and Baselines.}\label{sec:models_metrics} Our method is \textbf{PLATO}, as defined in Algorithm~\ref{alg:plato}.
The three smart baselines are: \textbf{DGN}~\citep{dgn} and \textbf{DICG}~\citep{dicg} represent state-of-the-art MARL methods for dynamic coordination but assume a fixed set of agents and tasks during training; and \textbf{MOHITO}~\citep{anilmohito} is the closest prior method explicitly designed for TO.  We also use four naive baselines to contextualize gains from coordination versus simple rule-based behavior: \textbf{NOOP} always selects no-action, providing a lower bound; \textbf{Random} samples uniformly from available actions; \textbf{Strongest} prioritizes the highest-intensity fires, representing an urgency-driven heuristic; and \textbf{Weakest} prioritizes the lowest-intensity fires, reflecting an immediacy-driven strategy.

\textbf{Metrics.}
We evaluate policy performance using \textbf{episode return (average cumulative reward per episode)} and policy efficiency using \textbf{reward per fight action}.
 Charts report mean$\pm$std across 150 evaluation episodes (50 seeds per checkpoint, 3 checkpoints per method). We determine significance with a Shapiro-Wilk-adaptive two-sided test~\citep{wilcoxon1945individual,shapiro1965analysis} (two-sided paired $t$-test or Wilcoxon signed-rank, selected by Shapiro-Wilk normality pre-test at the same Bonferroni-corrected $\alpha$; see Appendix~\ref{sec:stat_testing}), with Bonferroni correction~\citep{hochberg1987multiple} ($\alpha=0.05/\text{methods}$). We report the best results for each learned policy from Optuna hyperparameter tuning \citep{akiba2019optuna}. Details on tuning, seeding, and compute resources are in Appendix~\ref{app:reprod} and~\ref{sec:convergence_appendix} (Table~\ref{tab:bonferroni_families}).


\section{Results}\label{sub:Results}


\par{\textbf{Performance}.}
\label{par:performance}
Table~\ref{tab:native_reward} shows PLATO achieves the highest return in S2 and S3 on the $3\times 3$ grid with statistical significance, while performing as well as top baselines in S1. On $4\times 4$, PLATO outperforms all baselines in S1--S3 with statistical significance. In S0 (no openness), all smart methods match. Results on $2\times 3$ are in Appendix~\ref{Detailed_Results}.

\begin{table*}[!t]
\centering
\setlength{\tabcolsep}{1.4pt}
\renewcommand{\arraystretch}{1.02}
\resizebox{\linewidth}{!}{%
\begin{tabular}{@{\hspace{2pt}}l|r|r|r|r||r|r|r|r@{\hspace{2pt}}}
\toprule
 & \multicolumn{4}{c||}{trained and tested on $3\times3$} & \multicolumn{4}{c}{trained and tested on $4\times4$} \\
\cmidrule(lr){2-5}\cmidrule(lr){6-9}
Model & \multicolumn{1}{c|}{S0} & \multicolumn{1}{c|}{S1} & \multicolumn{1}{c|}{S2} & \multicolumn{1}{c||}{S3} & \multicolumn{1}{c|}{S0} & \multicolumn{1}{c|}{S1} & \multicolumn{1}{c|}{S2} & \multicolumn{1}{c}{S3} \\
\midrule
PLATO    & 8.00$\pm$0.00 & 7.52$\pm$1.46  & \textbf{112.83$\pm$12.62} & \textbf{80.53$\pm$19.66} & 12.00$\pm$0.00 & \textbf{19.47$\pm$9.33} & \textbf{228.81$\pm$21.99} & \textbf{149.35$\pm$27.33} \\
DICG     & 8.00$\pm$0.00 & 5.97$\pm$2.53  & 59.75$\pm$13.69           & 40.83$\pm$15.90          & 12.00$\pm$0.00 & 12.11$\pm$3.95          & 128.63$\pm$21.97          & 74.39$\pm$23.54           \\
DGN      & 7.63$\pm$1.17 & 5.52$\pm$2.81  & 44.59$\pm$15.02           & 31.56$\pm$16.62          & 11.79$\pm$0.90 & 10.83$\pm$4.83          & 82.99$\pm$20.57           & 64.16$\pm$19.27           \\
MOHITO   & 8.00$\pm$0.00 & 3.04$\pm$3.64  & 42.36$\pm$19.89           & 47.24$\pm$18.24          & 12.00$\pm$0.00 & 6.11$\pm$6.36           & 41.97$\pm$28.71           & 33.91$\pm$19.85           \\
\midrule
NOOP     & $-$8.00$\pm$0.00 & $-$8.00$\pm$0.00 & $-$12.00$\pm$0.00 & $-$12.00$\pm$0.00 & $-$12.00$\pm$0.00 & $-$12.00$\pm$0.00 & $-$16.00$\pm$0.00 & $-$18.00$\pm$0.00 \\
Random   & 6.00$\pm$2.46  & 4.48$\pm$3.49  & 11.48$\pm$16.06  & 0.88$\pm$12.96   & 9.20$\pm$3.16  & 3.20$\pm$4.44  & 49.76$\pm$21.11  & 35.52$\pm$22.49 \\
Weakest  & 8.00$\pm$0.00  & 7.76$\pm$0.96  & 70.36$\pm$22.66  & 37.44$\pm$1.72   & 12.00$\pm$0.00 & 10.36$\pm$3.12 & 137.36$\pm$20.15 & 49.40$\pm$0.93  \\
Strongest& 8.00$\pm$0.00  & 7.28$\pm$1.75  & 24.24$\pm$29.13  & 13.92$\pm$20.38  & 12.00$\pm$0.00 & 7.68$\pm$4.75  & 161.24$\pm$12.37 & 80.64$\pm$11.78 \\
\bottomrule
\end{tabular}%
}
\caption{Performance: Episode return (mean $\pm$ std). 
Bold indicates statistically significant best performance within each setup (Shapiro-Wilk-adaptive two-sided test; Bonferroni-corrected, $p{<}0.00625$).}
\label{tab:native_reward}
\end{table*}

\par{\textbf{Efficiency}.}
\label{par:efficiency}
Table~\ref{tab:native_rpf} shows PLATO attains the highest reward per fight on the $3\times 3$ grid in S0, S2, and S3 with statistical significance; S1 is led by \textbf{Weakest}. On $4\times 4$, PLATO leads S2--S3, MOHITO leads S0, and DICG leads S1, all with statistical significance. PLATO remains positive across all grids and setups. Results on $2\times 3$ are in Appendix~\ref{Detailed_Results}.

\begin{table*}[!t]
\centering
\setlength{\tabcolsep}{1.4pt}
\renewcommand{\arraystretch}{1.02}
\resizebox{\linewidth}{!}{%
\begin{tabular}{@{\hspace{2pt}}l|r|r|r|r||r|r|r|r@{\hspace{2pt}}}
\toprule
 & \multicolumn{4}{c||}{trained and tested on $3\times3$} & \multicolumn{4}{c}{trained and tested on $4\times4$} \\
\cmidrule(lr){2-5}\cmidrule(lr){6-9}
Model & \multicolumn{1}{c|}{S0} & \multicolumn{1}{c|}{S1} & \multicolumn{1}{c|}{S2} & \multicolumn{1}{c||}{S3} & \multicolumn{1}{c|}{S0} & \multicolumn{1}{c|}{S1} & \multicolumn{1}{c|}{S2} & \multicolumn{1}{c}{S3} \\
\midrule
PLATO    & \textbf{1.25$\pm$0.23} & 0.60$\pm$0.38          & \textbf{0.74$\pm$0.08} & \textbf{0.64$\pm$0.10} & 1.09$\pm$0.25          & 0.41$\pm$0.21          & \textbf{0.73$\pm$0.06} & \textbf{0.66$\pm$0.07} \\
DICG     & 0.73$\pm$0.39          & 0.35$\pm$0.21          & 0.45$\pm$0.09          & 0.37$\pm$0.16          & 0.71$\pm$0.35          & \textbf{0.56$\pm$0.22} & 0.51$\pm$0.07          & 0.41$\pm$0.11          \\
DGN      & 0.61$\pm$0.41          & 0.48$\pm$0.46          & 0.40$\pm$0.11          & 0.31$\pm$0.19          & 0.44$\pm$0.34          & 0.42$\pm$0.23          & 0.43$\pm$0.08          & 0.39$\pm$0.13          \\
MOHITO   & 0.97$\pm$0.12          & 0.25$\pm$0.39          & 0.23$\pm$0.59          & 0.39$\pm$0.16          & \textbf{1.21$\pm$0.10} & 0.27$\pm$0.28          & 0.25$\pm$0.33          & 0.24$\pm$0.33          \\
\midrule
NOOP     & $-$8.00$\pm$0.00 & $-$8.00$\pm$0.00 & $-$12.00$\pm$0.00 & $-$12.00$\pm$0.00 & $-$12.00$\pm$0.00 & $-$12.00$\pm$0.00 & $-$16.00$\pm$0.00 & $-$18.00$\pm$0.00 \\
Random   & 0.73$\pm$0.36          & 0.38$\pm$0.39          & 0.03$\pm$0.49          & $-$0.22$\pm$0.47       & 0.81$\pm$0.38          & 0.12$\pm$0.19          & 0.36$\pm$0.22          & 0.29$\pm$0.20          \\
Weakest  & 1.03$\pm$0.13          & \textbf{0.87$\pm$0.44} & 0.48$\pm$0.07          & 0.35$\pm$0.02          & 1.07$\pm$0.32          & 0.35$\pm$0.17          & 0.54$\pm$0.04          & 0.42$\pm$0.01          \\
Strongest& 1.03$\pm$0.13          & 0.71$\pm$0.57          & $-$0.10$\pm$0.62       & $-$0.16$\pm$0.57       & 1.07$\pm$0.32          & 0.18$\pm$0.12          & 0.53$\pm$0.04          & 0.41$\pm$0.03          \\
\bottomrule
\end{tabular}%
}
\caption{Efficiency: Reward per fight (mean $\pm$ std). 
Bold indicates statistically significant best performance within each setup (Shapiro-Wilk-adaptive two-sided test; Bonferroni-corrected, $p{<}0.00625$).}
\label{tab:native_rpf}
\end{table*}

\par{\textbf{Generalizability}.}
\label{par:generalizability}
PLATO’s actor and critic are proved well-defined over any unbounded state space (Appendix~\ref{app:well-defined}). Here we empirically evaluate zero-shot generalization by deploying learned policies on a different unseen grid without retraining, where the same team face different fires and novel positions (see Appendix~\ref{app:zs}). Table~\ref{tab:zs_reward} reports the zero-shot episode returns for both training sizes. When trained on $2\times 3$, PLATO leads S1--S2 on $3\times 3$ and S2 on $4\times 4$ with statistical significance; DICG leads S1 and S3 at $4\times 4$. MOHITO collapses to negative returns in S2 at $4\times 4$. When trained on $3\times 3$, PLATO leads S1--S3 on the $5\times 5$ grids with statistical significance; DICG leads S1--S2 and MOHITO leads S0 at $4\times 4$. DGN degrades consistently under openness. Additional zero-shot results are in Appendix~\ref{sec:zeroshot_appendix}.

\begin{table*}[!t]
\centering
\setlength{\tabcolsep}{1.4pt}
\renewcommand{\arraystretch}{1.02}
\resizebox{\linewidth}{!}{%
\begin{tabular}{@{\hspace{2pt}}l|r|r|r|r||r|r|r|r@{\hspace{2pt}}}
\toprule
 & \multicolumn{4}{c||}{trained on $2\times3$ and tested (zero-shot) on $3\times3$} & \multicolumn{4}{c}{trained on $2\times3$ and tested (zero-shot) on $4\times4$} \\
\cmidrule(lr){2-5}\cmidrule(lr){6-9}
Model & \multicolumn{1}{c|}{S0} & \multicolumn{1}{c|}{S1} & \multicolumn{1}{c|}{S2} & \multicolumn{1}{c||}{S3} & \multicolumn{1}{c|}{S0} & \multicolumn{1}{c|}{S1} & \multicolumn{1}{c|}{S2} & \multicolumn{1}{c}{S3} \\
\midrule
PLATO  & 9.97$\pm$0.33  & \textbf{6.83$\pm$2.87} & \textbf{80.75$\pm$12.65} & 4.95$\pm$28.12    & 11.73$\pm$2.51 & 5.31$\pm$9.89           & \textbf{64.11$\pm$15.58} & 13.68$\pm$36.62 \\
DICG   & 10.00$\pm$0.00 & 3.44$\pm$3.15          & 60.63$\pm$15.37          & 25.31$\pm$18.21   & 11.27$\pm$1.97 & \textbf{8.17$\pm$10.17} & 59.04$\pm$16.85          & \textbf{48.69$\pm$18.48} \\
DGN    & 7.07$\pm$2.77  & 2.91$\pm$2.46          & 51.84$\pm$12.29          & 27.16$\pm$18.51   & 4.11$\pm$3.64  & 4.23$\pm$8.86           & 39.61$\pm$25.76          & 44.88$\pm$21.18 \\
MOHITO & 3.87$\pm$2.88  & 0.96$\pm$3.12          & 24.39$\pm$26.92          & $-$4.63$\pm$13.48 & 3.41$\pm$2.08  & $-$1.65$\pm$4.33        & $-$13.17$\pm$9.09        & 23.24$\pm$22.84 \\
\specialrule{1.5pt}{0pt}{0pt}
 & \multicolumn{4}{c||}{trained on $3\times3$ and tested (zero-shot) on $4\times4$} & \multicolumn{4}{c}{trained on $3\times3$ and tested (zero-shot) on $5\times5$} \\
\cmidrule(lr){2-5}\cmidrule(lr){6-9}
Model & \multicolumn{1}{c|}{S0} & \multicolumn{1}{c|}{S1} & \multicolumn{1}{c|}{S2} & \multicolumn{1}{c||}{S3} & \multicolumn{1}{c|}{S0} & \multicolumn{1}{c|}{S1} & \multicolumn{1}{c|}{S2} & \multicolumn{1}{c}{S3} \\
\midrule
PLATO  & 9.65$\pm$2.70           & 5.23$\pm$9.58           & 49.17$\pm$25.38          & 57.36$\pm$14.29 & 7.76$\pm$2.98 & \textbf{2.48$\pm$6.17} & \textbf{80.79$\pm$21.55} & \textbf{50.40$\pm$26.18} \\
DICG   & 10.00$\pm$1.47          & \textbf{8.13$\pm$8.42} & \textbf{61.88$\pm$13.04} & 51.36$\pm$16.28 & 9.23$\pm$1.85 & 0.07$\pm$4.47          & 41.51$\pm$18.04          & 21.51$\pm$19.15          \\
DGN    & 6.80$\pm$3.88           & $-$0.57$\pm$5.45        & 47.85$\pm$19.51          & 35.93$\pm$26.71 & 6.69$\pm$3.78 & $-$2.80$\pm$4.19       & 28.29$\pm$14.36          & 14.21$\pm$18.18          \\
MOHITO & \textbf{11.20$\pm$1.84} & $-$1.72$\pm$6.56        & 34.68$\pm$30.94          & 55.77$\pm$15.25 & 9.52$\pm$1.95 & $-$0.43$\pm$4.14       & 40.79$\pm$22.12          & 25.36$\pm$21.85          \\
\bottomrule
\end{tabular}%
}
\caption{Generalizability: Episode return (mean $\pm$ std), zero-shot transfer. Bold indicates statistically significant best performance within each setup (Shapiro-Wilk-adaptive two-sided test; Bonferroni-corrected, $p{<}0.0125$).}
\label{tab:zs_reward}
\end{table*}

\textbf{Ablation.} For our ablation studies, we test four variants from a factorial design crossing two encoder choices (MLP, LSTM) with two scorer choices (additive, dot-product):  \textbf{(mlp+add)}, \textbf{(mlp+dot)}, \textbf{(lstm+add)}, and \textbf{(lstm+dot)}. Table~\ref{tab:ablation_reward} reports the native and zero-shot episode-return results. Additional metrics are in Appendix~\ref{sec:ablation_appendix}. \emph{Q1: Does temporal memory in the encoder improve performance?}
We compare \textbf{(mlp+add) vs.\ (lstm+add)} and \textbf{(mlp+dot) vs.\ (lstm+dot)} to isolate the encoder across both scoring functions. In native evaluation, there are no statistically significant differences between LSTM and MLP variants under either scorer. In zero-shot transfer, the encoder's impact depends on the scorer and setup. With additive scoring, (lstm+add) achieves statistically significantly higher returns than (mlp+add) in S3 across both target grids; 
(lstm+dot) is also numerically higher than (mlp+dot) in S3 
but without statistical significance. 
Together, these results show that the benefit of temporal memory (LSTM) is not universal: it is most pronounced under combined AO+TO (S3) in zero-shot settings when paired with additive scoring.  \emph{Q2: How does additive attention scoring perform with respect to dot-product scoring?} 
We compare \textbf{(mlp+add) vs.\ (mlp+dot)} and \textbf{(lstm+add) vs.\ (lstm+dot)}. In native evaluation, no scorer achieves statistically significant advantage under either encoder. In zero-shot transfer, the scorer's advantage also depends on the encoder and setup. With the LSTM encoder, (lstm+add) achieves statistically significantly higher returns than (lstm+dot) in S3 across both target grids 
— additive scoring wins in the most open zero-shot setting. With the MLP encoder, (mlp+dot) achieves statistically significantly higher returns than (mlp+add) in S0 at $4\times4$ 
— dot-product scoring wins in the simplest zero-shot setting. Taken together with Q1, these results point to a consistent interaction: (lstm+add) is the best combination under combined AO+TO (S3), while (mlp+dot) is the best combination under closed environment (S0) in zero-shot transfer.

\begin{table*}[!t]
\centering
\setlength{\tabcolsep}{1.4pt}
\renewcommand{\arraystretch}{1.02}
\resizebox{\linewidth}{!}{%
\begin{tabular}{@{\hspace{2pt}}l|r|r|r|r||r|r|r|r@{\hspace{2pt}}}
\toprule
 & \multicolumn{4}{c||}{trained and tested on $3\times3$} & \multicolumn{4}{c}{trained and tested on $4\times4$} \\
\cmidrule(lr){2-5}\cmidrule(lr){6-9}
\shortstack[l]{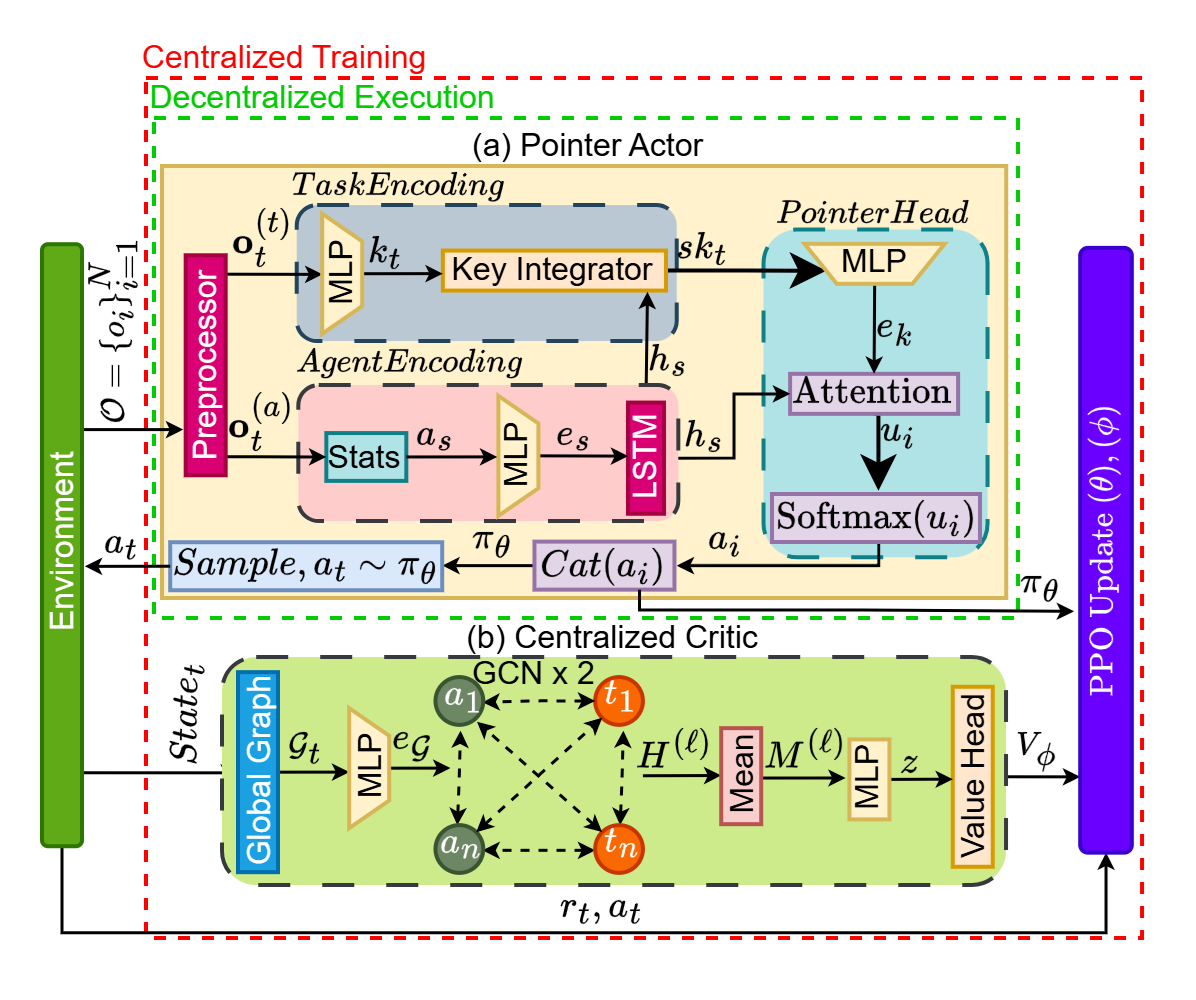} & \multicolumn{1}{c|}{S0} & \multicolumn{1}{c|}{S1} & \multicolumn{1}{c|}{S2} & \multicolumn{1}{c||}{S3} & \multicolumn{1}{c|}{S0} & \multicolumn{1}{c|}{S1} & \multicolumn{1}{c|}{S2} & \multicolumn{1}{c}{S3} \\
\midrule
(mlp+add)      & 8.00$\pm$0.00 & 7.52$\pm$1.46 & 112.83$\pm$12.62 & 80.53$\pm$19.66 & 12.00$\pm$0.00 & 19.47$\pm$9.33 & 228.81$\pm$21.99 & 149.35$\pm$27.33 \\
(mlp+dot)      & 8.00$\pm$0.00 & 7.28$\pm$1.54 & 109.65$\pm$13.60 & 84.37$\pm$20.79 & 12.00$\pm$0.00 & 19.76$\pm$9.51 & 221.04$\pm$24.03 & 146.63$\pm$25.99 \\
(lstm+add)     & 8.00$\pm$0.00 & 7.47$\pm$1.36 & 108.56$\pm$14.33 & 82.12$\pm$22.23 & 12.00$\pm$0.00 & 16.12$\pm$6.75 & 227.59$\pm$24.53 & 151.33$\pm$24.35 \\
(lstm+dot)     & 8.00$\pm$0.00 & 7.20$\pm$1.61 & 113.67$\pm$15.30 & 88.16$\pm$17.87 & 12.00$\pm$0.00 & 19.49$\pm$8.57 & 218.79$\pm$23.32 & 148.31$\pm$26.50 \\
\specialrule{1.5pt}{0pt}{0pt}
 & \multicolumn{4}{c||}{trained on $2\times3$ and tested (zero-shot) on $3\times3$} & \multicolumn{4}{c}{trained on $2\times3$ and tested (zero-shot) on $4\times4$} \\
\cmidrule(lr){2-5}\cmidrule(lr){6-9}
\shortstack[l]{PLATO} & \multicolumn{1}{c|}{S0} & \multicolumn{1}{c|}{S1} & \multicolumn{1}{c|}{S2} & \multicolumn{1}{c||}{S3} & \multicolumn{1}{c|}{S0} & \multicolumn{1}{c|}{S1} & \multicolumn{1}{c|}{S2} & \multicolumn{1}{c}{S3} \\
\midrule
(mlp+add)      & 9.97$\pm$0.33  & 6.83$\pm$2.87          & 80.75$\pm$12.65 & 4.95$\pm$28.12           & 11.73$\pm$2.51          & 5.31$\pm$9.89  & 64.11$\pm$15.58 & 13.68$\pm$36.62 \\
(mlp+dot)      & 10.00$\pm$0.00 & 6.25$\pm$3.01          & 78.19$\pm$12.02 & 4.52$\pm$27.48           & \textbf{12.96$\pm$1.93} & 4.09$\pm$8.57  & 63.17$\pm$16.43 & 17.89$\pm$36.82 \\
(lstm+add)     & 9.84$\pm$0.91  & 6.52$\pm$2.64          & 71.79$\pm$22.24 & \textbf{18.49$\pm$27.48} & 11.09$\pm$2.78          & 7.37$\pm$9.37  & 66.88$\pm$11.67 & \textbf{37.97$\pm$32.54} \\
(lstm+dot)     & 9.89$\pm$0.80  & 6.31$\pm$3.39          & 82.87$\pm$16.74 & 6.19$\pm$28.53           & 10.32$\pm$2.95          & 4.56$\pm$6.30  & 65.76$\pm$18.18 & 25.56$\pm$36.38 \\
\bottomrule
\end{tabular}%
}
\caption{Ablation: Episode return (mean $\pm$ std). \textit{Top:} native evaluation; no variant achieves statistically significant best performance. \textit{Bottom:} zero-shot transfer; bold indicates statistically significant best performance. Shapiro-Wilk-adaptive two-sided test; Bonferroni-corrected, $p{<}0.0125$.}
\label{tab:ablation_reward}
\end{table*}

\section{Related Work}\label{sec:related}

Graph Neural Networks (GNNs) \citep{gnn} model multi-agent interactions flexibly by representing agents as graph nodes and using message-passing to aggregate neighbor information, naturally handling variable group sizes but often limiting openness by imposing bounded assumptions. Several dynamic multi-agent approaches partially address related challenges. DGN~\citep{dgn} uses graph convolution with fixed-size neighbor communication for cooperation and can generalize to larger teams but restricts interaction through a predefined neighborhood and assumes static goals. DICG~\citep{dicg} learns dynamic coordination graphs for multi-agent coordination, handling large fixed teams well but requires known agent and task sets in advance, limiting true openness. GPL~\citep{gpl} tackles AO in ad hoc teamwork with graph neural nets and type inference for variable team sizes but trains only a single agent while others have fixed policies, thus not a fully adaptive multi-agent training approach. In contrast, MOHITO~\citep{anilmohito} is specifically designed for openness, formalizing task-open Markov games and employing hypergraph-based actor–critics to handle dynamic task–action sets, though it focuses solely on TO and assumes fixed agents and frames. Our investigations show that PLATO achieves strong performance under combined openness (S2--S3) and more consistent zero-shot generalization than baselines, with advantages that widen at larger grid scales.
Also, while some may argue that transformers \citep{vaswani2017attention} naturally handle unbounded input sizes, imposing bounds is necessary to maintain training efficiency and scalability. That is, though transformers theoretically support variable-length inputs by attending over all tokens---making them attractive to OASYS, to enable efficient parallelization, \emph{inputs are typically padded to fixed lengths}.  This practice limits transformer-based applications in OASYS to a maximum number of agents or tasks. PLATO, on the other hand, circumvents this limitation via the attention mechanism and team-based statistics to handle variability in agents and tasks.  

Related fields such as multi-task reinforcement learning~\citep{zhang2021survey} and curriculum learning~\citep{narvekar2020curriculum} focus on adapting to varying tasks or progressively harder scenarios but typically assume a fixed number of entities and do not address AO and TO.  Also, they may suffer from catastrophic forgetting when learning new tasks~\citep{ribeiro2019multitask}, high prediction errors in volatile environments, and do not effectively handle openness due to their reliance on predefined task spaces or curriculum structures~\citep{kravchenko2024limitations, abadi2025challenges}. 

\section{Conclusion}\label{sec:Conclusion}
Open agent systems require policies that remain well-defined as both the active agent set and the task set change online. We introduced \textbf{PLATO}, an approach that addresses this challenge by combining (i) a pointer-network actor that outputs a distribution directly over the \emph{current} task set, conditioned on a permutation-invariant summary of the current team, and (ii) a centralized GNN critic that evaluates open global states via a bipartite agent--task graph whose structure changes with system composition. PLATO's design avoids fixed action indexing, padding-based bounds, and retraining to handle openness. We formalize the setting as a TaAgO-MG and prove that PLATO is well-defined and permutation-invariant over the resulting unbounded state and action spaces (Appendix~\ref{app:well-defined}).

Across wildfire configurations with different types of openness (i.e., endogenous and exogenous TO and temporary AO), PLATO achieves strong performance on seen grids and more consistent zero-shot generalization to unseen grid sizes across the evaluated settings. Our ablation studies reveal an interaction between encoder and scorer: (lstm+add) achieves the strongest zero-shot generalization under combined AO+TO settings, while (mlp+dot) leads in simpler zero-shot conditions, with neither factor being universally dominant.

We plan to (1) extend TaAgO-MG and the pointer interface to support tasks with multiple actions (i.e., per-task action sets) to address the one-action-per-task limitation of our current TaAgO-MG; (2) investigate the interactions between temporal memory and additive attention scoring more deeply, as the ablation results suggest that recurrent history may be particularly valuable in out-of-distribution settings; (3) relax the full observability of tasks and agents assumption to study partially observable variants with latent-state or belief-based extensions; and (4) consider additional domains outlined by \cite{aimag_eck} and frame openness to further evaluate and improve PLATO.


\begin{ack}

This research was supported, in part, by a collaborative NSF Grant \#IIS-2312657 (to Doshi), \#IIS-2312658 (to Soh), and \#IIS-2312659 (to Eck). Some of the computing occurred on the Holland Computing Center of the University of Nebraska, which receives support from the university’s Office of Research and Economic Development and the Nebraska Research Initiative.

\end{ack}

\bibliographystyle{plainnat}
\bibliography{references}

\appendix

\section{Proofs and Complexity}

\subsection{Lemma 1: Permutation Invariance of Team Statistics}\label{app:stats-perm}

Here we show that the set of statistics we use to approximate the state of present agents is permutation invariant. 

\setcounter{lem}{0}
\begin{lem}[Encoder Permutation Invariance]
    \label{lem:stats_appendix}
    Each team statistic is permutation invariant, $MLP_s$ operates on the aggregate, and $MLP_x$ operates on task observations independently. Therefore, the agent and task encodings are permutation invariant.
\end{lem}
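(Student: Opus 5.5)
We will prove the two claims separately: first for the agent-side query $q^t$, then for the task-side keys $K^t$. Throughout we fix a permutation $\sigma$ of the rows of the relevant feature matrix and write $P_\sigma$ for the corresponding permutation matrix. We must say precisely what ``permutation invariant'' means in each case. For the query it means $q^t(P_\sigma o_N^t)=q^t(o_N^t)$. For the keys it means the multiset $K^t$ is unchanged, equivalently the map $o_X^t\mapsto(\mathrm{MLP}_x(o_x))_x$ is permutation \emph{equivariant}, so the keyed collection is invariant as a set. We will state this distinction explicitly, since the lemma's wording ``invariant'' is literally true only for the set/multiset of keys, not for the ordered stack of key vectors.

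\textbf{Agent encoding.} Write $o_N^t$ as rows $z_1,\dots,z_n\in\mathbb{R}^{f_N}$ with $n=|N^t|$. We check each statistic coordinatewise.
\begin{itemize}
\item The mean $\mu=\frac1n\sum_j z_j$ is invariant because finite addition is commutative and associative, and $n$ is unchanged by $\sigma$.
\item The variance $\mathrm{Var}=\frac1n\sum_j (z_j-\mu)^{\odot 2}$ is a sum of terms $g(z_j)$, where $g$ depends only on $z_j$ and the already-invariant $\mu$, so it is invariant for the same reason.
\item For $\min$ and $\max$, taken coordinatewise, we note that $\min_j z_{j,k}=\min_j z_{\sigma(j),k}$ since both range over the same finite multiset of reals.
\end{itemize}
Hence $d^t(P_\sigma o_N^t)=d^t(o_N^t)$. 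Since $q^t=\mathrm{MLP}_s(d^t)$ is a composition of a fixed function with an invariant one, $q^t$ is invariant. This holds for every $n\ge1$. We will remark that $n=0$ would need a convention, for example a fixed default vector; the lemma implicitly assumes $N^t\neq\emptyset$.

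\textbf{Task encoding.} Each key $k_x=\mathrm{MLP}_x(o_x)$ depends only on the row $o_x$, with shared weights. Permuting the rows of $o_X^t$ therefore permutes the keys by the same $\sigma$: $\mathrm{MLP}_x$ applied rowwise commutes with $P_\sigma$. The \textsc{no-op} key depends on no row and is fixed. Thus the multiset $K^t$ is invariant.

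To connect this with the rest of the actor, we will observe that the score is $u_x=\mathbf{v}^\top\tanh(\mathbf{W}_Kk_x+\mathbf{W}_qq^t)$. Each $u_x$ is a function of $(k_x,q^t)$ alone, and softmax is permutation equivariant. So the probability assigned to each \emph{task identity} $x$ does not depend on the order in which tasks or agents are listed. This is the sense in which the lemma is used in Theorem~\ref{theorem:well-defined}.

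The only real obstacle is definitional rather than technical: the task-side statement has to be read as equivariance of the per-task map (invariance of the keyed set), not literal invariance of an ordered output. We will handle this by phrasing the conclusion in terms of the function $x\mapsto k_x$ on the task set. The remaining steps are routine.
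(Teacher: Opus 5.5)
Your proof is correct and follows the paper's approach: coordinatewise invariance of the mean, the variance (using the already-invariant mean), and the coordinatewise min and max under a bijection of agent indices, giving $d^t(P_\sigma o_N^t)=d^t(o_N^t)$. You go slightly beyond the paper's written proof, which stops at $d^t$, by explicitly composing with $\mathrm{MLP}_s$ and by treating the task keys as row-wise equivariant (invariant only as a multiset), a distinction the paper makes only later, in Proof (3) of Theorem~\ref{theorem:well-defined_appendix}.
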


\begin{proof}\label{prop:stats-perm}
Let $o^t_N:[o^t_1,...o^t_{n}] \in \mathbb{R}^{n \times f_N}$ be the agent-feature matrix for a team of $n = |N^t|$ agents, and let $\sigma$ be any permutation of $\{1,\ldots,n\}$. Then the team-statistics vector
\[
  d^t = \bigl[\mu(o^t_N),\;\mathrm{Var}(o^t_N),\;\min(o^t_N),\;\max(o^t_N)\bigr]
  \in \mathbb{R}^{4f_N}
\]
is invariant to agent ordering $d^t\!\bigl(\sigma(o^t_N)\bigr) = d^t\!\bigl(o^t_N\bigr)$ if all statistics in $d^t$ are permutation invariant.

\emph{Mean.} Because $\sigma$ is a bijection on $\{1,\ldots,n\}$, the index set $\{\sigma(1),\ldots,\sigma(n)\}$ equals $\{1,\ldots,n\}$, so
\[
  \mu_j\!\bigl(\sigma(o^t_N)\bigr)
  = \frac{1}{n}\sum_{i=1}^n o^t_{\sigma(i),j}
  = \frac{1}{n}\sum_{i=1}^n o^t_{i,j}
  = \mu_j\!\bigl(o^t_N\bigr).
\]
\emph{Variance.} Since $\mu_j$ is invariant (above), the same bijection argument gives
\[
    \mathrm{Var}_j\!\bigl(\tau(o^t_N)\bigr)
  = \frac{1}{n}\sum_{i=1}^n \bigl(o^t_{\sigma(i),j}-\mu_j\bigr)^2
  = \frac{1}{n}\sum_{i=1}^n \bigl(o^t_{i,j}-\mu_j\bigr)^2
  = \mathrm{Var}_j\!\bigl(o^t_N\bigr).
\]
\emph{Min and max.} The minimum (resp.\ maximum) of a finite multiset depends only on the set of values, not their order:
\[
    \min_{i}\,o^t_{\tau(i),j} = \min_{i}\,o^t_{i,j},
  \qquad
    \max_{i}\,o^t_{\tau(i),j} = \max_{i}\,o^t_{i,j}.
\]
All four statistics are feature-wise invariant for every $j$, so $d^t\!\bigl(\tau(o^t_N)\bigr) = d^t\!\bigl(o^t_N\bigr)$. Since this holds for all permutations $\tau$, $d^t$ is permutation-invariant in the agent ordering.
\end{proof}


\subsection{Theorem 1: Well Defined PLATO} \label{app:well-defined}

A precondition to learning a rational policy in openness is producing \emph{consistent actor queries} from finite parameters conditioned on a countably infinite observation space. In other words, the policy must be a well defined function. GNN actors, such as MOHITO \citep{anilmohito}, meet this inherently because graph convolution is permutation invariant to node ordering and produce observation specific action output from its hyperedges. Here we prove that PLATO also fulfills this requirement.

\setcounter{theorem}{0}
\begin{theorem}[PLATO is well defined]
    \label{theorem:well-defined_appendix}
    Because all components of the actor and critic are permutation invariant, as shown in Lemma~\ref{lem:stats_appendix} and in the proof below, all countably infinite possible observations have only one categorical distribution in the actor and one value estimation in the critic.
\end{theorem}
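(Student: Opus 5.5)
The plan is to show that the actor and critic each factor through a representation that depends only on the \emph{multisets} of agent and task feature rows. Each function is then a single-valued map on the space of finite (but unbounded) observations, $\bigsqcup_{n,m\ge 0}\mathbb{R}^{n\times f_N}\times\mathbb{R}^{m\times f_X}$, modulo row reordering. Concretely, we regard an observation as an equivalence class of pairs $(o_N^t,o_X^t)$ under independent row permutations $(\sigma,\rho)$. We then verify two properties. First, every step of Algorithm~\ref{alg:plato} is defined for every $(n,m)$ with parameters whose shapes do not depend on $n$ or $m$. Second, the output is unchanged, or for the actor changed only by the corresponding relabeling of task-actions, under $(\sigma,\rho)$. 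Together these give exactly one categorical distribution and one value per observation. The countable infinitude of the observation space poses no difficulty, since the argument is uniform in $(n,m)$.

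\emph{Actor.} By Lemma~\ref{lem:stats_appendix}, $d^t\in\mathbb{R}^{4f_N}$ is invariant under $\sigma$ and has a fixed dimension. Hence $q^t=\mathrm{MLP}_s(d^t)$ is a well-defined fixed-size vector for every $n\ge1$; the case $n=0$ does not arise, because the acting agent is itself in $N^t$. The keys $k_x^t=\mathrm{MLP}_x(o_x^t)$ are computed row-wise with shared weights, so they are permutation-equivariant under $\rho$, and the \textsc{no-op} key is fixed. Each score $u_x^t=\mathbf{v}^\top\tanh(\mathbf{W}_Kk_x^t+\mathbf{W}_qq^t)$ depends only on the pair $(q^t,k_x^t)$. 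Because the softmax normalizes by a sum, which is order-independent, the probability assigned to the action $\textsf{suppress}(x)$ depends only on $o_x^t$ and on the multiset of all keys. The output is therefore a well-defined distribution on $X^t\cup\{\textsc{no-op}\}$, whose support has size $m+1$ for any $m\ge0$. The pointer is defined over the single unbounded task dimension, and the agent dimension has been collapsed by the statistics.

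\emph{Critic.} The node encoders $\mathrm{MLP}_{\mathcal N}$ and $\mathrm{MLP}_{\mathcal X}$ are applied row-wise, so they are equivariant. Since $\mathcal{E}^t$ is the complete bipartite edge set, a simultaneous permutation $P$ of the nodes maps the adjacency $A$ to $PAP^\top$. A GCN layer $h\mapsto\varphi(\hat D^{-1/2}\hat A\hat D^{-1/2}hW)$ then satisfies $\mathrm{GCN}(Ph,PAP^\top)=P\,\mathrm{GCN}(h,A)$, and its weight $W$ is independent of the node count. Induction over the $L$ layers gives equivariance of $h_L$, and the skip term $h_L+h_0$ is also equivariant. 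Mean pooling over all $n+m$ nodes is invariant and lands in $\mathbb{R}^{f_h}$, after which $\mathrm{MLP}_v$ and the affine head yield a single scalar.

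The step I expect to be most delicate is the critic's pooling when $n+m$ could be $0$, together with the degree normalization when an isolated node appears (e.g.\ $m=0$, so agents have no edges). I would handle this by noting that GCN self-loops make $\hat D$ invertible. I would also either assume $|N^t|\ge1$ or adopt the convention that an empty graph maps to a fixed constant value. A second subtlety is that the actor is only \emph{equivariant} in the task ordering, not invariant, since the output is indexed by tasks. I would phrase well-definedness accordingly: the map from each task $x$ to its probability is independent of ordering. This matches the statement's claim that each observation yields only one categorical distribution.
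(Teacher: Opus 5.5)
Your proposal is correct and follows the same structure as the paper's proof. It establishes totality, meaning every step is defined for all agent and task counts because the team statistics and mean pooling give fixed-length vectors and all parameters have size-independent shapes, and it establishes single-valuedness, meaning the output is unchanged under reordering, via invariant statistics, node-wise encoders, order-free scoring and softmax, and GCN behaviour under node relabeling. Your version is somewhat sharper than the paper's in four places: you treat observations as equivalence classes, you state task-side equivariance rather than invariance, you write the identity $\mathrm{GCN}(Ph,PAP^\top)=P\,\mathrm{GCN}(h,A)$ explicitly, and you flag the empty-graph and isolated-node cases, which the paper's proof does not address.
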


To prove PLATO is a well defined function, we will show that the actor, $\pi$, and critic, $V$, have values for each state in this unbounded space,
$$
\begin{aligned}
    & (1) \forall s \in \mathbf{S}, \exists\Delta A \subseteq \mathbf{A}: (s,\Delta A) \in \pi
    \qquad (2) \forall s \in \mathbf{S} \exists v \in \mathbb{R}: (s,v) \in V,
\end{aligned}
$$
and that they consistently produce only one finite action distribution and $v$ approximation for each state,
$$
\begin{aligned}
    &(3) \forall s \in \mathbf{S}, \quad \forall \Delta A_1, \Delta A_2 \subseteq \mathbf{A}\ s.t.\ \Delta A_1 \neq \Delta  A_2: (s,\Delta A_2) \notin \pi \text{ if } (s,\Delta A_1) \in \pi\\
    &(4) \forall s \in \mathbf{S}, \quad \forall v_1,  v_2 \in \mathbb{R}\ s.t.\ v_1\neq v_2: (s,v_2)\notin  V\ \text{if}\  (s,v_1) \in V.
\end{aligned}
$$



\begin{proof}[Proof (1)]
    \label{proof:definedpi}
    Assume there exists a state, $\hat{s} \in \mathbf{S}$ where $\pi(\hat{s})\neq \Delta A \subseteq \mathbf{A}$. It must then be true that either $\pi(\hat{s})$
 is undefined, in other words $\hat{s} \notin \pi$, or $\pi(\hat{s})$ must have support over an action not in $\mathbf{A}$.
 
    PLATO's actor preprocessor factors agent observations into agent and task features of length $|X^t_i|$ and $|N^t|$ respectively. $X^t_i$ refers to tasks agent $i$ can act on; $X_i^t\subseteq X^t$. We assume this factorization is given by the environment and defined over $\Psi_\Omega$. Regarding agent observations, team statistics produce $d^t \in \mathbb{R}^{4f_N}$, a \emph{fixed}-length vector regardless of $|N^t|$. The subsequent MLP is defined over $\mathbb{R}^{4f_n}$. On the task observations, the task encoding applies individually to tasks and is thus defined over $\mathbb{R}^{f_x}$, and produces keys, $K^t \in \mathbb{R}^{(|X^t|+1)f_h}$. Each score $u_x^t = \mathbf{v}^\top\!\tanh(\mathbf{W}_K k_x^t + \mathbf{W}_q q^t)$ applies the same fixed-parameter function regardless of $|X^t_i|$ or $|N^t|$. Because the only input constraints come from $\Psi_\Omega$, and $\Psi_\Omega$  is defined over all possible task and agent sets, $s\in \pi \iff s \in \mathbf{S}$. It must be the case that $\hat{s} \in \pi$.

There exists one score $u_x^t \forall X_i^t \in M^t$ and one NO-OP score. PLATO assumes that there exists one action per task, $a_X \forall X_i$. There are also additional task-agnostic actions $A_*$, here a NO-OP action.  If the environment's action space is defined as, $\Psi_A(...) = \prod_{i}\{a_x  | x \in X_i, X_i\subseteq X\} + \{a_{NOOP}\}$, then any task score set must only include $a_x \forall X_i$ and NO-OP by definition. It must be the case that $\pi(\hat{s})\subseteq A$. Thus, by contradiction  $\forall s \in \mathbf{S}, \exists\Delta A \subseteq \mathbf{A}: (s,\Delta A) \in \pi$. 
\end{proof}

\begin{proof}[Proof (2)]
    \label{proof:definedq}
I    The global graph is a partition over the state, similar to the preprocessor's output, but encompassing all agents' local observations. If the factored agent, $\mathcal{N}$  and task nodes $\mathcal{X}$, have fixed feature dimensions, $f_\mathcal{N}, f_{\mathcal{X}}$, the critic's MLPs can encode them into a shared latent space, $\mathbb{R}^{f_h}$.  The subsequent layers are GCNs over these nodes which are defined for a fixed $f_h$, global mean pooling which has no spatial requirements, and a decoder MLP, $MLP_v: \mathbb{R}^{f_h}\to \mathbb{R}$. Thus, if tasks and agents can be factored into state feature representations of fixed length regardless of change in the overall task and agent sets, then  $\forall s \in \mathbf{S} \exists v \in \mathbb{R}: (s,v) \in V$.
\end{proof}

\begin{proof}[Proof (3)]
    \label{proof:mappingpi}
        Let $\sigma(o_i^t)$ represent a permutation of task and agent observations given by the environment. The preprocessor is not assumed to be permutation invariant, in that \textbf{only the} order of factors may change by the order of input observations. The agent encoder is permutation invariant because the team statistics are comprised of permutation invariant operations, (mean, variance, minimum, maximum). The task encoder operates on all $o_x^t \in o_X^t$ independently and like the preprocessor persists ordering, but $K^t$'s values are unaffected by the order. The pointer decoder does not use a positional embedding, and more specifically,
$$
\mathbf{v}^\top\!\tanh(\mathbf{W}_K \sigma(k_x^t) + \mathbf{W}_q q^t) = \mathbf{v}^\top\!\tanh(\mathbf{W}_K k_x^t + \mathbf{W}_q q^t) \quad \forall \sigma, x.
    $$
    Therefore,  $\pi$ is permutation invariant, $\pi_\theta(\cdot \mid o_i^t) = \pi_\theta(\cdot \mid \sigma(o_i^t))$.  Because there are no stochastic operations in $\pi$ and we've established permutation invariance,
$$
    \forall s \in \mathbf{S}, \quad \forall \Delta A_1, \Delta A_2 \subseteq \mathbf{A}: (s,\Delta A_1), (s,\Delta A_2) \in \pi.
$$
\end{proof}
\begin{proof}[Proof (4)]
    \label{proof:mappingq}
    Let $\sigma(\mathcal{G})$ represent a reordering of nodes describing the state given by the environment. Critic encoders, $MLP_x$ and $MLP_v$ , encode nodes independently and are thus permutation invariant to node ordering. GCN message passing only produce a different output graph if the input graph is different. By definition graph vertices are a set. Permuting their order does not change the graph. Because none of the remaining operations are stochastic and we have shown the critic is permutation invariant,  $\forall s \in \mathbf{S}, \quad \forall v_1,  v_2 \in \mathbb{R}: (s,v_1), (s,v_2) \in V.$
\end{proof}

\subsection{How the Pointer Policy Handles Task and Agent Openness}\label{app:ao-to-policy}

We characterize how the pointer actor handles each form of openness. We call the mechanism of Lemma~\ref{lem:to-pointer}, which points over tasks to select the action, the \emph{task pointer}, and the mechanism of Lemma~\ref{lem:ao-pointer}, which points over agents to build the query, the \emph{agent pointer}. Lemma~\ref{lem:to-pointer} concerns task openness (TO): the actor's action set is exactly the currently available tasks together with a \textsc{no-op}, and the relative preference between any two tasks is independent of which other tasks are present, so the policy adapts as tasks appear or disappear. Lemma~\ref{lem:ao-pointer} concerns agent openness (AO): the same pointing mechanism builds the query from the current agent set, with each observed agent encoded as a key and the deciding agent's own key acting as the query, so the policy adapts as agents arrive or depart.

\begin{lem}[Task openness: the task pointer is support-adaptive and rank-consistent]\label{lem:to-pointer}
Fix parameters $\theta$ and let $q$ be any fixed query vector, and assume PLATO's
one-action-per-task setting: each task $x\in X^t$ has a single associated action
$a_x$ plus a persistent \textsc{no-op} (in wildfire domain for example, $a_x=\textsf{suppress}(x)$).
Let $\pi(\cdot)=\mathrm{softmax}\bigl([u_a^t]_{a\in X^t\cup\{\textsc{no-op}\}}\bigr)$
with per-task keys $k_x^t=\mathrm{MLP}_x(o_x^t)$ and scores
$u_x^t=(\mathbf{v}^{\mathrm{t}})^{\top}\!\tanh(\mathbf{W}_K^{\mathrm{t}} k_x^t+\mathbf{W}_q^{\mathrm{t}} q)$, where $\mathbf{W}_K^{\mathrm{t}}$ and
$\mathbf{W}_q^{\mathrm{t}}$ are learnable projection matrices for the keys and the query, respectively
(note: the subscripts $K$ and $q$ name each matrix's role and are not indices; the upright superscripts $\mathrm{t}$ and $\mathrm{a}$ tag the task-side and agent-side scorers and are not the time index $t$), and
$\mathbf{v}^{\mathrm{t}}$ is a learnable vector mapping the compatibility to a scalar score (cf.\ the Pointer Actor section of the main paper). Then (i) the support
of $\pi$ equals $\{\textsc{no-op}\}\cup\{a_x\mid x\in X^t\}$; and (ii) writing
$\pi(x)$ for the mass on $a_x$, for any $x,x'\in X^t$ the log-odds
$\log\frac{\pi(x)}{\pi(x')}=u_x^t-u_{x'}^t$ depend only on $(k_x^t,k_{x'}^t,q)$ and are
invariant to $X^t\setminus\{x,x'\}$. Hence a task arriving or departing changes
only the normalization and preserves the ranking of every remaining task.
\end{lem}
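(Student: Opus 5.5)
The proof is a direct computation from the definition of the softmax. I would split it into part (i), part (ii), and the final corollary about arrivals and departures.

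\emph{Part (i).} I would first note that the index set of the softmax is, by construction, $X^t\cup\{\textsc{no-op}\}$. Under the one-action-per-task assumption, each index corresponds bijectively to an action in $\{\textsc{no-op}\}\cup\{a_x\mid x\in X^t\}$. Every score $u_a^t$ is a finite real number:
\begin{itemize}
\item $\mathrm{MLP}_x$, $\mathbf{W}_K^{\mathrm{t}}$, $\mathbf{W}_q^{\mathrm{t}}$ and $\mathbf{v}^{\mathrm{t}}$ are finite-parameter maps;
\item $\tanh$ is bounded, so in fact $|u_a^t|\le\|\mathbf{v}^{\mathrm{t}}\|_1$.
\end{itemize}
Hence $\exp(u_a^t)>0$ for every $a$, and the normalizer $Z=\sum_{a}\exp(u_a^t)$ is a finite positive sum, since $X^t$ is finite at each step. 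Every index therefore receives strictly positive mass, and no mass lies outside the index set. So the support is exactly $\{\textsc{no-op}\}\cup\{a_x\mid x\in X^t\}$, and the distribution is well defined even when $X^t=\emptyset$, because the \textsc{no-op} is always present. This is consistent with Theorem~\ref{theorem:well-defined_appendix}.

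\emph{Part (ii).} For $x,x'\in X^t$ I would write $\pi(x)=\exp(u_x^t)/Z$ and $\pi(x')=\exp(u_{x'}^t)/Z$. The common normalizer $Z$ cancels in the ratio, giving $\log\frac{\pi(x)}{\pi(x')}=u_x^t-u_{x'}^t$. Next, $u_x^t$ is a function of $(k_x^t,q)$ alone under the fixed parameters $\theta$, and likewise $u_{x'}^t$ is a function of $(k_{x'}^t,q)$. The key $k_x^t=\mathrm{MLP}_x(o_x^t)$ is computed per task independently, as in Lemma~\ref{lem:stats_appendix}, and the pointer decoder has no positional embedding. Therefore the log-odds depend only on $(k_x^t,k_{x'}^t,q)$ and are unchanged by any modification of $X^t\setminus\{x,x'\}$.

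\emph{Corollary.} Suppose a task $y$ arrives, so $X^t\cup\{y\}$, or departs, so $X^t\setminus\{y\}$, with $q$ held fixed as in the hypothesis. For each surviving task $x$, $u_x^t$ is unchanged, and only $Z$ gains or loses the term $\exp(u_y^t)$. Thus every surviving probability is rescaled by the same factor $Z/Z'$. Since $\log$ is monotone and the pairwise log-odds are preserved, the ordering of all remaining actions, including \textsc{no-op}, is preserved.

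The main obstacle is the modeling caveat rather than the algebra. In the actual actor, $q^t$ is built from agent features in $o_N^t$, and a task change could in principle alter the observed agent features, and hence $q$. The lemma sidesteps this by fixing $q$, and I would state explicitly that the invariance holds conditional on the query. I would also state the standing assumption that $\mathrm{MLP}_x$ acts row-wise, with no cross-task normalization such as batch statistics over tasks at inference.
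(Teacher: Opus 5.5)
Your proposal is correct and follows the paper's proof essentially step for step. Part (i) rests on the score set being exactly indexed by $X^t\cup\{\textsc{no-op}\}$. Part (ii) cancels the common normalizer $Z$ and notes that each $u_x^t$ depends only on its own key $k_x^t=\mathrm{MLP}_x(o_x^t)$ and the fixed query $q$, so another task arriving or departing only adds or removes a term of $Z$. Your explicit positivity step (bounded scores, hence $e^{u_a^t}>0$ and a finite $Z$) spells out what the paper leaves implicit when it reads off the support, and your caveat that the invariance holds for a fixed $q$ matches the lemma's hypothesis.
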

\begin{proof}
The task encoder maps each available task $x\in X^t$ independently to a key $k_x^t=\mathrm{MLP}_x(o_x^t)$, and a persistent $\textsc{no-op}$ key is always present, so the score set is exactly $\{u_a^t:a\in X^t\cup\{\textsc{no-op}\}\}$ and the softmax over these $|X^t|+1\ge 1$ scores is a categorical whose support is $\{\textsc{no-op}\}\cup\{a_x\mid x\in X^t\}$, giving (i).

For (ii), the additive scorer \citep{bahdanau2014neural} $u_x^t=(\mathbf{v}^{\mathrm{t}})^{\top}\!\tanh(\mathbf{W}_K^{\mathrm{t}} k_x^t+\mathbf{W}_q^{\mathrm{t}} q)$ depends only on that task's key $k_x^t$ and the fixed query $q$. With softmax normalizer $Z=\sum_{a\in X^t\cup\{\textsc{no-op}\}}e^{u_a^t}$ each task's
probability is $\pi(x)=e^{u_x^t}/Z$ and $\pi(x')=e^{u_{x'}^t}/Z$. Forming the log-odds
with the normalizer still in place and then cancelling the common $Z$,
\[
\log\frac{\pi(x)}{\pi(x')}
=\log\frac{e^{u_x^t}/Z}{\,e^{u_{x'}^t}/Z\,}
=\log\frac{e^{u_x^t}}{e^{u_{x'}^t}}
=u_x^t-u_{x'}^t.
\]
Because each key $k_y^t=\mathrm{MLP}_x(o_y^t)$ is a function of task $y$'s features alone and $q$ is
fixed, $u_x^t$ and $u_{x'}^t$ do not depend on which other tasks are present; adding or removing a task
$x''\ne x,x'$ only inserts or deletes the term $e^{u_{x''}^t}$ in $Z$, leaving $u_x^t,u_{x'}^t$ and hence the log-odds unchanged. Hence $\log\frac{\pi(x)}{\pi(x')}=u_x^t-u_{x'}^t$ is invariant to $X^t\setminus\{x,x'\}$, establishing (ii); equivalently, each remaining task's score is unchanged, so their ranking is preserved and only the normalization shifts (content-based pointing \citep{ptr}). The restriction $x''\notin\{x,x'\}$ is the scope of this pairwise claim, not a limitation:
$\log\frac{\pi(x)}{\pi(x')}$ is defined only while both $x$ and $x'$ lie in the support
of~(i). Removing either leaves the policy well defined and rank-consistent among the tasks
that remain, and since each key $k_x^t=\mathrm{MLP}_x(o_x^t)$ is recomputed from current
features, a task that departs and later reappears re-enters under the same log-odds
$u_x^t-u_{x'}^t$, carrying no state during its absence.
\end{proof}

\begin{lem}[Agent openness: the agent pointer is support-adaptive and rank-consistent]\label{lem:ao-pointer}
Fix parameters $\theta$, let $i$ be the deciding agent, and let the \emph{team} $N^t$ be the set of
agents currently observed by $i$: each element $j\in N^t$ is one currently present
agent, with observed features $o_j^t$ accounting for one row of the agent-feature matrix $o_N^t$.
Both are part of agent $i$'s local observation $o_i^t$, so a different deciding agent generally
has a different agent-feature matrix; since the whole lemma is computed by the fixed deciding
agent $i$, we omit the extra subscript $i$ on $o_j^t$, $o_N^t$, and all derived quantities
($k_j^t$, $e_j^t$, $\alpha^t$, $q^t$) for readability.
An agent always observes itself, so $i\in N^t$ and $|N^t|\ge 1$.
Let per-agent keys $k_j^t=\mathrm{MLP}_n(o_j^t)\in\mathbb{R}^{f_h}$
for $j\in N^t$, scores $e_j^t=(\mathbf{v}^{\mathrm{a}})^{\top}\!\tanh(\mathbf{W}_K^{\mathrm{a}} k_j^t+\mathbf{W}_q^{\mathrm{a}} k_i^t)$
with the deciding agent's own key $k_i^t$ as the query, where
$(\mathbf{W}_K^{\mathrm{a}},\mathbf{W}_q^{\mathrm{a}},\mathbf{v}^{\mathrm{a}})$ are the agent-side counterparts of the task-side
$(\mathbf{W}_K^{\mathrm{t}},\mathbf{W}_q^{\mathrm{t}},\mathbf{v}^{\mathrm{t}})$ of Lemma~\ref{lem:to-pointer}: learnable projection
matrices for the keys and the query, and a learnable scoring vector (cf.\ the Pointer Actor section of the main paper), attention
$\alpha^t=\mathrm{softmax}\bigl([e_j^t]_{j\in N^t}\bigr)$, and the query, the
attention-weighted read of the agent keys,
$q^t=\sum_{j\in N^t}\alpha_j^t k_j^t$. Then (i) $\alpha^t$ is a well-defined
categorical whose support is exactly $N^t$, for every
$|N^t|\ge 1$; (ii) $q^t\in\mathbb{R}^{f_h}$ is a
fixed-length, permutation-invariant function of the agent set, for every
$|N^t|\ge 1$; and (iii) for any $j,j'\in N^t$ the log-odds
$\log\frac{\alpha_j^t}{\alpha_{j'}^t}=e_j^t-e_{j'}^t$ depend only on
$(k_j^t,k_{j'}^t,k_i^t)$ and are invariant to $N^t\setminus\{j,j',i\}$. Hence an
agent arriving or departing changes only the normalization and preserves the
\emph{relative weighting} of every remaining agent, that is, the ratio
$\alpha_j^t/\alpha_{j'}^t$ between the attention weights of any two agents that remain. Moreover, the agent set enters each
task score $u_x^t=(\mathbf{v}^{\mathrm{t}})^{\top}\!\tanh(\mathbf{W}_K^{\mathrm{t}} k_x^t+\mathbf{W}_q^{\mathrm{t}} q^t)$, with $k_x^t=\mathrm{MLP}_x(o_x^t)$ the key of task $x$ and $(\mathbf{W}_K^{\mathrm{t}},\mathbf{W}_q^{\mathrm{t}},\mathbf{v}^{\mathrm{t}})$ the task-side learnable projections and scoring vector of Lemma~\ref{lem:to-pointer}, where the subscript $x$ indexes a task just as $j$ indexes an agent ($u_x^t$ is the task-side score, $e_j^t$ its agent-side counterpart),
only through $q^t$, and because Lemma~\ref{lem:to-pointer} holds for any fixed
query, its conclusions apply verbatim with $q=q^t$ at each $t$.
\end{lem}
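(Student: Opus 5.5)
The proof follows the template of Lemma~\ref{lem:to-pointer}, transposed from tasks to agents, with one extra step for the attention-weighted read $q^t$.

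For (i), the setup guarantees $i\in N^t$, so the score list $[e_j^t]_{j\in N^t}$ is nonempty. Each $e_j^t$ is a finite real number, since it is built from compositions of MLPs, linear maps, $\tanh$ and an inner product. The normalizer $Z_a=\sum_{j\in N^t}e^{e_j^t}$ is therefore strictly positive and finite. Hence $\alpha^t$ is a well-defined categorical, and every $\alpha_j^t=e^{e_j^t}/Z_a>0$, so its support is exactly $N^t$.

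For (iii), the argument mirrors part (ii) of Lemma~\ref{lem:to-pointer}. First, $k_j^t=\mathrm{MLP}_n(o_j^t)$ depends only on row $j$. The query key $k_i^t$ depends only on agent $i$'s own row. So $e_j^t$ is a function of $(k_j^t,k_i^t)$ alone. Forming $\log(\alpha_j^t/\alpha_{j'}^t)$ and cancelling $Z_a$ gives $e_j^t-e_{j'}^t$. This quantity is unaffected when an agent $j''\notin\{j,j',i\}$ is added or removed, because such a change only inserts or deletes a term of $Z_a$. The exclusion of $i$ is necessary: if the deciding agent itself departs, the query changes and the pairwise claim is out of scope.

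For (ii), there are two parts.
\begin{itemize}
\item \emph{Fixed length.} $q^t$ is a convex combination of vectors in $\mathbb{R}^{f_h}$, so $q^t\in\mathbb{R}^{f_h}$ for every $|N^t|\ge1$.
\item \emph{Permutation invariance.} This is the step needing the most care. Let $\sigma$ permute the rows of $o_N^t$, with the deciding agent identified by identity rather than by row index. Then each key and score is relabeled, $k^t_{\sigma(j)}$ and $e^t_{\sigma(j)}$, while $k_i^t$ is unchanged. The softmax is equivariant under reindexing, so $\alpha$ is relabeled by $\sigma$ as well. The sum $\sum_j\alpha_j^tk_j^t$ is then a sum over the same finite index set reordered by a bijection, which is exactly the argument used for the mean in Lemma~\ref{lem:stats_appendix}. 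Hence $q^t$ is invariant.
\end{itemize}
The main obstacle is to state explicitly that the query is tied to agent $i$'s identity and not to a positional slot; without this, permuting rows could change $k_i^t$. I would make that convention explicit from the lemma's setup ("an agent always observes itself").

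Finally, the agent set enters $u_x^t$ only through $q^t$, since $k_x^t$ depends only on $o_x^t$. At each $t$, $q^t$ is a single fixed vector, so Lemma~\ref{lem:to-pointer}, which is stated for an arbitrary fixed query, applies verbatim with $q=q^t$. Arrival or departure of agents may change $q^t$ and hence the task scores. For any given team, however, the task-pointer support and rank-consistency conclusions still hold.
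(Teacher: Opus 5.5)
Your proposal is correct and follows essentially the same route as the paper's proof. It uses nonemptiness of the score set via $i\in N^t$ for (i), cancellation of the softmax normalizer with locality of the additive scorer for (iii), the convex-combination and bijection-reindexing arguments for (ii), and the reduction of the final claim to Lemma~\ref{lem:to-pointer} with $q=q^t$; your explicit points that the softmax is permutation-equivariant and that $k_i^t$ must be tied to agent $i$'s identity rather than a row position make precise what the paper compresses into a citation of Lemma~\ref{lem:stats_appendix}, whose statistics do not literally include an attention-weighted read.
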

\begin{proof}
For (i), the agent encoder maps each observed agent $j\in N^t$ independently to a key $k_j^t=\mathrm{MLP}_n(o_j^t)$, and the deciding agent always observes itself, so $i\in N^t$ and the score set $\{e_j^t: j\in N^t\}$ contains at least the ego score; the softmax over these $|N^t|\ge 1$ scores is a categorical whose support is exactly $N^t$. 

For (ii), permutation invariance of the normalizer $Z_N=\sum_{l\in N^t}e^{e_l^t}$ and the read $q^t=\sum_{j\in N^t}\alpha_j^t k_j^t$ is exactly Lemma~\ref{lem:stats_appendix}. Since $q^t$ is a convex combination of keys in $\mathbb{R}^{f_h}$, its length is $f_h$ regardless of $|N^t|$, giving (ii).

For (iii), the additive scorer \citep{bahdanau2014neural} $e_j^t=(\mathbf{v}^{\mathrm{a}})^{\top}\!\tanh(\mathbf{W}_K^{\mathrm{a}} k_j^t+\mathbf{W}_q^{\mathrm{a}} k_i^t)$ depends only on that agent's key $k_j^t$ and the deciding agent's key $k_i^t$, which is always available since $i\in N^t$. With softmax normalizer $Z_N=\sum_{l\in N^t}e^{e_l^t}$ each agent's
weight is $\alpha_j^t=e^{e_j^t}/Z_N$ and $\alpha_{j'}^t=e^{e_{j'}^t}/Z_N$. Forming the log-odds
with the normalizer still in place and then cancelling the common $Z_N$,
\[
\log\frac{\alpha_j^t}{\alpha_{j'}^t}
=\log\frac{e^{e_j^t}/Z_N}{\,e^{e_{j'}^t}/Z_N\,}
=\log\frac{e^{e_j^t}}{e^{e_{j'}^t}}
=e_j^t-e_{j'}^t.
\]
Because each key $k_l^t=\mathrm{MLP}_n(o_l^t)$ is a function of agent $l$'s features alone, $e_j^t$ and $e_{j'}^t$ do not depend on which other agents are present; adding or removing an agent
$j''\notin\{j,j',i\}$ only inserts or deletes the term $e^{e_{j''}^t}$ in $Z_N$, leaving $e_j^t,e_{j'}^t$ and hence the log-odds unchanged. Hence $\log\frac{\alpha_j^t}{\alpha_{j'}^t}=e_j^t-e_{j'}^t$ is invariant to $N^t\setminus\{j,j',i\}$, establishing (iii); equivalently, each remaining agent's score is unchanged, so their relative weighting is preserved and only the normalization shifts (content-based pointing \citep{ptr}). Here \emph{unchanged} means at the same timestep with the same observed features: the scorer reads only $(k_j^t,k_i^t)$, never the rest of $N^t$, so a change in the team membership by itself cannot re-rank the remaining agents. The policy still adapts at once, because the departed key drops out of $q^t$ and the weights renormalize, shifting the task distribution immediately; at later steps the departure also changes what agents observe (in wildfire, an unattended fire grows and suppressant use shifts), and those new features change the scores. The restriction $j''\notin\{j,j',i\}$ is the scope of this pairwise claim, not a limitation: $i$ never departs and its key is part of the stated dependence, while
$\log\frac{\alpha_j^t}{\alpha_{j'}^t}$ is defined only while both $j$ and $j'$ lie in the support
of~(i). Removing either leaves the attention well defined and rank-consistent among the agents
that remain, and since each key $k_j^t=\mathrm{MLP}_n(o_j^t)$ is recomputed from current
features, an agent that departs (for example, after running out of suppressant) and later
returns is re-encoded from its current features and re-enters under the same scoring rule: its
log-odds against any current teammate are determined by its current features and the current
ego key $k_i^t$. The policy carries no memory of the agent through its absence: nothing is
saved at departure and nothing is restored at re-entry; any persistence (for example, its
suppressant level) lives in the environment state and reaches the policy only through the
agent's current observed features.

For the final claim, each task key $k_x^t=\mathrm{MLP}_x(o_x^t)$ is a function of task features alone, so the agent set enters the score $u_x^t=(\mathbf{v}^{\mathrm{t}})^{\top}\!\tanh(\mathbf{W}_K^{\mathrm{t}} k_x^t+\mathbf{W}_q^{\mathrm{t}} q^t)$ only through $q^t$.
At any time $t$, the read $q^t$ is one fixed vector in $\mathbb{R}^{f_h}$. Lemma~\ref{lem:to-pointer} was proved for an arbitrary fixed query, so its conclusions hold verbatim with $q=q^t$. In particular, $q^t$ is computed from the agent features $o_N^t$ alone, so it does not change when tasks arrive or depart, and Lemma~\ref{lem:to-pointer}'s invariance to the rest of the task set applies. Note that $q^t$ itself does change when the team changes: the weights renormalize and keys enter or leave the sum. That is the intended adaptation to the current team; claim (iii) asserts invariance of the relative weighting among the remaining agents, not constancy of $q^t$.

\end{proof}

\subsection{Implications of the Openness Lemmas}\label{app:openness-implications}

Together, Lemmas~\ref{lem:to-pointer} and~\ref{lem:ao-pointer} pin down how the policy reacts
when the task set or the team changes: adaptation flows through the observed features and the
query, never through the membership of the sets themselves. We illustrate with an example. Let
the team be $N^t=\{n_1,n_2,n_3\}$, three agents as defined in Lemma~\ref{lem:ao-pointer}, and
let the task set $X^t$ contain the fires $x_1$, $x_2$, and $x_5$, where fire $x_1$ requires all
three agents while the smaller fires $x_2$ and $x_5$ can each be handled by one agent. Suppose
$n_2$ departs, for example after running out of suppressant. At that instant, by
Lemma~\ref{lem:ao-pointer}(iii), the relative weighting between $n_1$ and $n_3$ is unchanged:
the departure by itself does not re-rank the remaining agents. The policy nevertheless adapts
immediately: $n_2$'s key $k_{n_2}^t$ drops out of the query $q^t$ and the attention weights
$\alpha^t$ re-normalize, so each remaining agent's task distribution shifts, and by
Lemma~\ref{lem:to-pointer} this shift re-weights the available tasks without disturbing the
support or requiring retraining; $n_1$ may now prefer $x_2$ and $n_3$ may prefer $x_5$, so the
joint attack on $x_1$ dissolves. In the following steps the consequences of the departure enter
through the features: $x_1$ grows unattended and suppressant use shifts, and these new
observations change both the agent scores and the task scores. If $n_2$ later returns, or a
brand-new agent arrives, it is encoded from its current features and joins the same
computation, with no reserved slot, no reset, and no retraining. The two lemmas thus guarantee
that openness changes the policy only in the intended way: the distribution is always well
defined over exactly the current tasks and the current team, and it reacts to openness through
content rather than through identity or count.

\subsection{Theorem 2: Computational complexity.}\label{app:complexity}
Here we analyze the space and time complexity of PLATO. Our time complexity bound is non-deterministic, in that we consider parallelization where possible. We treat all fixed hyperparameters ($f_N$, $f_X$, $f_h$, number of GCN layers), including the feature size of factored tasks and agents as constants. We do this to express complexity solely in terms of the quantities that vary under openness, the number of active agents $|N^t|$ and active tasks $|X^t|$.

\begin{theorem}[Complexity Analysis for PLATO]
    \label{thm:analysis_appendix}
    Assuming neural network hyperparameters, feature dimensions, and graph convolutional layers are constants, the per-timestep complexity of PLATO is $\mathcal{O}(|N^t|^2+|N^t||X|^t).$
\end{theorem}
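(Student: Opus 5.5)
The plan is to bound the actor forward pass and the critic forward pass separately and then add the two bounds. Throughout, every operation whose cost depends only on the fixed dimensions $f_N, f_X, f_h$ or on the number of GCN layers counts as $\mathcal{O}(1)$. Write $n=|N^t|$ and $m=|X^t|$. Each agent's visible task set satisfies $X_i^t\subseteq X^t$, so $|X_i^t|\le m$.

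\textbf{Actor cost per agent.} For a single deciding agent $i$ we follow Algorithm~\ref{alg:plato}, lines 8--14.
\begin{itemize}
\item The preprocessor emits $o_N^t\in\mathbb{R}^{n\times f_N}$ and $o_X^t\in\mathbb{R}^{|X_i^t|\times f_X}$. Writing these matrices costs $\mathcal{O}(n+m)$.
\item The team statistics (mean, variance, min, max) each take one pass over the $n$ rows per feature. That is $\mathcal{O}(n)$; the variance needs a second pass after the mean, which is still $\mathcal{O}(n)$.
\item $\mathrm{MLP}_s$ acts on the fixed-length vector $d^t\in\mathbb{R}^{4f_N}$, so it costs $\mathcal{O}(1)$.
\item $\mathrm{MLP}_x$ is applied row-wise to $|X_i^t|+1$ task rows, costing $\mathcal{O}(m)$.
\item Each additive score $u_x^t$ is a fixed-size computation. $\mathbf{W}_q q^t$ is computed once and reused, so all scores together cost $\mathcal{O}(m)$.
\item The softmax is linear in the number of scores, $\mathcal{O}(m)$.
\end{itemize}
One agent therefore costs $\mathcal{O}(n+m)$. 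If the agent-pointer variant of Lemma~\ref{lem:ao-pointer} is used to build the query instead, this is still $\mathcal{O}(n)$ per agent: one key and one score per teammate, plus one weighted sum. Summing over all $n$ agents gives an actor cost of $\mathcal{O}(n^2+nm)$. This is the source of the $|N^t|^2$ term: every agent summarizes the whole team.

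\textbf{Critic cost.} Forming the bipartite graph $\mathcal{G}^t$ takes $\mathcal{O}(n+m)$ nodes and $|\mathcal{E}^t|=nm$ edges. Encoding the nodes with $\mathrm{MLP}_\mathcal{N}$ and $\mathrm{MLP}_\mathcal{X}$ costs $\mathcal{O}(n+m)$.

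For each GCN layer I use the sparse message-passing form $h_l=\hat D^{-1/2}\hat A\hat D^{-1/2}h_{l-1}W_l$. Its cost is $\mathcal{O}(|\mathcal{E}^t|+|\mathcal{V}^t|)$ times a constant in $f_h$, which here is $\mathcal{O}(nm+n+m)$. With a constant number of layers the total is still $\mathcal{O}(nm+n+m)$.

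The skip connection, mean pooling, and $\mathrm{MLP}_v$ together cost $\mathcal{O}(n+m)+\mathcal{O}(1)$.

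\textbf{Combining the bounds.} Adding the two parts gives $\mathcal{O}(n^2+nm+n+m)$. Since $n\ge1$ (an agent always observes itself, Lemma~\ref{lem:ao-pointer}), we have $n\le n^2$ and $m\le nm$, so the sum reduces to $\mathcal{O}(n^2+nm)$.

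I will add one remark for the edge case where the team is empty at some step. There, only the critic's task-node encoding survives, and the $\mathcal{O}(m)$ term is trivially absorbed under the convention that the bound is read for active teams.

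\textbf{Expected obstacle.} The main care point is the GCN step. A dense implementation using the $(n+m)\times(n+m)$ adjacency matrix would cost $\mathcal{O}((n+m)^2)$, which introduces an $m^2$ term absent from the claim. So I will explicitly fix the sparse edge-list message-passing implementation, with degree normalization computed in $\mathcal{O}(|\mathcal{E}^t|)$. Because $\mathcal{E}^t$ is bipartite, it has no task--task edges, and the cost stays $\mathcal{O}(nm)$.

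A secondary point is the appendix's remark about parallelization. If the $n$ actor passes run in parallel, the wall-clock cost drops. I will state the bound as total work, which yields the theorem's expression either way.
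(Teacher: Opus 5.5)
Your proposal is correct and follows essentially the same decomposition as the paper: an $\mathcal{O}(|N^t|+|X^t|)$ actor pass per agent, multiplied over the $|N^t|$ agents sharing the policy, plus a critic cost governed by GCN aggregation over the $|N^t||X^t|$ bipartite agent--task edges. Your use of $|N^t|\ge 1$ to absorb the linear terms and your explicit choice of sparse edge-list message passing make precise what the paper leaves implicit; the paper's remark that the critic ``dominates'' the actor is not needed for the bound and fails when $|N^t|>|X^t|$.
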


\paragraph{Actor.}
The preprocessor and team statistics scan $o_N^t$ and $o_X^t$ in $O(|N^t|+|X^t|)$.
The team-statistics step outputs a \emph{fixed}-length descriptor $d^t \in \mathbb{R}^{4f_N}$ regardless of $|N^t|$, so $\mathrm{MLP}_s$ contributes an $O(1)$ constant.
$\mathrm{MLP}_x$ is applied independently to each of the $|X^t|+1$ task features, costing $O(|X^t|)$; because tasks are encoded without shared state, this step parallelizes fully over the task dimension.
One-step additive attention \citep{bahdanau2014neural,ptr} computes a scalar score per key in $O(1)$ and runs over all keys in $O(|X^t|)$; \textsc{Softmax} normalization is likewise $O(|X^t|)$.
The per-agent actor cost is $O(|N^t|+|X^t|)$, and for all $|N^t|$ agents sharing the same policy the per-timestep actor cost is $O(|N^t|(|N^t|+|X^t|))$.

\paragraph{Critic.}
Node encoding via separate MLPs costs $O(|N^t|+|X^t|)$.
The bipartite graph $\mathcal{G}^t$ has $|\mathcal{E}^t|=|N^t||X^t|$ edges; each GCN layer \citep{kipf2016semi} aggregates over $|\mathcal{E}^t|$ edges in $O(|N^t||X^t|)$, which dominates the per-node projection $O(|N^t|+|X^t|)$.
With $L=2$ layers and a residual connection \citep{he2016deep}, total propagation is $O(|N^t||X^t|)$.
Mean pooling and the value head are $O(|N^t|+|X^t|)$ and $O(1)$, respectively.
The critic cost per timestep is therefore $O(|N^t||X^t|)$, which dominates the actor during training.
At execution time the critic is absent, and the cost across all agents reduces to $O(|N^t|(|N^t|+|X^t|))$ per timestep.

\subsection{Reward bounds in Wildfire}

 In Wildfire, the only positive rewards are $r_{putout}^{\text{size}}$ for successfully suppressing a fire, and the only negative reewards are $r_{burnout}^{\text{size}}$ when a fire burns out.  With stochastic random ignition, in the worst case  all fires immediately ignite once they burnout. Therefore, trivially, the worst penalty over any trajectory $\tau$ is bounded by the rate fires burnout,
 \begin{equation}
 \label{eq:lowerrbound}
 R(\tau)\geq \left\lfloor\frac{h}{\text{int}_0+1}\right\rfloor\sum_x r^{size_x}_{burnout},
 \end{equation}
 where $h$ is the time horizon, and $\text{int}_0$ is the initial intensity of fires. The $+1$ is required because fires only ignite when they have $\text{int}=0$, so there is a timestep delay. 
 
 For the upper bound, fire sizes are very important. All fires start at the same intensity, $\text{int}_0$, so agents, $N$, strictly benefit from collaborating if $r^{size}_{putout}\geq  \text{size}\cdot r_{putout}$.  When fires are all $\text{size}=1$, the maximum reward can similarly be bounded by the maximum rate fires can be put out,
 \begin{equation}
 \label{eq:simplerbound}
      R(\tau) \leq  \frac{h}{\text{int}_0} \cdot r_{putout}^{\text{size}_x}\cdot \min(|N|,  \lceil\text{number of unique reachable fires}/2\rceil).
 \end{equation}
 In this bound, the set of \emph{unique reachable fires} are all fires reachable by at least one unique agent. This approach will overestimate the impact of interleaving fires.  Size 1 fires can be suppressed by one agent, so the optimal strategy is repeatedly put out either the one fire available to that agent ($|N|$), or if  agents can reach two unique fires, alternate fighting them to avoid the one timestep ignition delay, ($\lceil \text{number of unique reachable fires}/2\rceil$).  When  grid size is sufficiently large, increasing it will not change \emph{unique reachable fires} if agent position and reach are fixed.

We address both fire size, and \emph{fuel}, a mechanic added to some recent uses of the Wildfire environment, \citep{patino2025inaugural}, through optimization. Fuel involves putting out or letting a fire burnout deterministically decreases the number of times it can ignite.  Our algorithm solves this by calculating optimal cycles. Let agents repeatedly put out fires in \emph{cycles}, $c \in C$. Agents may pause one cycle while waiting for reignition to work on another. Each cycle contains one fire, with fire size monotonically increasing by cycle index. 

We determine which cells will be completed as an integer program solution, similar to the  Maximum Temporal Matching Problem \citep{mtmp},

\begin{alignat}{2}
\text{Maximize} \quad & \sum_{c \in C} \sum_{x \in X} f_{x,c} \cdot r_x \notag \\
\text{Such That} \quad & \sum_{n \in N} a_{n,x,c} = f_{x,c} \cdot \text{size}_{x} && \forall x \in X, c \in C \label{const:res_alloc} \\
& \sum_{x \in X} a_{n,x,c}\cdot\text{size}_x  \geq \sum_{x \in X} a_{n,x,c+1}  \cdot\text{size}_x && \forall n \in N, c < |C| \label{const:lex_order} \\
& \sum_{c \in C} a_{n,x,c} \leq 1 && \forall n \in N, x \in X \label{const:unique_res} \\
& \sum_{x \in X} a_{n,x,c} \leq 1 && \forall n \in N, c \in C \label{const:capacity} \\
& f_{x,c}, a_{n,x,c} \in \{0, 1\} && \forall n, x, c \notag.
\end{alignat}

The constraints on this integer program (IP) solution ensure that (3) fires have sufficient collaboration, (4) monotonic cycle fire size increase, (5) agents attack unique fires across cycles, and (6) agents attack only one fire per cycle. The number of cycles is bounded, $|C| \leq \text{ num reachable fires}$. Now we determine how many times each cycle occurs. We handle this post to avoid solving the integer program over the whole horizon. Let $\rho_c$ be the number of times cycle $c$ produces a reward.  If, 
\begin{equation}
\begin{aligned}
        \rho_{x,1} =& \min( \text{fuel}_x, \left\lfloor\frac{h}{\text{int}_0}\right\rfloor)\\
        &\rho_{x,c} = \min(\text{fuel}_x, \left\lfloor\frac{h }{{\text{int}_0}}\right\rfloor- \sum_{c'=1}^{c-1}\rho_{x,c'}),
\end{aligned}
\end{equation}
then $R(\tau) \leq \sum_{n,x,c} f_{x,c} \cdot r_x \cdot \rho_{x,c}$, for any stochastic Wildfire environment. 

\section{Reproducibility}\label{app:reprod}

Here we formally define the Wildfire environment. We also include all hyper parameters, our seeding approach, and hyperparameter tuning methodology.

Table~\ref{tab:network_architecture} lists the best tuned hyperparameters used for each learned method.

\begin{table*}[ht]
\centering
\small
\setlength{\tabcolsep}{5pt}
\begin{tabular}{|l|c|c|c|c|c|}
\hline
\textbf{Parameter} & \textbf{PLATO (MLP)} & \textbf{PLATO (LSTM)} & \textbf{MOHITO} & \textbf{DGN} & \textbf{DICG} \\
\hline
\multicolumn{6}{|c|}{\textit{Network Architecture}} \\
\hline
Hidden Dimension (Actor) & 128 & 128 & 32 & 128 & 128 \\
Hidden Dimension (Critic) & 128 & 128 & 32 & 128 & 128 \\
Activation Function & ReLU, Leaky ReLU & ReLU, Leaky ReLU & ReLU & ReLU & Tanh \\
\hline
\multicolumn{6}{|c|}{\textit{Learning Rates \& Optimization}} \\
\hline
Actor Learning Rate & 5e-4 & 5e-4 & 1e-4 & 5e-3 & 1e-4 \\
Critic Learning Rate & 9e-4 & 9e-4 & 5e-4 & 1e-4 & 1e-4 \\
Optimizer & Adam & Adam & Adam & RMSprop & Adam \\
Optimizer Epsilon & 1e-5 & 1e-5 & 1e-8 & 1e-5 & 1e-5 \\
Gradient Clipping & 1.0 & 1.0 & 1.0 & 10.0 & 10.0 \\
\hline
\multicolumn{6}{|c|}{\textit{Input Processing}} \\
\hline
Observation Padding & No & No & No & Yes & Yes \\
Padding Value & - & - & - & -1.0 & -1 \\
\hline
\multicolumn{6}{|c|}{\textit{Data Collection}} \\
\hline
Batch Size & 2048 & 2048 & 16 & 1024 & 1024 \\
Parallel Environments & 1 & 1 & 1 & 1 & 1 \\
\hline
\end{tabular}
\caption{Configuration: hyper parameters for all methods. This shows the best hyper parameters from tuning.}
\label{tab:network_architecture}
\end{table*}

\paragraph{Compute resources.\label{app:compute}}
All experiments were conducted on a high-performance computing cluster. The experiments were executed on Linux-based compute nodes running a 64-bit x86 architecture with dual Intel Xeon Gold 6348 CPUs at 2.60\,GHz, for a total of 56 CPU cores. The cluster nodes used the Linux kernel version 4.18.0-553.89.1.el8\_10.x86\_64. Experiments were performed strictly on CPU compute using the free-range-zoo library introduced by \citet{patino2025inaugural}. Each individual training run (one method, one grid, one setup) takes approximately 3--5 days of CPU time. The full experimental suite—covering all methods, grids, setups, and Optuna tuning trials—required approximately 6{,}000--10{,}000 CPU-days in total.

\paragraph{Hyperparameter tuning.}\label{app:hp}
We tune all methods with Optuna~\citep{akiba2019optuna} using a Tree-structured Parzen Estimator (TPE) sampler with a shared budget of 50 trials per method.
Each trial trains for 2{,}000 episodes and is evaluated on 15 validation episodes; the trial objective is the mean validation return on the $3\times 3$ grid under S3, the most challenging setup.
For each method, the search space is anchored to the default hyperparameters reported in its published codebase, with ranges that bracket those defaults on a log scale for learning rates and a linear scale for bounded scalars.
Specifically, learning rates are searched over $[10^{-5},\,10^{-2}]$ (PLATO, DICG) or $[10^{-5},\,10^{-3}]$ (MOHITO) and $[10^{-4},\,10^{-2}]$ (DGN); hidden dimensions over $\{64, 128, 256\}$ (PLATO, DGN, DICG) or $\{32, 64, 128\}$ (MOHITO); batch sizes over $\{512, 1024, 2048, 4096\}$ (PLATO), $\{256, 512, 1024, 2048\}$ (DGN), $\{512, 1024, 2048\}$ (DICG), or $\{16, 32, 64, 128\}$ (MOHITO); and entropy coefficients, gradient clipping, and exploration schedules similarly bracketing each method's reported defaults.
This ensures each baseline is given a fair and equal opportunity to find its best configuration in our environment, independently of its original domain.

\paragraph{Seeding protocol.}\label{app:random_seed}
We use a three-tier seeding protocol, following established reproducibility practice in deep RL~\citep{henderson2018deep} with respect to openness.
The \textbf{global seed} (42) initializes all network parameters and algorithm-level randomness (PyTorch, NumPy, Python random) once per run.
The \textbf{train seed} (300) resets the environment at the start of every training episode, so the policy is always trained from a consistent initial fire configuration; within-episode stochasticity (fire spread, random ignitions, and stochastic refills) still varies across episodes, exposing the policy to diverse dynamics under a fixed start.
The \textbf{execution seed} jointly seeds both the environment and all random-number generators at the start of each evaluation episode, making every episode exactly reproducible from its seed alone; we assign $N{=}50$ distinct sequential seeds per checkpoint ($200, 201, \ldots, 249$).
All three seed ranges are disjoint by construction, so the policy is never evaluated on conditions seen during training.
Sharing the same execution seeds across all methods yields seed-matched episode-level observations, which controls for seed-induced variance and increases the sensitivity of the Shapiro-Wilk-adaptive two-sided tests~\citep{wilcoxon1945individual,shapiro1965analysis} used for significance.

\section{Additional Results}

\subsection{Wildfire Suppression}
\label{wildfire}

We simulate \emph{task openness} and \emph{agent openness} using the domain of wildfire suppression. The \textsc{Wildfire} domain~\citep{patino2025inaugural} was developed and validated within OASYS, provides a publicly available codebase with documented configurations, and natively exhibits both AO (agents exit and re-enter as suppressant depletes) and both forms of TO (endogenous spread and exogenous random ignitions) without artificial modification, while supporting varied grid sizes for systematic generalization analysis.

\begin{figure}[htbp]
  \centering
  \includegraphics[width=0.7\linewidth]{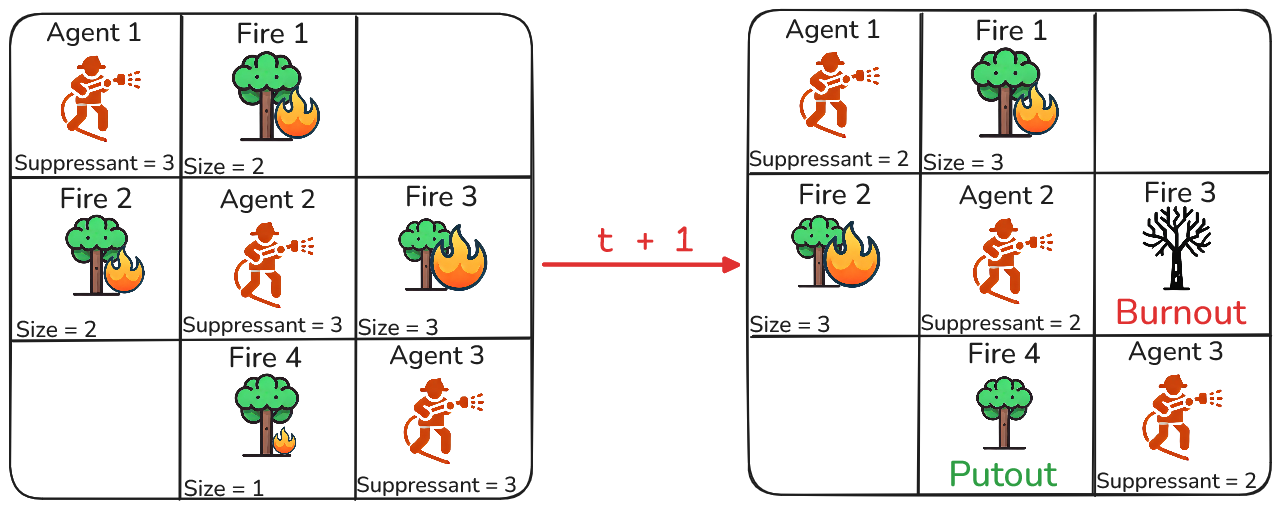}
  \caption{ \textsc{Wildfire} example ($3\times 3$). Fires (tasks $X_j$) ignite, spread, burn out, or are put out (i.e., extinguished) while agents ($N_i$) deplete suppressant or leave for repairs. Both sets evolve over time, creating an open multi-agent system.}
  \label{fig:wildfire}
\end{figure}
Each firefighter, $N_i$, has a suppression range, a finite suppressant level, and equipment health. Agents cannot move between cells; they act only on fires within range. When agent $i$ depletes its suppressant, it exits to recharge suppressant or repair its equipment and may later return, introducing \emph{agent openness}.

\subsection{Setup Figures}\label{app:Setup_figures}

We define \textit{native} (i.e., training and testing on the same grid; see Figs.~\ref{fig:setups_2x3}--\ref{fig:setups_4x4}) and \textit{zero-shot} (i.e., training and testing on different grids; see Fig.~\ref{fig:zs_example}). Zero-shot evaluates generalization to unseen environments, where agents must handle unseen configurations, new spatial layouts, and different coordination patterns without retraining. The same team from training is used, but on a different grid with different number of initial and possible fires and novel positions: $2\times 3$ (3~agents, 2~fires) $\to$ $3\times 3$ (3~agents, 5~fires) and $4\times 4$ (3~agents, 7~fires); $3\times 3$ (3~agents, 4~fires) $\to$ $4\times 4$ (3~agents, 7~fires) and $5\times 5$ (3~agents, 8~fires).

\begin{figure}[ht]
\centering
\includegraphics[width=1\columnwidth]{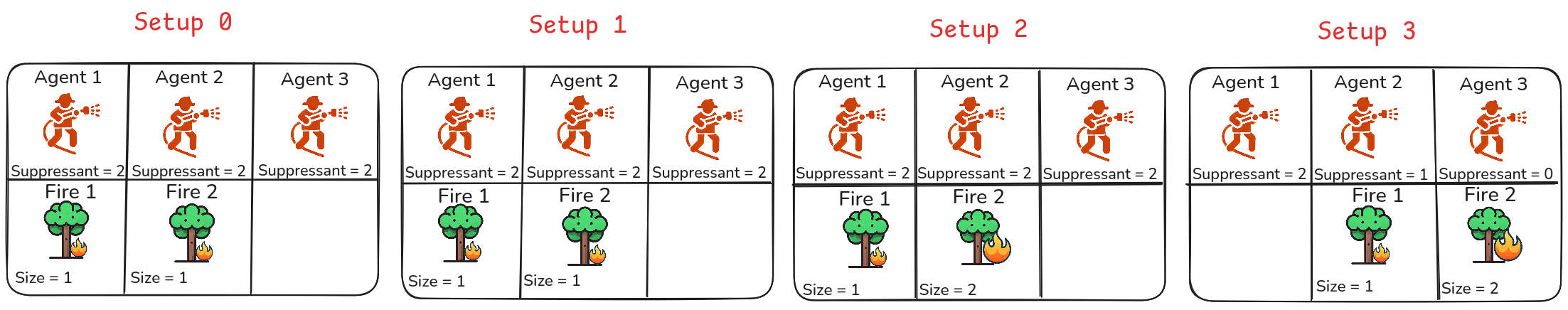}
\caption{Setups (S0--S3) on $2\times 3$ showing the initial suppressant level that each firefighter agent has and the fire size. Setup 1 enables fire spread (endogenous TO). Setup 2 replaces spread with random ignitions (exogenous TO) and introduces a larger fire type, testing a qualitatively different openness mechanism. Setup 3 is the most demanding: it reintroduces fire spread alongside random ignitions and adds uneven initial suppressant levels, combining both TO mechanisms with AO resource constraints. Setup 0 uses the same initial conditions as Setup 1 but disables both task openness and agent openness, yielding a static, non-open setting.}

\label{fig:setups_2x3}
\end{figure}

\begin{figure}[ht]
\centering
\includegraphics[width=1\columnwidth]{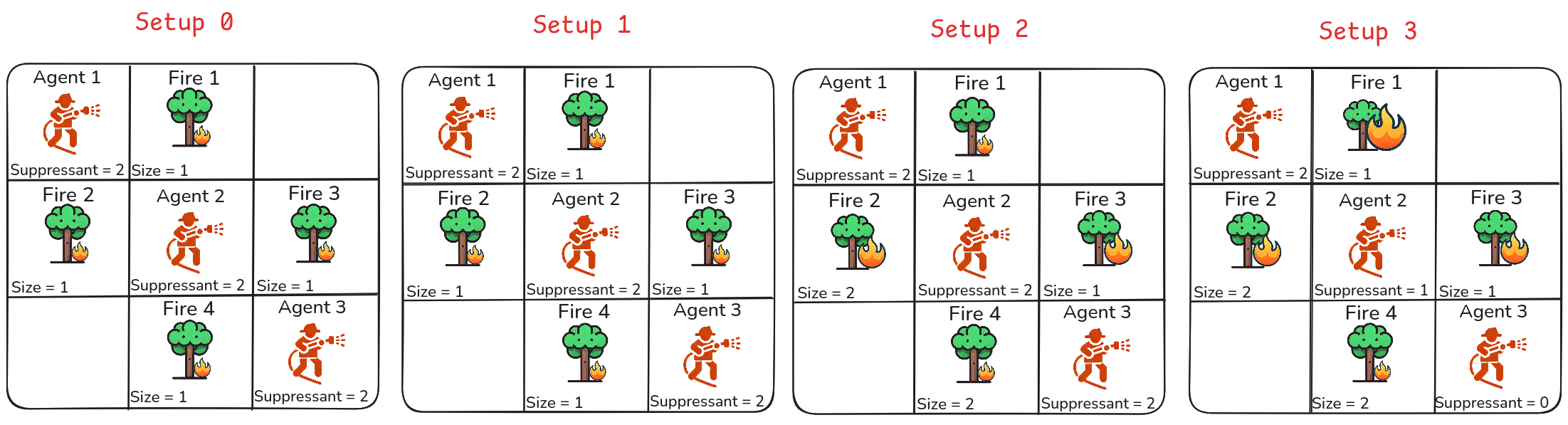}
\caption{Setups (S0--S3) on $3\times 3$ showing the suppressant level that each firefighter agent has and the fire size. Setup 1 enables fire spread (endogenous TO). Setup 2 replaces spread with random ignitions (exogenous TO) and introduces two larger fire types, testing a distinct openness mechanism. Setup 3 reintroduces fire spread alongside random ignitions and adds uneven initial suppressant levels, combining both TO mechanisms with AO resource constraints.}
\label{fig:setups_3x3}
\end{figure}

\begin{figure}[ht]
\centering
\includegraphics[width=1\columnwidth]{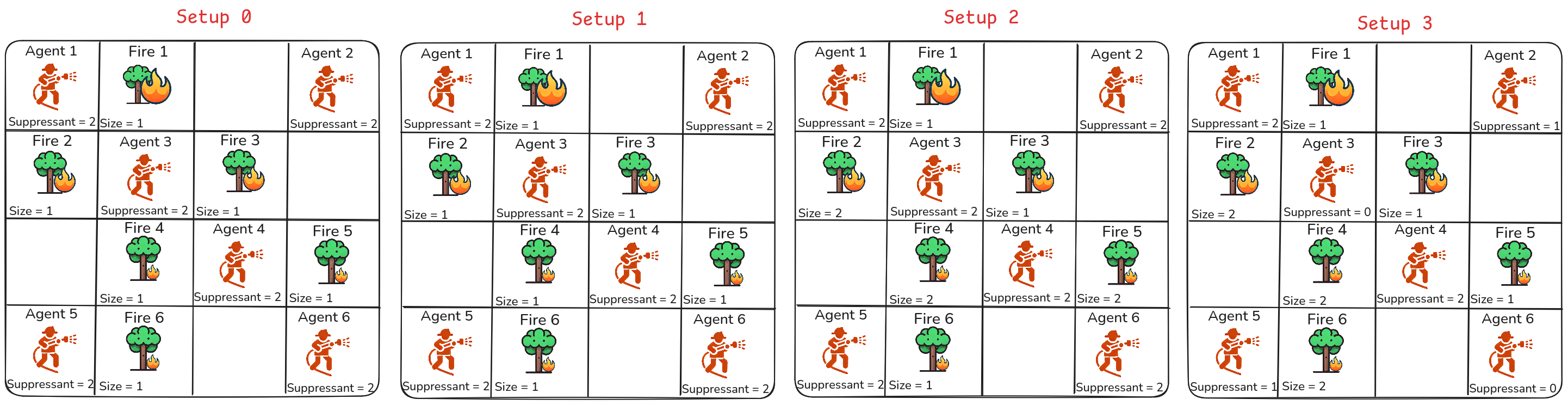}
\caption{Setups (S0--S3) on $4\times 4$ showing the suppressant level that each firefighter agent has and the fire size. Setup 1 enables fire spread (endogenous TO). Setup 2 replaces spread with random ignitions (exogenous TO) and introduces two larger fire types, testing a distinct openness mechanism. Setup 3 reintroduces fire spread alongside random ignitions and adds uneven initial suppressant across all six agents, combining both TO mechanisms with more heterogeneous AO.}
\label{fig:setups_4x4}
\end{figure}
\clearpage

\subsubsection{Zero-Shot Evaluation}\label{app:zs}

\begin{figure}[ht]
\centering
\includegraphics[width=1\columnwidth]{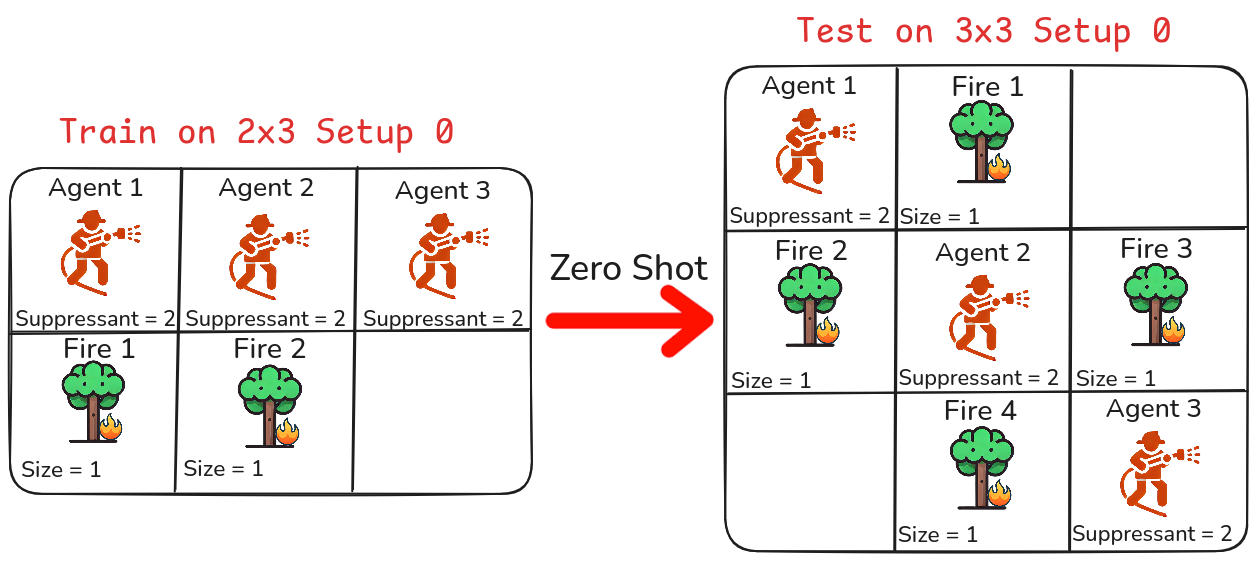}
\caption{Example zero-shot evaluation. Agents are trained on a $2\times 3$ Setup 0 environment and tested without retraining on a larger $3\times 3$ Setup 0 environment.}
\label{fig:zs_example}
\end{figure}

\subsection{Domain and Openness Setups}\label{app:domain_setups}

\begin{table} [ht]
\centering
\footnotesize
\setlength{\tabcolsep}{3pt}
\renewcommand{\arraystretch}{1.05}

\begin{tabular}{lccc}
\toprule
\textbf{Setup} & \textbf{Fire spread} & \textbf{Random ignitions} & \textbf{Refill success} \\
\midrule
S0 & $\times$ (0.0) & $\times$ (0.0) & - \\
S1 & $\checkmark$ (0.6) & $\times$ (0.0) & 1.0 \\
S2 & $\times$ (0.0) & $\checkmark$ (0.6) & 0.8 \\
S3 & $\checkmark$ (0.6) & $\checkmark$ (0.6) & 0.6 \\
\bottomrule
\end{tabular}
\caption{Configuration: Openness mechanisms across setups. Endogenous TO is fire spread, exogenous TO is random ignitions, and AO is stochastic refill success. Numbers indicate the probability per step. In S0 agents have unlimited suppressant (suppressant decrease disabled).}
\label{tab:setup_summary_app}
\end{table}

\subsection{Setup Details and Openness Statistics}\label{app:A1}

Table~\ref{tab:openness_events} summarizes how frequently openness events occur within the 100-step horizon, and Table~\ref{tab:grid_minimal} summarizes the grid-scale configuration differences.

\begin{table} [ht]
\centering
\small
\setlength{\tabcolsep}{3pt}
\renewcommand{\arraystretch}{1.05}
\begin{tabular}{l|c|c|c|c}
\toprule
 & \multicolumn{4}{c}{\textbf{Setup}} \\
\cline{2-5}
\textbf{Grid} & \multicolumn{1}{c|}{\textbf{S0}} & \multicolumn{1}{c|}{\textbf{S1}} & \multicolumn{1}{c|}{\textbf{S2}} & \multicolumn{1}{c}{\textbf{S3}} \\
\midrule
\multicolumn{5}{c}{\textbf{Zero-suppressant count (agents ran out of suppressant)}} \\
\midrule
$2\times 3$ & 0.00$\pm$0.00 & 16.25$\pm$12.35 & 56.05$\pm$28.56 & 54.06$\pm$34.85 \\
$3\times 3$ & 0.00$\pm$0.00 & 10.69$\pm$8.07  & 64.71$\pm$27.87 & 61.78$\pm$42.16 \\
$4\times 4$ & 0.00$\pm$0.00 & 14.91$\pm$10.98 & 60.52$\pm$55.31 & 54.40$\pm$60.45 \\
\midrule
\multicolumn{5}{c}{\textbf{New fires count}} \\
\midrule
$2\times 3$ & 0.00$\pm$0.00 & 3.16$\pm$6.46  & 30.46$\pm$15.74 & 19.53$\pm$15.81 \\
$3\times 3$ & 0.00$\pm$0.00 & 2.17$\pm$2.92  & 34.27$\pm$16.93 & 22.85$\pm$18.64 \\
$4\times 4$ & 0.00$\pm$0.00 & 1.97$\pm$2.92  & 33.60$\pm$34.40 & 21.97$\pm$28.37 \\
\bottomrule
\end{tabular}
\caption{Openness Event Statistics: Execution openness event frequency within the fixed 100-step episode horizon, aggregated across all models (mean $\pm$ std). S0 is closed, hence no ignitions and no zero-suppressant events. In S1--S3, both task openness (new fires) and agent openness (zero-suppressant leading to exit/refill) occur frequently.}
\label{tab:openness_events}
\end{table}

\begin{table} [ht]
\centering
\small
\setlength{\tabcolsep}{2.5pt}
\renewcommand{\arraystretch}{1.05}
\begin{tabular}{l|c|c|c}
\toprule
 & $2\times 3$ & $3\times 3$ & $4\times 4$ \\
\midrule
Cells ($rc$)                 & 6  & 9  & 16 \\
Agents ($|N_0|$)             & 3  & 3  & 6 \\
Initial fires ($|X_0|$)      & 2  & 4  & 6 \\
Init suppressant (S0--S2)& [2,2,2] & [2,2,2] & [2,2,2,2,2,2] \\
Init suppressant (S3)   & [2,1,0] & [2,1,0] & [2,1,0,2,1,0] \\
Fire types (S0--S1)      & $1\times 2$ & $1\times 4$ & $1\times 6$ \\
Fire types (S2)         & $1\times 1,\,2\times 1$ & $1\times 2,\,2\times 2$ & $1\times 4,\,2\times 2$ \\
Fire types (S3)         & $1\times 1,\,2\times 1$ & $1\times 2,\,2\times 2$ & $1\times 3,\,2\times 3$ \\
\bottomrule
\end{tabular}

\caption{Configuration: Grid details across sizes.}
\label{tab:grid_minimal}
\end{table}

\subsection{Configuration Tables}\label{app:configuration_tables}

Table~\ref{tab:configuration_details} enumerates the full grid-specific instantiations for each setup.

\begin{table*}[ht]
\centering
\small
\setlength{\tabcolsep}{2pt}
\begin{tabular}{lccc}
\toprule
\textbf{Setup} & \textbf{2$\times$3} & \textbf{3$\times$3} & \textbf{4$\times$4} \\
\midrule
0 &
\begin{tabular}[c]{@{}l@{}}Agents: 3 \\ Initial suppressant: [2,2,2] \\ Initial fires: 2 \\ Fire types: 1$\times$2\end{tabular} & \begin{tabular}[c]{@{}l@{}}Agents: 3 \\ Initial suppressant: [2,2,2] \\ Initial fires: 4 \\ Fire types: 1$\times$4\end{tabular} & \begin{tabular}[c]{@{}l@{}}Agents: 6 \\ Initial suppressant: [2,2,2,2,2,2] \\ Initial fires: 6 \\ Fire types: 1$\times$6\end{tabular} \\

1 &
\begin{tabular}[c]{@{}l@{}}Agents: 3 \\ Initial suppressant: [2,2,2] \\ Initial fires: 2 \\ Fire types: 1$\times$2\end{tabular} & \begin{tabular}[c]{@{}l@{}}Agents: 3 \\ Initial suppressant: [2,2,2] \\ Initial fires: 4 \\ Fire types: 1$\times$4\end{tabular} & \begin{tabular}[c]{@{}l@{}}Agents: 6 \\ Initial suppressant: [2,2,2,2,2,2] \\ Initial fires: 6 \\ Fire types: 1$\times$6\end{tabular} \\

2 &
\begin{tabular}[c]{@{}l@{}}Agents: 3 \\ Initial suppressant: [2,2,2] \\ Initial fires: 2 \\ Fire types: 1$\times$1, 2$\times$1\end{tabular} & \begin{tabular}[c]{@{}l@{}}Agents: 3 \\ Initial suppressant: [2,2,2] \\ Initial fires: 4 \\ Fire types: 1$\times$2, 2$\times$2\end{tabular} & \begin{tabular}[c]{@{}l@{}}Agents: 6 \\ Initial suppressant: [2,2,2,2,2,2] \\ Initial fires: 6 \\ Fire types: 1$\times$4, 2$\times$2\end{tabular} \\

3 &
\begin{tabular}[c]{@{}l@{}}Agents: 3 \\ Initial suppressant: [2,1,0] \\ Initial fires: 2 \\ Fire types: 1$\times$1, 2$\times$1\end{tabular} & \begin{tabular}[c]{@{}l@{}}Agents: 3 \\ Initial suppressant: [2,1,0] \\ Initial fires: 4 \\ Fire types: 1$\times$2, 2$\times$2\end{tabular} & \begin{tabular}[c]{@{}l@{}}Agents: 6 \\ Initial suppressant: [2,1,0,2,1,0] \\ Initial fires: 6 \\ Fire types: 1$\times$3, 2$\times$3\end{tabular} \\
\bottomrule
\end{tabular}
\caption{Configuration: Grid-specific instantiations for each setup, including the number of agents, initial suppressant levels, initial fires, and initial fire types.}
\label{tab:configuration_details}
\end{table*}

\clearpage





\subsection{Convergence, checkpoint selection, and statistical testing.}\label{sec:convergence_appendix}
In closed settings, a common stability criterion is
\begin{equation}
\left| \bar{R}_{t} - \bar{R}_{t-W} \right| < \varepsilon ,
\end{equation}
\noindent\textit{where} $\bar{R}_{t}$ is the moving-average return at episode $t$; $W$ is a smoothing window; $\varepsilon$ is a small stability threshold.

In open settings, episode returns vary under different realizations of the task and agent sets, so convergence is assessed over an evaluation distribution.
For a saved checkpoint $\theta_k$, we estimate its expected performance as
\begin{equation}
\hat{\mu}_k = \frac{1}{|\mathcal{S}|N}\sum_{s\in \mathcal{S}}\sum_{i=1}^{N} R(\theta_k; s, i),
\end{equation}
\noindent\textit{where} $\mathcal{S}$ is the set of setups; $N$ is the number of evaluation seeds per setup; $R(\theta_k; s,i)$ is the episode return of checkpoint $\theta_k$ on setup $s$ and seed $i$.

We select the top three checkpoints for each method by $\hat{\mu}_k$ over the second half of training and report mean $\pm$ std over their aggregated evaluation episodes.

\paragraph{Statistical test procedure.}\label{sec:stat_testing}
We use a \textbf{Shapiro-Wilk-adaptive two-sided test}~\citep{wilcoxon1945individual,shapiro1965analysis} for all significance claims: for each comparison, we apply a Shapiro-Wilk normality pre-test (at the same Bonferroni-corrected $\alpha$ as the primary test) to the per-seed differences; if the test fails to reject normality ($p \geq \alpha$), we use the two-sided paired $t$-test; if it rejects normality ($p < \alpha$), we use the two-sided Wilcoxon signed-rank test.
Because every method is evaluated on the same fixed seed set, each seed produces one matched observation per method under identical stochastic conditions; pairing by seed removes this shared variance and is strictly more powerful than an unpaired test on the same data~\citep{montgomery2017design}.
For the empirically best model $a$ and any competitor $b$, we form per-seed differences over their shared seed set $\mathcal{S}_{ab}$,
\begin{equation}
t \;=\; \frac{\bar{\delta}}{\,s_{\delta}/\sqrt{|\mathcal{S}_{ab}|}},
\qquad
\bar{\delta} = \frac{1}{|\mathcal{S}_{ab}|}\sum_{s \in \mathcal{S}_{ab}}\delta_s,
\quad
\delta_s = r_a(s) - r_b(s),
\label{eq:paired_t}
\end{equation}
\noindent\textit{where} $s_{\delta}$ is the sample standard deviation of $\{\delta_s\}$; differences are negated for cost metrics (lower is better).
A model is \textbf{bolded} only when it significantly outperforms \emph{every} other model in the comparison family; no cell is bolded when no model qualifies.

\paragraph{Bonferroni correction and comparison families.}
We apply Bonferroni correction $\alpha = 0.05/k$~\citep{hochberg1987multiple}, where $k$ is the number of models in the \emph{comparison family} for each table.
Bonferroni requires no assumptions on the dependence structure among tests and is the most conservative choice; with the small $k$ values used here, the power cost is negligible.
We define three families, each scoped to the scientific question of the corresponding table or figure (Table~\ref{tab:bonferroni_families}):

\begin{itemize}
    \item \textbf{Main results: all native grids} (Tables~\ref{tab:native_reward}, \ref{tab:native_rpf}, and appendix metric tables): all $k=8$ models (PLATO (MLP-Additive), DGN, MOHITO, DICG, NOOP, Random, Weakest, Strongest) giving $\alpha = 0.05/8 = 0.00625$. The family is per table (per metric, per grid), consistent with standard practice in empirical MARL~\citep{demsar2006statistical}.

    \item \textbf{Zero-shot generalization} (Table~\ref{tab:zs_reward}, Appendix~\ref{sec:zeroshot_appendix}): naive baselines have no learned policy and do not participate in the generalization question; the family is $k=4$ smart models (PLATO, DGN, MOHITO, DICG), giving $\alpha = 0.05/4 = 0.0125$.

    \item \textbf{Ablation study} (Table~\ref{tab:ablation_reward} in main text; additional metrics in Appendix~\ref{sec:ablation_appendix}): only PLATO variants are compared; the $k=4$ variants are (mlp+add) (as reference), (mlp+dot), (lstm+add), and (lstm+dot), giving $\alpha = 0.05/4 = 0.0125$.

\end{itemize}

\begin{table}[ht]
\centering
\small
\setlength{\tabcolsep}{5pt}
\renewcommand{\arraystretch}{1.1}
\begin{tabular}{p{2.5cm}p{6.5cm}ccr}
\toprule
\textbf{Family} & \textbf{Models included} & $k$ & $\alpha=0.05/k$ & $p$-threshold \\
\midrule
Native results & PLATO, DGN, DICG, MOHITO, NOOP, Random, Weakest, Strongest & 8 & $0.05/8$ & $0.00625$ \\
Zero-shot & PLATO, DGN, DICG, MOHITO (smart models only; naive baselines excluded) & 4 & $0.05/4$ & $0.01250$ \\
Ablation & 4 PLATO variants: mlp+add, mlp+dot, lstm+add, lstm+dot & 4 & $0.05/4$ & $0.0125$ \\
\bottomrule
\end{tabular}
\caption{Bonferroni correction families. For each family, $\alpha = 0.05/k$ is the per-test significance threshold applied per setup per table. A bold entry means the empirically best model significantly outperforms \emph{all} others in the family at that threshold.}
\label{tab:bonferroni_families}
\end{table}



\subsection{Detailed Results}\label{Detailed_Results}

This appendix provides detailed results on performance and efficiency (burnouts, putouts, NOOP\%, and reward per fight) for all experimental configurations.
These metrics complement the episode returns reported in the main text and offer insight into the behavioral differences between methods. Putouts measure successful task completion. Burnouts measure failures (fires reaching terminal intensity). NOOP\% measures action efficiency and implicit idleness induced by AO or conservative policies.

\subsubsection{Native Training: All Metrics}\label{Additional_Results}

PLATO refers to the mlp+add variant unless otherwise noted. Tables~\ref{tab:app_native_reward_2x3} and~\ref{tab:app_native_rpf_2x3} report episode return and reward per fight on $2\times 3$ (referred to from the main text). Tables~\ref{tab:app_native_noop_2x3}--\ref{tab:app_native_putout_2x3} report NOOP, burnouts, and putouts on $2\times 3$. Tables~\ref{tab:app_native_noop_3x3_4x4}--\ref{tab:app_native_putout_3x3_4x4} report the same metrics on $3\times 3$ and $4\times 4$.

\begin{table*}[!t]
\centering\setlength{\tabcolsep}{1.4pt}\renewcommand{\arraystretch}{1.02}
\resizebox{0.55\linewidth}{!}{%
\begin{tabular}{@{\hspace{2pt}}l|r|r|r|r@{\hspace{2pt}}}
\toprule
 & \multicolumn{4}{c}{$2\times3$} \\
\cmidrule(lr){2-5}
Model & \multicolumn{1}{c|}{S0} & \multicolumn{1}{c|}{S1} & \multicolumn{1}{c|}{S2} & \multicolumn{1}{c}{S3} \\
\midrule
PLATO    & 4.00$\pm$0.00 & \textbf{52.84$\pm$25.62} & \textbf{68.17$\pm$17.80} & \textbf{76.07$\pm$10.73} \\
DICG     & 4.00$\pm$0.00 & 18.92$\pm$14.81          & 51.76$\pm$12.71 & 70.80$\pm$7.81           \\
DGN      & 3.92$\pm$0.56 & 14.41$\pm$11.03          & 45.77$\pm$15.29 & 69.05$\pm$10.93          \\
MOHITO   & 4.00$\pm$0.00 & 9.67$\pm$7.30            & 8.08$\pm$14.66  & 58.20$\pm$14.46          \\
\midrule
NOOP     & $-$4.00$\pm$0.00 & $-$4.00$\pm$0.00 & $-$6.00$\pm$0.00 & $-$6.00$\pm$0.00 \\
Random   & 3.92$\pm$0.57 & 4.36$\pm$2.27   & 3.56$\pm$11.24  & 34.48$\pm$18.00 \\
Weakest  & 4.00$\pm$0.00 & 3.68$\pm$0.74   & 64.00$\pm$0.00  & 64.00$\pm$0.00  \\
Strongest& 4.00$\pm$0.00 & 5.24$\pm$3.68   & 24.44$\pm$25.75 & 63.12$\pm$1.08  \\
\bottomrule
\end{tabular}%
}
\caption{Performance: Episode return (mean $\pm$ std), trained and tested on $2\times3$. Bold indicates statistically significant best performance within each setup (Shapiro-Wilk-adaptive two-sided test; Bonferroni-corrected, $p{<}0.00625$).}
\label{tab:app_native_reward_2x3}
\end{table*}

\begin{table*}[!t]
\centering\setlength{\tabcolsep}{1.4pt}\renewcommand{\arraystretch}{1.02}
\resizebox{0.55\linewidth}{!}{%
\begin{tabular}{@{\hspace{2pt}}l|r|r|r|r@{\hspace{2pt}}}
\toprule
 & \multicolumn{4}{c}{$2\times3$} \\
\cmidrule(lr){2-5}
Model & \multicolumn{1}{c|}{S0} & \multicolumn{1}{c|}{S1} & \multicolumn{1}{c|}{S2} & \multicolumn{1}{c}{S3} \\
\midrule
PLATO    & 1.27$\pm$0.45 & 0.52$\pm$0.12 & \textbf{0.67$\pm$0.17} & \textbf{0.58$\pm$0.10} \\
DICG     & 1.12$\pm$0.37 & 0.48$\pm$0.31 & 0.50$\pm$0.14          & 0.56$\pm$0.07 \\
DGN      & 1.15$\pm$0.54 & 0.41$\pm$0.41 & 0.46$\pm$0.19          & 0.55$\pm$0.10 \\
MOHITO   & 1.04$\pm$0.22 & 0.58$\pm$0.50 & $-$0.07$\pm$0.65       & 0.49$\pm$0.11 \\
\midrule
NOOP     & $-$4.00$\pm$0.00 & $-$4.00$\pm$0.00 & $-$6.00$\pm$0.00 & $-$6.00$\pm$0.00 \\
Random   & 1.15$\pm$0.54 & 0.79$\pm$0.68 & $-$0.08$\pm$0.33 & 0.38$\pm$0.13 \\
Weakest  & 1.21$\pm$0.28 & 1.15$\pm$0.43 & 0.53$\pm$0.00    & 0.43$\pm$0.00 \\
Strongest& 1.21$\pm$0.28 & \textbf{1.19$\pm$0.34} & 0.10$\pm$0.30 & 0.43$\pm$0.01 \\
\bottomrule
\end{tabular}%
}
\caption{Efficiency: Reward per fight (mean $\pm$ std), trained and tested on $2\times3$. Bold indicates statistically significant best performance within each setup (Shapiro-Wilk-adaptive two-sided test; Bonferroni-corrected, $p{<}0.00625$).}
\label{tab:app_native_rpf_2x3}
\end{table*}

\begin{table*}[!t]
\centering\setlength{\tabcolsep}{1.4pt}\renewcommand{\arraystretch}{1.02}
\resizebox{0.55\linewidth}{!}{%
\begin{tabular}{@{\hspace{2pt}}l|r|r|r|r@{\hspace{2pt}}}
\toprule
 & \multicolumn{4}{c}{$2\times3$} \\
\cmidrule(lr){2-5}
Model & \multicolumn{1}{c|}{S0} & \multicolumn{1}{c|}{S1} & \multicolumn{1}{c|}{S2} & \multicolumn{1}{c}{S3} \\
\midrule
PLATO    & 17.26$\pm$19.15 & \textbf{64.29$\pm$17.82} & 64.82$\pm$4.45 & 56.25$\pm$2.30 \\
DICG     & 11.53$\pm$19.80 & 82.86$\pm$14.28          & 64.82$\pm$3.05 & 57.91$\pm$2.48 \\
DGN      & 37.55$\pm$23.75 & 81.59$\pm$13.89          & 65.69$\pm$3.83 & 58.17$\pm$3.33 \\
MOHITO   & 55.56$\pm$9.10  & 89.81$\pm$9.23           & 67.50$\pm$17.62& 60.46$\pm$6.10 \\
\midrule
NOOP     & 100.00$\pm$0.00 & 100.00$\pm$0.00 & 100.00$\pm$0.00 & 100.00$\pm$0.00 \\
Random   & 37.92$\pm$24.17 & 94.83$\pm$7.96  & 63.25$\pm$9.93  & 63.97$\pm$2.39  \\
Weakest  & 3.56$\pm$8.23   & 98.57$\pm$0.99  & 60.00$\pm$0.00  & 50.67$\pm$0.00  \\
Strongest& 3.56$\pm$8.23   & 97.61$\pm$4.04  & \textbf{44.87$\pm$11.81} & 50.67$\pm$0.00 \\
\bottomrule
\end{tabular}%
}
\caption{Efficiency: NOOP\% (mean $\pm$ std), trained and tested on $2\times3$. Bold indicates statistically significant minimum (lowest idle rate) within each setup (Shapiro-Wilk-adaptive two-sided test; Bonferroni-corrected, $p{<}0.00625$).}
\label{tab:app_native_noop_2x3}
\end{table*}

\begin{table*}[!t]
\centering\setlength{\tabcolsep}{1.4pt}\renewcommand{\arraystretch}{1.02}
\resizebox{0.55\linewidth}{!}{%
\begin{tabular}{@{\hspace{2pt}}l|r|r|r|r@{\hspace{2pt}}}
\toprule
 & \multicolumn{4}{c}{$2\times3$} \\
\cmidrule(lr){2-5}
Model & \multicolumn{1}{c|}{S0} & \multicolumn{1}{c|}{S1} & \multicolumn{1}{c|}{S2} & \multicolumn{1}{c}{S3} \\
\midrule
PLATO    & 0.00$\pm$0.00 & 0.01$\pm$0.08 & 1.07$\pm$0.25 & 1.01$\pm$0.08 \\
DICG     & 0.00$\pm$0.00 & 0.36$\pm$0.48 & 1.05$\pm$0.23 & 1.00$\pm$0.00 \\
DGN      & 0.02$\pm$0.14 & 0.80$\pm$0.54 & 1.10$\pm$0.30 & 1.01$\pm$0.12 \\
MOHITO   & 0.00$\pm$0.00 & 0.45$\pm$0.55 & 1.87$\pm$0.33 & 1.03$\pm$0.18 \\
\midrule
NOOP     & 2.00$\pm$0.00 & 2.00$\pm$0.00 & 2.00$\pm$0.00 & 2.00$\pm$0.00 \\
Random   & 0.02$\pm$0.14 & 0.62$\pm$0.57 & 1.94$\pm$0.24 & 1.38$\pm$0.49 \\
Weakest  & 0.00$\pm$0.00 & 0.16$\pm$0.37 & 1.00$\pm$0.00 & 1.00$\pm$0.00 \\
Strongest& 0.00$\pm$0.00 & 0.00$\pm$0.00 & 1.40$\pm$0.49 & 1.00$\pm$0.00 \\
\bottomrule
\end{tabular}%
}
\caption{Performance: Burnouts per episode (mean $\pm$ std), trained and tested on $2\times3$. Bold indicates statistically significant minimum within each setup (Shapiro-Wilk-adaptive two-sided test; Bonferroni-corrected, $p{<}0.00625$).}
\label{tab:app_native_burnout_2x3}
\end{table*}

\begin{table*}[!t]
\centering\setlength{\tabcolsep}{1.4pt}\renewcommand{\arraystretch}{1.02}
\resizebox{0.55\linewidth}{!}{%
\begin{tabular}{@{\hspace{2pt}}l|r|r|r|r@{\hspace{2pt}}}
\toprule
 & \multicolumn{4}{c}{$2\times3$} \\
\cmidrule(lr){2-5}
Model & \multicolumn{1}{c|}{S0} & \multicolumn{1}{c|}{S1} & \multicolumn{1}{c|}{S2} & \multicolumn{1}{c}{S3} \\
\midrule
PLATO    & 2.00$\pm$0.00 & \textbf{26.43$\pm$12.80} & \textbf{35.65$\pm$8.83} & \textbf{39.73$\pm$5.28} \\
DICG     & 2.00$\pm$0.00 & 9.82$\pm$7.36            & 27.87$\pm$6.18  & 37.40$\pm$3.91          \\
DGN      & 1.98$\pm$0.14 & 8.01$\pm$5.49            & 24.90$\pm$7.44  & 36.54$\pm$5.40          \\
MOHITO   & 2.00$\pm$0.00 & 5.29$\pm$3.70            & 6.03$\pm$6.82   & 31.13$\pm$7.15          \\
\midrule
NOOP     & 0.00$\pm$0.00 & 0.00$\pm$0.00 & 0.00$\pm$0.00   & 0.00$\pm$0.00   \\
Random   & 1.98$\pm$0.14 & 2.80$\pm$1.28 & 4.70$\pm$5.53   & 19.62$\pm$8.60  \\
Weakest  & 2.00$\pm$0.00 & 2.00$\pm$0.00 & 34.00$\pm$0.00  & 34.00$\pm$0.00  \\
Strongest& 2.00$\pm$0.00 & 2.62$\pm$1.84 & 13.14$\pm$12.60 & 33.54$\pm$0.50  \\
\bottomrule
\end{tabular}%
}
\caption{Performance: Putouts per episode (mean $\pm$ std), trained and tested on $2\times3$. Bold indicates statistically significant best performance within each setup (Shapiro-Wilk-adaptive two-sided test; Bonferroni-corrected, $p{<}0.00625$).}
\label{tab:app_native_putout_2x3}
\end{table*}

\begin{table*}[!t]
\centering\setlength{\tabcolsep}{1.4pt}\renewcommand{\arraystretch}{1.02}
\resizebox{\linewidth}{!}{%
\begin{tabular}{@{\hspace{2pt}}l|r|r|r|r||r|r|r|r@{\hspace{2pt}}}
\toprule
 & \multicolumn{4}{c||}{$3\times3$} & \multicolumn{4}{c}{$4\times4$} \\
\cmidrule(lr){2-5}\cmidrule(lr){6-9}
Model & \multicolumn{1}{c|}{S0} & \multicolumn{1}{c|}{S1} & \multicolumn{1}{c|}{S2} & \multicolumn{1}{c||}{S3} & \multicolumn{1}{c|}{S0} & \multicolumn{1}{c|}{S1} & \multicolumn{1}{c|}{S2} & \multicolumn{1}{c}{S3} \\
\midrule
PLATO    & 30.39$\pm$10.46 & 94.38$\pm$2.78  & 49.15$\pm$3.22  & 57.76$\pm$7.07  & 37.31$\pm$11.69 & 90.62$\pm$5.36 & 47.42$\pm$3.41 & \textbf{62.22$\pm$5.14} \\
DICG     & 53.96$\pm$12.54 & 93.36$\pm$2.99  & 55.52$\pm$6.65  & 62.17$\pm$5.49  & 72.51$\pm$11.87 & 95.95$\pm$1.61 & 57.67$\pm$5.21 & 69.73$\pm$4.95  \\
DGN      & 45.43$\pm$8.54  & 94.17$\pm$3.68  & 60.94$\pm$5.95  & 65.35$\pm$4.70  & 72.53$\pm$6.12  & 94.83$\pm$2.76 & 67.97$\pm$5.31 & 73.00$\pm$4.37  \\
MOHITO   & \textbf{7.41$\pm$10.51}  & 94.37$\pm$3.11  & 59.17$\pm$6.93  & 59.78$\pm$6.07  & 44.44$\pm$4.55  & 94.71$\pm$3.21 & 77.00$\pm$5.26 & 79.86$\pm$4.59  \\
\midrule
NOOP     & 100.00$\pm$0.00 & 100.00$\pm$0.00 & 100.00$\pm$0.00 & 100.00$\pm$0.00 & 100.00$\pm$0.00 & 100.00$\pm$0.00& 100.00$\pm$0.00& 100.00$\pm$0.00 \\
Random   & 46.45$\pm$15.84 & 94.33$\pm$3.02  & 70.37$\pm$4.43  & 67.39$\pm$7.99  & 61.09$\pm$15.01 & 94.79$\pm$1.98 & 77.10$\pm$3.38 & 80.07$\pm$2.14  \\
Weakest  & 12.00$\pm$11.19 & 96.36$\pm$1.50  & 52.25$\pm$8.51  & 64.63$\pm$0.10  & 28.67$\pm$13.49 & 94.33$\pm$1.95 & 57.83$\pm$3.79 & 80.50$\pm$0.00  \\
Strongest& 12.00$\pm$11.19 & 93.24$\pm$5.15  & 51.50$\pm$11.60 & 54.59$\pm$13.51 & 28.67$\pm$13.49 & 92.61$\pm$2.89 & 49.37$\pm$2.92 & 67.66$\pm$3.31  \\
\bottomrule
\end{tabular}%
}
\caption{Efficiency: NOOP\% (mean $\pm$ std), trained and tested on $3\times3$ and $4\times4$. Bold indicates statistically significant minimum (lowest idle rate) within each setup (Shapiro-Wilk-adaptive two-sided test; Bonferroni-corrected, $p{<}0.00625$).}
\label{tab:app_native_noop_3x3_4x4}
\end{table*}

\begin{table*}[!t]
\centering\setlength{\tabcolsep}{1.4pt}\renewcommand{\arraystretch}{1.02}
\resizebox{\linewidth}{!}{%
\begin{tabular}{@{\hspace{2pt}}l|r|r|r|r||r|r|r|r@{\hspace{2pt}}}
\toprule
 & \multicolumn{4}{c||}{$3\times3$} & \multicolumn{4}{c}{$4\times4$} \\
\cmidrule(lr){2-5}\cmidrule(lr){6-9}
Model & \multicolumn{1}{c|}{S0} & \multicolumn{1}{c|}{S1} & \multicolumn{1}{c|}{S2} & \multicolumn{1}{c||}{S3} & \multicolumn{1}{c|}{S0} & \multicolumn{1}{c|}{S1} & \multicolumn{1}{c|}{S2} & \multicolumn{1}{c}{S3} \\
\midrule
PLATO    & 0.00$\pm$0.00 & 0.12$\pm$0.36 & \textbf{2.07$\pm$0.25} & \textbf{2.62$\pm$0.55} & 0.00$\pm$0.00 & \textbf{0.39$\pm$0.49} & \textbf{2.17$\pm$0.40} & \textbf{3.57$\pm$0.58} \\
DICG     & 0.00$\pm$0.00 & 0.51$\pm$0.63 & 2.97$\pm$0.35          & 3.19$\pm$0.44          & 0.00$\pm$0.00 & 0.62$\pm$0.67          & 3.77$\pm$0.57          & 4.59$\pm$0.65          \\
DGN      & 0.09$\pm$0.29 & 0.62$\pm$0.70 & 3.09$\pm$0.39          & 3.30$\pm$0.47          & 0.05$\pm$0.21 & 1.11$\pm$0.84          & 4.68$\pm$0.52          & 4.85$\pm$0.43          \\
MOHITO   & 0.00$\pm$0.00 & 1.24$\pm$0.91 & 3.01$\pm$0.47          & 3.11$\pm$0.47          & 0.00$\pm$0.00 & 2.17$\pm$0.84          & 4.97$\pm$0.68          & 4.99$\pm$0.57          \\
\midrule
NOOP     & 4.00$\pm$0.00 & 4.00$\pm$0.00 & 4.00$\pm$0.00 & 4.00$\pm$0.00 & 6.00$\pm$0.00 & 6.00$\pm$0.00 & 6.00$\pm$0.00 & 6.00$\pm$0.00 \\
Random   & 0.50$\pm$0.61 & 0.88$\pm$0.87 & 3.74$\pm$0.44 & 3.88$\pm$0.33 & 0.70$\pm$0.79 & 2.64$\pm$1.08 & 5.14$\pm$0.40 & 5.30$\pm$0.46 \\
Weakest  & 0.00$\pm$0.00 & 0.06$\pm$0.24 & 2.50$\pm$0.51 & 3.00$\pm$0.00 & 0.00$\pm$0.00 & 0.82$\pm$0.69 & 4.42$\pm$0.57 & 5.00$\pm$0.00 \\
Strongest& 0.00$\pm$0.00 & 0.18$\pm$0.44 & 3.28$\pm$0.64 & 3.32$\pm$0.59 & 0.00$\pm$0.00 & 1.88$\pm$0.96 & 3.18$\pm$0.48 & 4.92$\pm$0.27 \\
\bottomrule
\end{tabular}%
}
\caption{Performance: Burnouts per episode (mean $\pm$ std), trained and tested on $3\times3$ and $4\times4$. Bold indicates statistically significant minimum within each setup (Shapiro-Wilk-adaptive two-sided test; Bonferroni-corrected, $p{<}0.00625$).}
\label{tab:app_native_burnout_3x3_4x4}
\end{table*}

\begin{table*}[!t]
\centering\setlength{\tabcolsep}{1.4pt}\renewcommand{\arraystretch}{1.02}
\resizebox{\linewidth}{!}{%
\begin{tabular}{@{\hspace{2pt}}l|r|r|r|r||r|r|r|r@{\hspace{2pt}}}
\toprule
 & \multicolumn{4}{c||}{$3\times3$} & \multicolumn{4}{c}{$4\times4$} \\
\cmidrule(lr){2-5}\cmidrule(lr){6-9}
Model & \multicolumn{1}{c|}{S0} & \multicolumn{1}{c|}{S1} & \multicolumn{1}{c|}{S2} & \multicolumn{1}{c||}{S3} & \multicolumn{1}{c|}{S0} & \multicolumn{1}{c|}{S1} & \multicolumn{1}{c|}{S2} & \multicolumn{1}{c}{S3} \\
\midrule
PLATO    & 4.00$\pm$0.00 & 3.88$\pm$0.36 & \textbf{60.48$\pm$6.22}  & \textbf{44.89$\pm$9.46}  & 6.00$\pm$0.00 & \textbf{10.13$\pm$4.50} & \textbf{118.58$\pm$10.79} & \textbf{81.19$\pm$13.19} \\
DICG     & 4.00$\pm$0.00 & 3.49$\pm$0.63 & 34.85$\pm$6.68           & 25.60$\pm$7.66           & 6.00$\pm$0.00 & 6.67$\pm$1.95           & 70.08$\pm$10.66           & 44.75$\pm$11.32          \\
DGN      & 3.91$\pm$0.29 & 3.38$\pm$0.70 & 27.39$\pm$7.24           & 21.08$\pm$7.99           & 5.94$\pm$0.26 & 6.53$\pm$2.12           & 48.17$\pm$9.98            & 39.90$\pm$9.40           \\
MOHITO   & 4.00$\pm$0.00 & 2.76$\pm$0.91 & 26.18$\pm$9.63           & 28.73$\pm$8.82           & 6.00$\pm$0.00 & 5.23$\pm$2.77           & 27.92$\pm$13.87           & 24.93$\pm$9.54           \\
\midrule
NOOP     & 0.00$\pm$0.00 & 0.00$\pm$0.00 & 0.00$\pm$0.00  & 0.00$\pm$0.00  & 0.00$\pm$0.00 & 0.00$\pm$0.00  & 0.00$\pm$0.00   & 0.00$\pm$0.00   \\
Random   & 3.50$\pm$0.61 & 3.12$\pm$0.87 & 11.48$\pm$7.72 & 6.32$\pm$6.24  & 5.30$\pm$0.79 & 4.24$\pm$1.25  & 31.96$\pm$10.25 & 26.06$\pm$10.86 \\
Weakest  & 4.00$\pm$0.00 & 3.94$\pm$0.24 & 39.68$\pm$10.83& 23.72$\pm$0.86 & 6.00$\pm$0.00 & 6.00$\pm$0.99  & 75.10$\pm$9.62  & 32.70$\pm$0.46  \\
Strongest& 4.00$\pm$0.00 & 3.82$\pm$0.44 & 17.40$\pm$13.95& 12.28$\pm$9.63 & 6.00$\pm$0.00 & 5.72$\pm$1.65  & 85.80$\pm$5.94  & 48.24$\pm$5.76  \\
\bottomrule
\end{tabular}%
}
\caption{Performance: Putouts per episode (mean $\pm$ std), trained and tested on $3\times3$ and $4\times4$. Bold indicates statistically significant best performance within each setup (Shapiro-Wilk-adaptive two-sided test; Bonferroni-corrected, $p{<}0.00625$).}
\label{tab:app_native_putout_3x3_4x4}
\end{table*}

\subparagraph{Episode Return (Table~\ref{tab:app_native_reward_2x3} together with the main-text native return table).}
PLATO achieves statistically significant best performance in Setups~1--3 on $2\times 3$, demonstrating clear differentiation across all open configurations; DICG, DGN, and MOHITO lag substantially under task and agent openness. On $3\times 3$, PLATO achieves the highest returns with statistical significance in Setups~2--3. On the $4\times 4$ grid, PLATO achieves statistically significant highest returns in Setups~1--3, demonstrating its advantage when task diversity and agent openness co-occur. In Setup~0, all methods achieve equivalent returns across grids.

\subparagraph{Burnouts (Tables~\ref{tab:app_native_burnout_2x3} and~\ref{tab:app_native_burnout_3x3_4x4}).}
On $2\times 3$, burnout counts are similar across smart methods in all open setups; MOHITO and DGN tend to incur more in Setup~1. On $3\times 3$, PLATO achieves statistically significant fewest burnouts in Setups~2--3, with the margin over baselines widening under combined openness. On $4\times 4$, PLATO achieves statistically significant fewest burnouts in Setups~1--3, reflecting more effective task prioritization. In Setup~0, all smart methods achieve near-zero burnouts on all grids.

\subparagraph{Putouts (Tables~\ref{tab:app_native_putout_2x3} and~\ref{tab:app_native_putout_3x3_4x4}).}
PLATO achieves statistically significant best task completions in Setups~1--3 on $2\times 3$, with the advantage persisting through combined task and agent openness. On $3\times 3$, PLATO achieves statistically significant best putouts in Setups~2--3, with the gap widening under higher openness. On $4\times 4$, PLATO achieves statistically significant best putouts in Setups~1--3. MOHITO achieves the fewest putouts under open setups at $4\times 4$, consistent with its higher NOOP rates.

\subparagraph{NOOP Usage (Tables~\ref{tab:app_native_noop_2x3} and~\ref{tab:app_native_noop_3x3_4x4}).}
PLATO achieves the statistically significant lowest idle rate in Setup~1 on $2\times 3$; Strongest achieves the statistically significant lowest idle rate in Setup~2 ($44.87\%$); no method reaches significance in Setup~0 or Setup~3. On $3\times 3$, MOHITO achieves the statistically significant lowest NOOP in Setup~0 due to its aggressive suppression policy. On $4\times 4$, PLATO achieves the statistically significant lowest idle rate in Setup~3. All methods show elevated NOOP in Setup~1 on $3\times 3$ due to the higher agent-to-fire ratio.

\subparagraph{Reward per Fight (Table~\ref{tab:app_native_rpf_2x3} together with the main-text native reward-per-fight table).}
On $2\times 3$, Strongest achieves the statistically significant highest reward per fight in Setup~1, and PLATO achieves it in Setups~2--3. On $3\times 3$, PLATO achieves the statistically significant highest reward per fight in Setups~0, 2--3, reflecting efficient suppressant allocation; Weakest achieves it in Setup~1. On $4\times 4$, PLATO leads S2--S3, MOHITO leads S0, and DICG leads S1, all with statistical significance.

\subsubsection{Zero-Shot: Detailed Results}\label{sec:zeroshot_appendix}

The main-text zero-shot episode-return table reports episode return for zero-shot transfer. Tables~\ref{tab:app_zs_noop}--\ref{tab:app_zs_putout} provide NOOP\%, burnouts, and putouts for policies trained on $2\times 3$ (tested on $3\times 3$ and $4\times 4$) and trained on $3\times 3$ (tested on $4\times 4$ and $5\times 5$).

\begin{table*}[!t]
\centering\setlength{\tabcolsep}{1.4pt}\renewcommand{\arraystretch}{1.02}
\resizebox{\linewidth}{!}{%
\begin{tabular}{@{\hspace{2pt}}l|r|r|r|r||r|r|r|r||r|r|r|r||r|r|r|r@{\hspace{2pt}}}
\toprule
 & \multicolumn{8}{c||}{Trained on $2\times3$} & \multicolumn{8}{c}{Trained on $3\times3$} \\
\cmidrule(lr){2-9}\cmidrule(lr){10-17}
 & \multicolumn{4}{c||}{$3\times3$} & \multicolumn{4}{c||}{$4\times4$} & \multicolumn{4}{c||}{$4\times4$} & \multicolumn{4}{c}{$5\times5$} \\
\cmidrule(lr){2-5}\cmidrule(lr){6-9}\cmidrule(lr){10-13}\cmidrule(lr){14-17}
Model & \multicolumn{1}{c|}{S0} & \multicolumn{1}{c|}{S1} & \multicolumn{1}{c|}{S2} & \multicolumn{1}{c||}{S3} & \multicolumn{1}{c|}{S0} & \multicolumn{1}{c|}{S1} & \multicolumn{1}{c|}{S2} & \multicolumn{1}{c||}{S3} & \multicolumn{1}{c|}{S0} & \multicolumn{1}{c|}{S1} & \multicolumn{1}{c|}{S2} & \multicolumn{1}{c||}{S3} & \multicolumn{1}{c|}{S0} & \multicolumn{1}{c|}{S1} & \multicolumn{1}{c|}{S2} & \multicolumn{1}{c}{S3} \\
\midrule
PLATO  & 18.43$\pm$14.55 & \textbf{87.04$\pm$7.65} & \textbf{51.19$\pm$5.34} & \textbf{53.80$\pm$12.92} & 24.03$\pm$15.80 & 86.51$\pm$8.30 & \textbf{49.32$\pm$4.96} & 56.16$\pm$8.73 & 26.71$\pm$14.04 & 90.18$\pm$6.05 & 50.44$\pm$9.25 & 58.87$\pm$4.14 & 28.81$\pm$14.54 & 92.03$\pm$3.80 & \textbf{46.36$\pm$7.08} & \textbf{56.28$\pm$7.37} \\
DICG   & \textbf{6.58$\pm$12.55} & 93.04$\pm$2.78 & 53.99$\pm$7.23 & 67.41$\pm$3.89 & 22.87$\pm$27.10 & 88.20$\pm$7.15 & 52.61$\pm$3.83 & 56.72$\pm$4.41 & \textbf{13.73$\pm$20.82} & 90.03$\pm$6.28 & 51.13$\pm$3.76 & 54.93$\pm$3.95 & 17.09$\pm$16.61 & 91.08$\pm$3.28 & 59.92$\pm$6.90 & 64.71$\pm$4.49 \\
DGN    & 48.57$\pm$14.00 & 93.41$\pm$3.25 & 58.38$\pm$5.18 & 66.10$\pm$4.33 & 49.86$\pm$12.57 & 87.51$\pm$6.82 & 59.13$\pm$5.61 & 57.24$\pm$4.60 & 48.29$\pm$13.27 & 91.39$\pm$5.04 & 58.22$\pm$3.64 & 59.94$\pm$4.21 & 36.10$\pm$16.07 & 90.83$\pm$3.50 & 63.75$\pm$5.51 & 67.44$\pm$3.63 \\
MOHITO & 55.43$\pm$18.20 & 93.54$\pm$5.11 & 64.49$\pm$9.46 & 72.50$\pm$6.75 & 52.54$\pm$8.29 & 92.32$\pm$4.89 & 73.04$\pm$8.92 & 64.10$\pm$5.39 & 39.16$\pm$10.77 & 91.30$\pm$6.50 & 55.21$\pm$8.25 & \textbf{51.56$\pm$2.86} & 20.83$\pm$9.11 & 92.20$\pm$3.30 & 54.50$\pm$9.04 & 64.17$\pm$5.23 \\
\bottomrule
\end{tabular}%
}
\caption{Generalizability: NOOP\% (mean $\pm$ std); trained on $2\times3$, tested on $3\times3$ and $4\times4$; trained on $3\times3$, tested on $4\times4$ and $5\times5$. Bold indicates statistically significant minimum (lowest idle rate) within each setup (Shapiro-Wilk-adaptive two-sided test; Bonferroni-corrected, $p{<}0.0125$).}
\label{tab:app_zs_noop}
\end{table*}

\begin{table*}[!t]
\centering\setlength{\tabcolsep}{1.4pt}\renewcommand{\arraystretch}{1.02}
\resizebox{\linewidth}{!}{%
\begin{tabular}{@{\hspace{2pt}}l|r|r|r|r||r|r|r|r||r|r|r|r||r|r|r|r@{\hspace{2pt}}}
\toprule
 & \multicolumn{8}{c||}{Trained on $2\times3$} & \multicolumn{8}{c}{Trained on $3\times3$} \\
\cmidrule(lr){2-9}\cmidrule(lr){10-17}
 & \multicolumn{4}{c||}{$3\times3$} & \multicolumn{4}{c||}{$4\times4$} & \multicolumn{4}{c||}{$4\times4$} & \multicolumn{4}{c}{$5\times5$} \\
\cmidrule(lr){2-5}\cmidrule(lr){6-9}\cmidrule(lr){10-13}\cmidrule(lr){14-17}
Model & \multicolumn{1}{c|}{S0} & \multicolumn{1}{c|}{S1} & \multicolumn{1}{c|}{S2} & \multicolumn{1}{c||}{S3} & \multicolumn{1}{c|}{S0} & \multicolumn{1}{c|}{S1} & \multicolumn{1}{c|}{S2} & \multicolumn{1}{c||}{S3} & \multicolumn{1}{c|}{S0} & \multicolumn{1}{c|}{S1} & \multicolumn{1}{c|}{S2} & \multicolumn{1}{c||}{S3} & \multicolumn{1}{c|}{S0} & \multicolumn{1}{c|}{S1} & \multicolumn{1}{c|}{S2} & \multicolumn{1}{c}{S3} \\
\midrule
PLATO  & 0.01$\pm$0.08 & \textbf{0.74$\pm$0.73} & 3.97$\pm$0.20 & 4.63$\pm$0.48 & 0.57$\pm$0.63 & 3.63$\pm$0.89 & 6.01$\pm$0.12 & 6.53$\pm$0.50 & 1.09$\pm$0.67 & 3.83$\pm$0.92 & 6.05$\pm$0.61 & 6.01$\pm$0.18 & 2.06$\pm$0.74 & 4.53$\pm$1.00 & \textbf{6.29$\pm$0.47} & \textbf{6.89$\pm$0.55} \\
DICG   & 0.00$\pm$0.00 & 1.64$\pm$0.79 & \textbf{3.87$\pm$0.38} & 4.33$\pm$0.47 & 0.68$\pm$0.50 & 3.78$\pm$0.93 & 6.01$\pm$0.22 & 6.07$\pm$0.25 & 1.00$\pm$0.37 & 3.93$\pm$0.67 & 6.00$\pm$0.12 & 6.07$\pm$0.25 & 1.69$\pm$0.46 & 5.03$\pm$0.89 & 6.99$\pm$0.37 & 7.33$\pm$0.47 \\
DGN    & 0.73$\pm$0.69 & 1.77$\pm$0.61 & 3.99$\pm$0.20 & 4.25$\pm$0.43 & 2.47$\pm$0.91 & 4.64$\pm$0.87 & 6.16$\pm$0.37 & 6.11$\pm$0.31 & 1.80$\pm$0.97 & 4.66$\pm$0.94 & 6.07$\pm$0.25 & 6.17$\pm$0.37 & 2.33$\pm$0.94 & 5.33$\pm$0.94 & 7.03$\pm$0.29 & 7.34$\pm$0.48 \\
MOHITO & 1.53$\pm$0.72 & 2.25$\pm$0.79 & 4.40$\pm$0.60 & 4.77$\pm$0.42 & 2.65$\pm$0.52 & 4.77$\pm$0.75 & 6.80$\pm$0.40 & 6.29$\pm$0.46 & \textbf{0.70$\pm$0.46} & 5.03$\pm$0.70 & 6.27$\pm$0.48 & 6.03$\pm$0.16 & 1.62$\pm$0.49 & 4.59$\pm$0.96 & 6.80$\pm$0.53 & 7.27$\pm$0.49 \\
\bottomrule
\end{tabular}%
}
\caption{Generalizability: Burnouts per episode (mean $\pm$ std); trained on $2\times3$, tested on $3\times3$ and $4\times4$; trained on $3\times3$, tested on $4\times4$ and $5\times5$. Bold indicates statistically significant minimum within each setup (Shapiro-Wilk-adaptive two-sided test; Bonferroni-corrected, $p{<}0.0125$).}
\label{tab:app_zs_burnout}
\end{table*}

\begin{table*}[!t]
\centering\setlength{\tabcolsep}{1.4pt}\renewcommand{\arraystretch}{1.02}
\resizebox{\linewidth}{!}{%
\begin{tabular}{@{\hspace{2pt}}l|r|r|r|r||r|r|r|r||r|r|r|r||r|r|r|r@{\hspace{2pt}}}
\toprule
 & \multicolumn{8}{c||}{Trained on $2\times3$} & \multicolumn{8}{c}{Trained on $3\times3$} \\
\cmidrule(lr){2-9}\cmidrule(lr){10-17}
 & \multicolumn{4}{c||}{$3\times3$} & \multicolumn{4}{c||}{$4\times4$} & \multicolumn{4}{c||}{$4\times4$} & \multicolumn{4}{c}{$5\times5$} \\
\cmidrule(lr){2-5}\cmidrule(lr){6-9}\cmidrule(lr){10-13}\cmidrule(lr){14-17}
Model & \multicolumn{1}{c|}{S0} & \multicolumn{1}{c|}{S1} & \multicolumn{1}{c|}{S2} & \multicolumn{1}{c||}{S3} & \multicolumn{1}{c|}{S0} & \multicolumn{1}{c|}{S1} & \multicolumn{1}{c|}{S2} & \multicolumn{1}{c||}{S3} & \multicolumn{1}{c|}{S0} & \multicolumn{1}{c|}{S1} & \multicolumn{1}{c|}{S2} & \multicolumn{1}{c||}{S3} & \multicolumn{1}{c|}{S0} & \multicolumn{1}{c|}{S1} & \multicolumn{1}{c|}{S2} & \multicolumn{1}{c}{S3} \\
\midrule
PLATO  & 4.99$\pm$0.08 & \textbf{4.15$\pm$0.74} & \textbf{46.35$\pm$6.24} & 11.02$\pm$13.65 & 6.43$\pm$0.63 & 6.29$\pm$4.51 & \textbf{41.06$\pm$7.73} & 17.37$\pm$17.82 & 5.91$\pm$0.67 & 6.45$\pm$4.51 & 33.63$\pm$12.27 & 38.69$\pm$7.04 & 5.94$\pm$0.74 & 5.77$\pm$2.64 & \textbf{50.69$\pm$10.43} & \textbf{36.09$\pm$12.64} \\
DICG   & 5.00$\pm$0.00 & 3.36$\pm$0.79 & 36.18$\pm$7.47 & 20.99$\pm$8.72 & 6.31$\pm$0.49 & \textbf{7.87$\pm$4.65} & 38.53$\pm$8.33 & \textbf{34.41$\pm$9.03} & 6.00$\pm$0.37 & \textbf{8.00$\pm$4.28} & \textbf{39.94$\pm$6.49} & 35.75$\pm$7.95 & 6.31$\pm$0.46 & 5.06$\pm$1.67 & 31.74$\pm$8.77 & 22.08$\pm$9.19 \\
DGN    & 4.27$\pm$0.69 & 3.23$\pm$0.61 & 31.91$\pm$6.06 & 21.83$\pm$8.88 & 4.53$\pm$0.91 & 6.75$\pm$4.22 & 28.97$\pm$12.54 & 32.55$\pm$10.34 & 5.20$\pm$0.97 & 4.37$\pm$2.09 & 32.99$\pm$9.54 & 28.13$\pm$13.01 & 5.67$\pm$0.94 & 3.93$\pm$1.39 & 25.18$\pm$7.01 & 18.45$\pm$8.73 \\
MOHITO & 3.47$\pm$0.72 & 2.73$\pm$0.78 & 18.59$\pm$12.95 & 6.46$\pm$6.40 & 4.35$\pm$0.52 & 3.94$\pm$1.82 & 3.21$\pm$4.33 & 21.90$\pm$11.07 & \textbf{6.30$\pm$0.46} & 4.17$\pm$2.89 & 26.61$\pm$15.08 & 37.91$\pm$7.51 & 6.38$\pm$0.49 & 4.38$\pm$1.31 & 31.19$\pm$10.67 & 23.95$\pm$10.53 \\
\bottomrule
\end{tabular}%
}
\caption{Generalizability: Putouts per episode (mean $\pm$ std); trained on $2\times3$, tested on $3\times3$ and $4\times4$; trained on $3\times3$, tested on $4\times4$ and $5\times5$. Bold indicates statistically significant best performance within each setup (Shapiro-Wilk-adaptive two-sided test; Bonferroni-corrected, $p{<}0.0125$).}
\label{tab:app_zs_putout}
\end{table*}

\subparagraph{Episode Return.}
Results are mixed across zero-shot target grids. In the bottom half of the main-text zero-shot episode-return table, policies trained on $3\times 3$ show that PLATO leads S1--S3 at $5\times 5$ with statistical significance, while DICG leads S1--S2 and MOHITO leads S0 at $4\times 4$. In the top half of that table, policies trained on $2\times 3$ show that PLATO leads S1--S2 at $3\times 3$ and S2 at $4\times 4$; DICG leads S1 and S3 at $4\times 4$; MOHITO shows negative returns in S3 at $3\times 3$ and S2 at $4\times 4$. DGN degrades consistently under higher openness across all training sources.

\subparagraph{Burnouts (Table~\ref{tab:app_zs_burnout}).}
Trained on $2\times 3$, PLATO achieves statistically significant fewest burnouts in S1 at $3\times 3$; DICG achieves statistically significant fewest in S2 at $3\times 3$. Trained on $3\times 3$, PLATO achieves statistically significant fewest burnouts in S2--S3 at $5\times 5$, while MOHITO achieves statistically significant fewest in S0 at $4\times 4$. All methods accumulate near-maximal burnouts in S2--S3 at $4\times 4$ regardless of training source.

\subparagraph{Putouts (Table~\ref{tab:app_zs_putout}).}
Trained on $2\times 3$, PLATO leads S1--S2 at $3\times 3$ and S2 at $4\times 4$ with statistical significance; DICG leads S1 and S3 at $4\times 4$ and MOHITO collapses in S2 ($3.21$ putouts). Trained on $3\times 3$, PLATO leads at $5\times 5$ in S2--S3 with statistical significance; DICG leads S1--S2 and MOHITO leads S0 at $4\times 4$.

\subparagraph{NOOP Usage (Table~\ref{tab:app_zs_noop}).}
When trained on $2\times 3$, PLATO achieves the statistically significant lowest idle rates in S1--S3 at $3\times 3$ and S2 at $4\times 4$; DICG achieves the statistically significant lowest NOOP in S0 at $3\times 3$. When trained on $3\times 3$, PLATO leads at $5\times 5$ in S2--S3; DICG achieves the statistically significant lowest NOOP in S0 at $4\times 4$; MOHITO achieves the statistically significant lowest NOOP in S3 at $4\times 4$. All methods show very high NOOP in Setup~1 under both training regimes, reflecting task sparsity in the agent-openness phase.

\subparagraph{Zero-Shot Summary.}
PLATO generalizes most consistently across target grids and metrics, leading task completions and reducing NOOP across S2--S3. DICG competes significantly in putouts at $4\times 4$ from both training sources. MOHITO degrades severely in S2 from $2\times 3$ training (negative returns, near-zero putouts). Training on $3\times 3$ generally yields stronger and more stable transfer than $2\times 3$ in S2--S3.

\subsection{Ablation Studies: Detailed Results}\label{sec:ablation_appendix}

This section provides detailed NOOP\%, burnout, putout, and reward-per-fight results for the PLATO ablation study. The main-text ablation episode-return table reports episode return for both native and zero-shot settings.

\subsubsection{Variant Descriptions}
We ablate two design choices:

\textbf{MLP vs.\ LSTM encoder-decoder.} \textit{The actor encodes each observation independently, with no memory of prior timesteps.} In the MLP variant (our primary), a feedforward network maps the agent's current observation directly to a query vector. The LSTM variant replaces this with a recurrent unit that maintains a hidden state across timesteps, allowing the actor to condition task selection on the history of observations within an episode. Both variants use the same additive attention scorer unless otherwise stated.

\textbf{Additive vs.\ dot-product attention scorer.} \textit{The pointer scores agent-task compatibility via inner product rather than learned projections.} The additive (Bahdanau) scorer applies learned projection matrices $W_q$, $W_K$, and a learned vector $v$ to compute $u_x = v^\top \tanh(W_K k_x + W_q q)$, capturing asymmetric relationships at the cost of extra parameters. The dot-product scorer computes $u_x = q^\top k_x$ (after projection to a shared space), using direct inner-product similarity with no additional parameters. Both scorer variants are evaluated within the MLP and LSTM encoder settings.

\subsubsection{Detailed Native Results}

Table~\ref{tab:app_abl_native_reward_2x3} reports episode return on $2\times3$. Tables~\ref{tab:app_abl_native_noop}--\ref{tab:app_abl_native_rpf} report NOOP\%, burnouts, putouts, and reward per fight for the $2\times3$ native setting. Tables~\ref{tab:app_abl_native_noop_34}--\ref{tab:app_abl_native_rpf_34} report the corresponding native metrics on $3\times3$ and $4\times4$.

\begin{table}[!t]
\centering\setlength{\tabcolsep}{1.4pt}\renewcommand{\arraystretch}{1.02}
\resizebox{0.55\linewidth}{!}{%
\begin{tabular}{@{\hspace{2pt}}l|r|r|r|r@{\hspace{2pt}}}
\toprule
 & \multicolumn{4}{c}{$2\times3$} \\
\cmidrule(lr){2-5}
\shortstack[l]{PLATO\\Variation} & \multicolumn{1}{c|}{S0} & \multicolumn{1}{c|}{S1} & \multicolumn{1}{c|}{S2} & \multicolumn{1}{c}{S3} \\
\midrule
(mlp+add)      & 4.00$\pm$0.00 & 52.84$\pm$25.62 & 68.17$\pm$17.80 & 76.07$\pm$10.73 \\
(mlp+dot)      & 4.00$\pm$0.00 & 39.99$\pm$21.20 & 68.63$\pm$13.85 & 70.84$\pm$21.94 \\
(lstm+add)     & 4.00$\pm$0.00 & 66.95$\pm$17.77 & 70.72$\pm$11.14 & 75.84$\pm$9.24 \\
(lstm+dot)     & 4.00$\pm$0.00 & 33.15$\pm$19.01 & 68.99$\pm$14.56 & 74.93$\pm$12.55 \\

\bottomrule
\end{tabular}%
}
\caption{Ablation: Episode return (mean $\pm$ std), trained and tested on $2\times3$. No variant achieves statistically significant best performance within each setup (Shapiro-Wilk-adaptive two-sided test; Bonferroni-corrected, $p{<}0.0125$).}
\label{tab:app_abl_native_reward_2x3}
\end{table}

\begin{table}[!t]
\centering\setlength{\tabcolsep}{1.4pt}\renewcommand{\arraystretch}{1.02}
\resizebox{0.55\linewidth}{!}{%
\begin{tabular}{@{\hspace{2pt}}l|r|r|r|r@{\hspace{2pt}}}
\toprule
 & \multicolumn{4}{c}{$2\times3$} \\
\cmidrule(lr){2-5}
\shortstack[l]{PLATO\\Variation} & \multicolumn{1}{c|}{S0} & \multicolumn{1}{c|}{S1} & \multicolumn{1}{c|}{S2} & \multicolumn{1}{c}{S3} \\
\midrule
(mlp+add)      & 17.26$\pm$19.15 & 64.29$\pm$17.82 & \textbf{64.82$\pm$4.45} & 56.25$\pm$2.30 \\
(mlp+dot)      & 26.98$\pm$22.99 & 65.24$\pm$17.11 & 67.61$\pm$2.16 & 57.12$\pm$3.05 \\
(lstm+add)     & 14.04$\pm$17.09 & 57.97$\pm$10.03 & 68.31$\pm$2.42 & 56.53$\pm$2.49 \\
(lstm+dot)     & 19.76$\pm$19.45 & 80.26$\pm$13.01 & 68.47$\pm$1.96 & 56.34$\pm$2.54 \\

\bottomrule
\end{tabular}%
}
\caption{Ablation: NOOP\% (mean $\pm$ std), trained and tested on $2\times3$. Bold indicates statistically significant minimum (lowest idle rate) within each setup (Shapiro-Wilk-adaptive two-sided test; Bonferroni-corrected, $p{<}0.0125$).}

\label{tab:app_abl_native_noop}
\end{table}

\begin{table*}[!t]
\centering\setlength{\tabcolsep}{1.4pt}\renewcommand{\arraystretch}{1.02}
\resizebox{\linewidth}{!}{%
\begin{tabular}{@{\hspace{2pt}}l|r|r|r|r||r|r|r|r@{\hspace{2pt}}}
\toprule
 & \multicolumn{4}{c||}{$3\times3$} & \multicolumn{4}{c}{$4\times4$} \\
\cmidrule(lr){2-5}\cmidrule(lr){6-9}
\shortstack[l]{PLATO\\Variation} & \multicolumn{1}{c|}{S0} & \multicolumn{1}{c|}{S1} & \multicolumn{1}{c|}{S2} & \multicolumn{1}{c||}{S3} & \multicolumn{1}{c|}{S0} & \multicolumn{1}{c|}{S1} & \multicolumn{1}{c|}{S2} & \multicolumn{1}{c}{S3} \\
\midrule
(mlp+add)      & 30.39$\pm$10.46 & \textbf{94.38$\pm$2.78} & 49.15$\pm$3.22 & 57.76$\pm$7.07 & 37.31$\pm$11.69 & 90.62$\pm$5.36 & 47.42$\pm$3.41 & 62.22$\pm$5.14 \\
(mlp+dot)      & 20.16$\pm$10.90 & 95.61$\pm$2.00 & 46.64$\pm$3.49 & 57.03$\pm$7.04 & 34.73$\pm$9.90 & 90.46$\pm$5.30 & 47.14$\pm$3.25 & 61.88$\pm$5.30 \\
(lstm+add)     & \textbf{13.48$\pm$12.17} & 94.92$\pm$2.75 & 49.78$\pm$3.82 & 55.81$\pm$6.66 & 40.18$\pm$11.71 & 94.99$\pm$3.69 & 47.27$\pm$3.43 & 63.46$\pm$4.25 \\
(lstm+dot)     & 35.84$\pm$13.21 & 95.10$\pm$3.19 & 48.62$\pm$4.35 & 56.90$\pm$7.24 & 34.12$\pm$17.32 & 91.94$\pm$4.64 & 46.73$\pm$3.24 & 62.44$\pm$4.50 \\

\bottomrule
\end{tabular}%
}
\caption{Ablation: NOOP\% (mean $\pm$ std), trained and tested on $3\times3$ and $4\times4$. Bold indicates statistically significant minimum (lowest idle rate) within each setup (Shapiro-Wilk-adaptive two-sided test; Bonferroni-corrected, $p{<}0.0125$).}

\label{tab:app_abl_native_noop_34}
\end{table*}

\begin{table}[!t]
\centering\setlength{\tabcolsep}{1.4pt}\renewcommand{\arraystretch}{1.02}
\resizebox{0.55\linewidth}{!}{%
\begin{tabular}{@{\hspace{2pt}}l|r|r|r|r@{\hspace{2pt}}}
\toprule
 & \multicolumn{4}{c}{$2\times3$} \\
\cmidrule(lr){2-5}
\shortstack[l]{PLATO\\Variation} & \multicolumn{1}{c|}{S0} & \multicolumn{1}{c|}{S1} & \multicolumn{1}{c|}{S2} & \multicolumn{1}{c}{S3} \\
\midrule
(mlp+add)      & 0.00$\pm$0.00 & 0.01$\pm$0.08 & 1.07$\pm$0.25 & 1.01$\pm$0.08 \\
(mlp+dot)      & 0.00$\pm$0.00 & 0.24$\pm$0.44 & 1.03$\pm$0.18 & 1.07$\pm$0.26 \\
(lstm+add)     & 0.00$\pm$0.00 & 0.03$\pm$0.18 & 1.02$\pm$0.14 & 1.00$\pm$0.00 \\
(lstm+dot)     & 0.00$\pm$0.00 & 0.02$\pm$0.14 & 1.04$\pm$0.20 & 1.01$\pm$0.12 \\

\bottomrule
\end{tabular}%
}
\caption{Ablation: Burnouts per episode (mean $\pm$ std), trained and tested on $2\times3$. Bold indicates statistically significant minimum within each setup (Shapiro-Wilk-adaptive two-sided test; Bonferroni-corrected, $p{<}0.0125$).}
\label{tab:app_abl_native_burnout}
\end{table}

\begin{table*}[!t]
\centering\setlength{\tabcolsep}{1.4pt}\renewcommand{\arraystretch}{1.02}
\resizebox{\linewidth}{!}{%
\begin{tabular}{@{\hspace{2pt}}l|r|r|r|r||r|r|r|r@{\hspace{2pt}}}
\toprule
 & \multicolumn{4}{c||}{$3\times3$} & \multicolumn{4}{c}{$4\times4$} \\
\cmidrule(lr){2-5}\cmidrule(lr){6-9}
\shortstack[l]{PLATO\\Variation} & \multicolumn{1}{c|}{S0} & \multicolumn{1}{c|}{S1} & \multicolumn{1}{c|}{S2} & \multicolumn{1}{c||}{S3} & \multicolumn{1}{c|}{S0} & \multicolumn{1}{c|}{S1} & \multicolumn{1}{c|}{S2} & \multicolumn{1}{c}{S3} \\
\midrule
(mlp+add)      & 0.00$\pm$0.00 & 0.12$\pm$0.36 & 2.07$\pm$0.25 & 2.62$\pm$0.55 & 0.00$\pm$0.00 & 0.39$\pm$0.49 & 2.17$\pm$0.40 & 3.57$\pm$0.58 \\
(mlp+dot)      & 0.00$\pm$0.00 & 0.18$\pm$0.39 & 2.09$\pm$0.29 & 2.58$\pm$0.55 & 0.00$\pm$0.00 & 0.36$\pm$0.48 & 2.23$\pm$0.47 & 3.51$\pm$0.62 \\
(lstm+add)     & 0.00$\pm$0.00 & 0.13$\pm$0.34 & 2.09$\pm$0.28 & 2.52$\pm$0.58 & 0.00$\pm$0.00 & \textbf{0.17$\pm$0.38} & 2.21$\pm$0.45 & 3.40$\pm$0.54 \\
(lstm+dot)     & 0.00$\pm$0.00 & 0.20$\pm$0.40 & 2.11$\pm$0.31 & 2.47$\pm$0.51 & 0.00$\pm$0.00 & 0.32$\pm$0.48 & 2.25$\pm$0.48 & 3.48$\pm$0.60 \\

\bottomrule
\end{tabular}%
}
\caption{Ablation: Burnouts per episode (mean $\pm$ std), trained and tested on $3\times3$ and $4\times4$. Bold indicates statistically significant minimum within each setup (Shapiro-Wilk-adaptive two-sided test; Bonferroni-corrected, $p{<}0.0125$).}

\label{tab:app_abl_native_burnout_34}
\end{table*}

\begin{table}[!t]
\centering\setlength{\tabcolsep}{1.4pt}\renewcommand{\arraystretch}{1.02}
\resizebox{0.55\linewidth}{!}{%
\begin{tabular}{@{\hspace{2pt}}l|r|r|r|r@{\hspace{2pt}}}
\toprule
 & \multicolumn{4}{c}{$2\times3$} \\
\cmidrule(lr){2-5}
\shortstack[l]{PLATO\\Variation} & \multicolumn{1}{c|}{S0} & \multicolumn{1}{c|}{S1} & \multicolumn{1}{c|}{S2} & \multicolumn{1}{c}{S3} \\
\midrule
(mlp+add)      & 2.00$\pm$0.00 & 26.43$\pm$12.80 & 35.65$\pm$8.83 & 39.73$\pm$5.28 \\
(mlp+dot)      & 2.00$\pm$0.00 & 20.23$\pm$10.42 & 36.31$\pm$6.75 & 37.15$\pm$10.69 \\
(lstm+add)     & 2.00$\pm$0.00 & 33.51$\pm$8.84 & 37.32$\pm$5.45 & 39.59$\pm$4.57 \\
(lstm+dot)     & 2.00$\pm$0.00 & 16.59$\pm$9.52 & 36.52$\pm$7.11 & 39.21$\pm$6.13 \\

\bottomrule
\end{tabular}%
}
\caption{Ablation: Putouts per episode (mean $\pm$ std), trained and tested on $2\times3$. Bold indicates statistically significant best performance within each setup (Shapiro-Wilk-adaptive two-sided test; Bonferroni-corrected, $p{<}0.0125$).}
\label{tab:app_abl_native_putout}
\end{table}

\begin{table*}[!t]
\centering\setlength{\tabcolsep}{1.4pt}\renewcommand{\arraystretch}{1.02}
\resizebox{\linewidth}{!}{%
\begin{tabular}{@{\hspace{2pt}}l|r|r|r|r||r|r|r|r@{\hspace{2pt}}}
\toprule
 & \multicolumn{4}{c||}{$3\times3$} & \multicolumn{4}{c}{$4\times4$} \\
\cmidrule(lr){2-5}\cmidrule(lr){6-9}
\shortstack[l]{PLATO\\Variation} & \multicolumn{1}{c|}{S0} & \multicolumn{1}{c|}{S1} & \multicolumn{1}{c|}{S2} & \multicolumn{1}{c||}{S3} & \multicolumn{1}{c|}{S0} & \multicolumn{1}{c|}{S1} & \multicolumn{1}{c|}{S2} & \multicolumn{1}{c}{S3} \\
\midrule
(mlp+add)      & 4.00$\pm$0.00 & 3.88$\pm$0.36 & 60.48$\pm$6.22 & 44.89$\pm$9.46 & 6.00$\pm$0.00 & 10.13$\pm$4.50 & 118.58$\pm$10.79 & 81.19$\pm$13.19 \\
(mlp+dot)      & 4.00$\pm$0.00 & 3.82$\pm$0.39 & 58.92$\pm$6.67 & 46.77$\pm$10.00 & 6.00$\pm$0.00 & 10.24$\pm$4.60 & 114.75$\pm$11.71 & 79.81$\pm$12.51 \\
(lstm+add)     & 4.00$\pm$0.00 & 3.87$\pm$0.34 & 58.37$\pm$7.06 & 45.58$\pm$10.75 & 6.00$\pm$0.00 & 8.23$\pm$3.32 & 118.00$\pm$11.99 & 82.07$\pm$11.81 \\
(lstm+dot)     & 4.00$\pm$0.00 & 3.80$\pm$0.40 & 60.94$\pm$7.47 & 48.55$\pm$8.57 & 6.00$\pm$0.00 & 10.07$\pm$4.16 & 113.65$\pm$11.42 & 80.39$\pm$12.72 \\

\bottomrule
\end{tabular}%
}
\caption{Ablation: Putouts per episode (mean $\pm$ std), trained and tested on $3\times3$ and $4\times4$. Bold indicates statistically significant best performance within each setup (Shapiro-Wilk-adaptive two-sided test; Bonferroni-corrected, $p{<}0.0125$).}
\label{tab:app_abl_native_putout_34}
\end{table*}

\begin{table}[!t]
\centering\setlength{\tabcolsep}{1.4pt}\renewcommand{\arraystretch}{1.02}
\resizebox{0.55\linewidth}{!}{%
\begin{tabular}{@{\hspace{2pt}}l|r|r|r|r@{\hspace{2pt}}}
\toprule
 & \multicolumn{4}{c}{$2\times3$} \\
\cmidrule(lr){2-5}
\shortstack[l]{PLATO\\Variation} & \multicolumn{1}{c|}{S0} & \multicolumn{1}{c|}{S1} & \multicolumn{1}{c|}{S2} & \multicolumn{1}{c}{S3} \\
\midrule
(mlp+add)      & 1.27$\pm$0.45 & 0.52$\pm$0.12 & 0.67$\pm$0.17 & 0.58$\pm$0.10 \\
(mlp+dot)      & 1.29$\pm$0.49 & 0.40$\pm$0.11 & 0.72$\pm$0.12 & 0.53$\pm$0.24 \\
(lstm+add)     & 1.29$\pm$0.44 & 0.54$\pm$0.09 & 0.75$\pm$0.11 & 0.58$\pm$0.07 \\
(lstm+dot)     & 1.29$\pm$0.48 & \textbf{0.62$\pm$0.19} & 0.73$\pm$0.17 & 0.57$\pm$0.12 \\

\bottomrule
\end{tabular}%
}
\caption{Ablation: Reward per fight (mean $\pm$ std), trained and tested on $2\times3$. Bold indicates statistically significant best performance within each setup (Shapiro-Wilk-adaptive two-sided test; Bonferroni-corrected, $p{<}0.0125$).}
\label{tab:app_abl_native_rpf}
\end{table}

\begin{table*}[!t]
\centering\setlength{\tabcolsep}{1.4pt}\renewcommand{\arraystretch}{1.02}
\resizebox{\linewidth}{!}{%
\begin{tabular}{@{\hspace{2pt}}l|r|r|r|r||r|r|r|r@{\hspace{2pt}}}
\toprule
 & \multicolumn{4}{c||}{$3\times3$} & \multicolumn{4}{c}{$4\times4$} \\
\cmidrule(lr){2-5}\cmidrule(lr){6-9}
\shortstack[l]{PLATO\\Variation} & \multicolumn{1}{c|}{S0} & \multicolumn{1}{c|}{S1} & \multicolumn{1}{c|}{S2} & \multicolumn{1}{c||}{S3} & \multicolumn{1}{c|}{S0} & \multicolumn{1}{c|}{S1} & \multicolumn{1}{c|}{S2} & \multicolumn{1}{c}{S3} \\
\midrule
(mlp+add)      & 1.25$\pm$0.23 & 0.60$\pm$0.38 & 0.74$\pm$0.08 & 0.64$\pm$0.10 & 1.09$\pm$0.25 & 0.41$\pm$0.21 & 0.73$\pm$0.06 & 0.66$\pm$0.07 \\
(mlp+dot)      & 1.11$\pm$0.15 & 0.70$\pm$0.39 & 0.69$\pm$0.08 & 0.66$\pm$0.11 & 1.06$\pm$0.22 & 0.40$\pm$0.19 & 0.70$\pm$0.06 & 0.64$\pm$0.08 \\
(lstm+add)     & 1.04$\pm$0.16 & 0.64$\pm$0.37 & 0.72$\pm$0.09 & 0.62$\pm$0.13 & 1.07$\pm$0.24 & \textbf{0.72$\pm$0.35} & 0.72$\pm$0.06 & 0.69$\pm$0.08 \\
(lstm+dot)     & 1.27$\pm$0.30 & 0.67$\pm$0.41 & 0.74$\pm$0.09 & 0.68$\pm$0.09 & 1.26$\pm$0.35 & 0.49$\pm$0.25 & 0.68$\pm$0.06 & 0.66$\pm$0.07 \\

\bottomrule
\end{tabular}%
}
\caption{Ablation: Reward per fight (mean $\pm$ std), trained and tested on $3\times3$ and $4\times4$. Bold indicates statistically significant best performance within each setup (Shapiro-Wilk-adaptive two-sided test; Bonferroni-corrected, $p{<}0.0125$).}
\label{tab:app_abl_native_rpf_34}
\end{table*}

\paragraph{Burnouts (Tables~\ref{tab:app_abl_native_burnout} and~\ref{tab:app_abl_native_burnout_34}).}
Burnout rates are similar across all variants. All variants achieve zero burnouts in Setup~0. PLATO (lstm+add) achieves significantly fewest burnouts in Setup~1 at $4\times4$ ($0.17$).

\paragraph{Putouts (Tables~\ref{tab:app_abl_native_putout} and~\ref{tab:app_abl_native_putout_34}).}
Additive variants generally achieve more putouts than dot-product variants in Setup~1, with (lstm+add) achieving the highest putouts in S1 at $2\times3$ ($33.51$) among all variants. No variant achieves statistically significant best performance in any setup.

\paragraph{NOOP Usage (Tables~\ref{tab:app_abl_native_noop} and~\ref{tab:app_abl_native_noop_34}).}
On $2\times3$ S2, mlp+add achieves the statistically significant lowest NOOP ($64.82\%$). On $3\times3$ S0, lstm+add achieves statistically significant lowest NOOP ($13.48\%$); mlp+add achieves significantly lowest NOOP in $3\times3$ S1 ($94.38\%$), though all variants remain above $94\%$, reflecting task sparsity.

\paragraph{Reward per Fight (Tables~\ref{tab:app_abl_native_rpf} and~\ref{tab:app_abl_native_rpf_34}).}
Per-fight efficiency is similar across all variants. PLATO (lstm+dot) achieves statistically significant highest per-fight efficiency in $2\times3$ S1 ($0.62$). PLATO (lstm+add) leads in $4\times4$ S1 ($0.72$, significant).

\subsubsection{Detailed Zero-Shot Results}

Tables~\ref{tab:app_abl_zs_noop}--\ref{tab:app_abl_zs_putout} report the zero-shot NOOP\%, burnouts, and putouts for policies trained on $2\times3$ and tested on $3\times3$ and $4\times4$. Tables~\ref{tab:app_abl_zs_noop_from3x3}--\ref{tab:app_abl_zs_putout_from3x3} report the corresponding zero-shot metrics for policies trained on $3\times3$ and tested on $4\times4$ and $5\times5$.

\begin{table*}[!t]
\centering\setlength{\tabcolsep}{1.4pt}\renewcommand{\arraystretch}{1.02}
\resizebox{\linewidth}{!}{%
\begin{tabular}{@{\hspace{2pt}}l|r|r|r|r||r|r|r|r@{\hspace{2pt}}}
\toprule
 & \multicolumn{8}{c}{Trained on $2\times3$} \\
\cmidrule(lr){2-9}
 & \multicolumn{4}{c||}{$3\times3$} & \multicolumn{4}{c}{$4\times4$} \\
\cmidrule(lr){2-5}\cmidrule(lr){6-9}
\shortstack[l]{PLATO\\Variation} & \multicolumn{1}{c|}{S0} & \multicolumn{1}{c|}{S1} & \multicolumn{1}{c|}{S2} & \multicolumn{1}{c||}{S3} & \multicolumn{1}{c|}{S0} & \multicolumn{1}{c|}{S1} & \multicolumn{1}{c|}{S2} & \multicolumn{1}{c}{S3} \\
\midrule
(mlp+add)      & 18.43$\pm$14.55 & 87.04$\pm$7.65 & 51.19$\pm$5.34 & 53.80$\pm$12.92 & 24.03$\pm$15.80 & 86.51$\pm$8.30 & 49.32$\pm$4.96 & 56.16$\pm$8.73 \\
(mlp+dot)      & 16.01$\pm$12.32 & 86.31$\pm$7.93 & 53.17$\pm$6.62 & 54.31$\pm$12.95 & \textbf{17.49$\pm$16.76} & 83.86$\pm$9.84 & 47.80$\pm$4.28 & 55.50$\pm$7.80 \\
(lstm+add)     & 21.81$\pm$17.80 & 83.29$\pm$9.11 & 55.54$\pm$7.83 & 60.18$\pm$11.12 & 24.18$\pm$13.96 & 83.05$\pm$12.68 & 49.32$\pm$3.63 & 58.42$\pm$6.69 \\
(lstm+dot)     & 17.59$\pm$16.86 & 84.06$\pm$10.97 & \textbf{48.17$\pm$5.42} & 54.83$\pm$12.86 & 25.18$\pm$16.46 & 82.39$\pm$11.17 & 50.66$\pm$4.95 & 55.07$\pm$7.69 \\

\bottomrule
\end{tabular}%
}
\caption{Ablation: Zero-shot NOOP\% (mean $\pm$ std), trained on $2\times3$, tested on $3\times3$ and $4\times4$. Bold indicates statistically significant minimum (lowest idle rate) within each setup (Shapiro-Wilk-adaptive two-sided test; Bonferroni-corrected, $p{<}0.0125$).}

\label{tab:app_abl_zs_noop}
\end{table*}

\begin{table*}[!t]
\centering\setlength{\tabcolsep}{1.4pt}\renewcommand{\arraystretch}{1.02}
\resizebox{\linewidth}{!}{%
\begin{tabular}{@{\hspace{2pt}}l|r|r|r|r||r|r|r|r@{\hspace{2pt}}}
\toprule
 & \multicolumn{8}{c}{Trained on $3\times3$} \\
\cmidrule(lr){2-9}
 & \multicolumn{4}{c||}{$4\times4$} & \multicolumn{4}{c}{$5\times5$} \\
\cmidrule(lr){2-5}\cmidrule(lr){6-9}
\shortstack[l]{PLATO\\Variation} & \multicolumn{1}{c|}{S0} & \multicolumn{1}{c|}{S1} & \multicolumn{1}{c|}{S2} & \multicolumn{1}{c||}{S3} & \multicolumn{1}{c|}{S0} & \multicolumn{1}{c|}{S1} & \multicolumn{1}{c|}{S2} & \multicolumn{1}{c}{S3} \\
\midrule
(mlp+add)      & 26.71$\pm$14.04 & 90.18$\pm$6.05 & 50.44$\pm$9.25 & 58.87$\pm$4.14 & 28.81$\pm$14.54 & 92.03$\pm$3.80 & 46.36$\pm$7.08 & 56.28$\pm$7.37 \\
(mlp+dot)      & 36.33$\pm$20.78 & 91.28$\pm$3.99 & 49.82$\pm$9.14 & 53.23$\pm$3.30 & 22.62$\pm$14.49 & 91.60$\pm$3.95 & 46.09$\pm$7.19 & 58.09$\pm$7.07 \\
(lstm+add)     & \textbf{20.94$\pm$14.18} & \textbf{79.13$\pm$15.81} & 50.79$\pm$9.95 & 53.46$\pm$3.83 & 19.66$\pm$13.38 & 90.93$\pm$3.71 & \textbf{45.45$\pm$6.54} & 54.66$\pm$6.88 \\
(lstm+dot)     & 25.68$\pm$16.11 & 91.05$\pm$3.92 & 55.97$\pm$8.70 & 53.08$\pm$2.95 & 26.99$\pm$13.88 & \textbf{89.03$\pm$6.16} & 45.53$\pm$5.71 & 55.95$\pm$7.47 \\

\bottomrule
\end{tabular}%
}
\caption{Ablation: Zero-shot NOOP\% (mean $\pm$ std), trained on $3\times3$, tested on $4\times4$ and $5\times5$. Bold indicates statistically significant minimum (lowest idle rate) within each setup (Shapiro-Wilk-adaptive two-sided test; Bonferroni-corrected, $p{<}0.0125$).}
\label{tab:app_abl_zs_noop_from3x3}
\end{table*}

\begin{table*}[!t]
\centering\setlength{\tabcolsep}{1.4pt}\renewcommand{\arraystretch}{1.02}
\resizebox{\linewidth}{!}{%
\begin{tabular}{@{\hspace{2pt}}l|r|r|r|r||r|r|r|r@{\hspace{2pt}}}
\toprule
 & \multicolumn{8}{c}{Trained on $2\times3$} \\
\cmidrule(lr){2-9}
 & \multicolumn{4}{c||}{$3\times3$} & \multicolumn{4}{c}{$4\times4$} \\
\cmidrule(lr){2-5}\cmidrule(lr){6-9}
\shortstack[l]{PLATO\\Variation} & \multicolumn{1}{c|}{S0} & \multicolumn{1}{c|}{S1} & \multicolumn{1}{c|}{S2} & \multicolumn{1}{c||}{S3} & \multicolumn{1}{c|}{S0} & \multicolumn{1}{c|}{S1} & \multicolumn{1}{c|}{S2} & \multicolumn{1}{c}{S3} \\
\midrule
(mlp+add)      & 0.01$\pm$0.08 & 0.74$\pm$0.73 & 3.97$\pm$0.20 & 4.63$\pm$0.48 & 0.57$\pm$0.63 & 3.63$\pm$0.89 & 6.01$\pm$0.12 & 6.53$\pm$0.50 \\
(mlp+dot)      & 0.00$\pm$0.00 & 0.92$\pm$0.76 & 4.00$\pm$0.00 & 4.63$\pm$0.49 & \textbf{0.26$\pm$0.48} & 3.79$\pm$0.89 & 6.02$\pm$0.14 & 6.47$\pm$0.50 \\
(lstm+add)     & 0.04$\pm$0.23 & 0.63$\pm$0.69 & 4.00$\pm$0.48 & 4.45$\pm$0.50 & 0.73$\pm$0.69 & 4.08$\pm$0.79 & 6.01$\pm$0.08 & \textbf{6.23$\pm$0.42} \\
(lstm+dot)     & 0.03$\pm$0.20 & 0.83$\pm$0.86 & \textbf{3.77$\pm$0.45} & 4.61$\pm$0.49 & 0.92$\pm$0.74 & 3.57$\pm$0.81 & 6.03$\pm$0.21 & 6.36$\pm$0.50 \\

\bottomrule
\end{tabular}%
}
\caption{Ablation: Zero-shot burnouts per episode (mean $\pm$ std), trained on $2\times3$, tested on $3\times3$ and $4\times4$. Bold indicates statistically significant minimum within each setup (Shapiro-Wilk-adaptive two-sided test; Bonferroni-corrected, $p{<}0.0125$).}
\label{tab:app_abl_zs_burnout}
\end{table*}

\begin{table*}[!t]
\centering\setlength{\tabcolsep}{1.4pt}\renewcommand{\arraystretch}{1.02}
\resizebox{\linewidth}{!}{%
\begin{tabular}{@{\hspace{2pt}}l|r|r|r|r||r|r|r|r@{\hspace{2pt}}}
\toprule
 & \multicolumn{8}{c}{Trained on $3\times3$} \\
\cmidrule(lr){2-9}
 & \multicolumn{4}{c||}{$4\times4$} & \multicolumn{4}{c}{$5\times5$} \\
\cmidrule(lr){2-5}\cmidrule(lr){6-9}
\shortstack[l]{PLATO\\Variation} & \multicolumn{1}{c|}{S0} & \multicolumn{1}{c|}{S1} & \multicolumn{1}{c|}{S2} & \multicolumn{1}{c||}{S3} & \multicolumn{1}{c|}{S0} & \multicolumn{1}{c|}{S1} & \multicolumn{1}{c|}{S2} & \multicolumn{1}{c}{S3} \\
\midrule
(mlp+add)      & 1.09$\pm$0.67 & 3.83$\pm$0.92 & 6.05$\pm$0.61 & 6.01$\pm$0.18 & 2.06$\pm$0.74 & 4.53$\pm$1.00 & 6.29$\pm$0.47 & 6.89$\pm$0.55 \\
(mlp+dot)      & 0.96$\pm$0.57 & 3.63$\pm$0.89 & 6.05$\pm$0.58 & 6.01$\pm$0.14 & \textbf{1.62$\pm$0.71} & \textbf{4.13$\pm$0.85} & 6.29$\pm$0.48 & 7.06$\pm$0.49 \\
(lstm+add)     & 0.99$\pm$0.73 & 4.40$\pm$0.94 & 6.15$\pm$0.57 & 6.03$\pm$0.21 & 1.99$\pm$0.69 & 4.75$\pm$1.07 & 6.25$\pm$0.44 & 6.77$\pm$0.61 \\
(lstm+dot)     & 1.03$\pm$0.67 & 4.06$\pm$0.77 & 6.28$\pm$0.68 & 5.99$\pm$0.08 & 2.07$\pm$0.77 & 5.24$\pm$0.79 & 6.19$\pm$0.39 & 6.71$\pm$0.59 \\

\bottomrule
\end{tabular}%
}
\caption{Ablation: Zero-shot burnouts per episode (mean $\pm$ std), trained on $3\times3$, tested on $4\times4$ and $5\times5$. Bold indicates statistically significant minimum within each setup (Shapiro-Wilk-adaptive two-sided test; Bonferroni-corrected, $p{<}0.0125$).}
\label{tab:app_abl_zs_burnout_from3x3}
\end{table*}

\begin{table*}[!t]
\centering\setlength{\tabcolsep}{1.4pt}\renewcommand{\arraystretch}{1.02}
\resizebox{\linewidth}{!}{%
\begin{tabular}{@{\hspace{2pt}}l|r|r|r|r||r|r|r|r@{\hspace{2pt}}}
\toprule
 & \multicolumn{8}{c}{Trained on $2\times3$} \\
\cmidrule(lr){2-9}
 & \multicolumn{4}{c||}{$3\times3$} & \multicolumn{4}{c}{$4\times4$} \\
\cmidrule(lr){2-5}\cmidrule(lr){6-9}
\shortstack[l]{PLATO\\Variation} & \multicolumn{1}{c|}{S0} & \multicolumn{1}{c|}{S1} & \multicolumn{1}{c|}{S2} & \multicolumn{1}{c||}{S3} & \multicolumn{1}{c|}{S0} & \multicolumn{1}{c|}{S1} & \multicolumn{1}{c|}{S2} & \multicolumn{1}{c}{S3} \\
\midrule
(mlp+add)      & 4.99$\pm$0.08 & 4.15$\pm$0.74 & 46.35$\pm$6.24 & 11.02$\pm$13.65 & 6.43$\pm$0.63 & 6.29$\pm$4.51 & 41.06$\pm$7.73 & 17.37$\pm$17.82 \\
(mlp+dot)      & 5.00$\pm$0.00 & 4.05$\pm$0.76 & 45.09$\pm$6.01 & 10.79$\pm$13.33 & \textbf{6.74$\pm$0.48} & 5.83$\pm$3.83 & 40.61$\pm$8.12 & 19.39$\pm$17.95 \\
(lstm+add)     & 4.96$\pm$0.23 & 3.89$\pm$0.72 & 41.89$\pm$10.80 & \textbf{17.61$\pm$13.37} & 6.27$\pm$0.69 & \textbf{7.77$\pm$4.56} & 42.43$\pm$5.79 & \textbf{29.22$\pm$15.87} \\
(lstm+dot)     & 4.97$\pm$0.20 & 3.99$\pm$0.87 & 47.21$\pm$8.08 & 11.64$\pm$13.83 & 6.08$\pm$0.74 & 5.85$\pm$2.83 & 41.91$\pm$8.93 & 23.13$\pm$17.72 \\

\bottomrule
\end{tabular}%
}
\caption{Ablation: Zero-shot putouts per episode (mean $\pm$ std), trained on $2\times3$, tested on $3\times3$ and $4\times4$. Bold indicates statistically significant best performance within each setup (Shapiro-Wilk-adaptive two-sided test; Bonferroni-corrected, $p{<}0.0125$).}

\label{tab:app_abl_zs_putout}
\end{table*}

\begin{table*}[!t]
\centering\setlength{\tabcolsep}{1.4pt}\renewcommand{\arraystretch}{1.02}
\resizebox{\linewidth}{!}{%
\begin{tabular}{@{\hspace{2pt}}l|r|r|r|r||r|r|r|r@{\hspace{2pt}}}
\toprule
 & \multicolumn{8}{c}{Trained on $3\times3$} \\
\cmidrule(lr){2-9}
 & \multicolumn{4}{c||}{$4\times4$} & \multicolumn{4}{c}{$5\times5$} \\
\cmidrule(lr){2-5}\cmidrule(lr){6-9}
\shortstack[l]{PLATO\\Variation} & \multicolumn{1}{c|}{S0} & \multicolumn{1}{c|}{S1} & \multicolumn{1}{c|}{S2} & \multicolumn{1}{c||}{S3} & \multicolumn{1}{c|}{S0} & \multicolumn{1}{c|}{S1} & \multicolumn{1}{c|}{S2} & \multicolumn{1}{c}{S3} \\
\midrule
(mlp+add)      & 5.91$\pm$0.67 & 6.45$\pm$4.51 & 33.63$\pm$12.27 & 38.69$\pm$7.04 & 5.94$\pm$0.74 & 5.77$\pm$2.64 & 50.69$\pm$10.43 & 36.09$\pm$12.64 \\
(mlp+dot)      & 6.04$\pm$0.57 & 5.61$\pm$2.32 & 33.84$\pm$12.79 & 41.53$\pm$6.41 & \textbf{6.38$\pm$0.71} & 5.13$\pm$1.41 & 48.23$\pm$9.81 & 31.28$\pm$11.02 \\
(lstm+add)     & 6.01$\pm$0.73 & \textbf{8.39$\pm$5.27} & 29.99$\pm$13.09 & 39.35$\pm$8.70 & 6.01$\pm$0.69 & 5.40$\pm$2.34 & 50.95$\pm$10.61 & 38.76$\pm$12.87 \\
(lstm+dot)     & 5.97$\pm$0.67 & 5.71$\pm$2.23 & 28.13$\pm$13.99 & 41.15$\pm$4.99 & 5.93$\pm$0.77 & 7.21$\pm$3.36 & \textbf{54.33$\pm$9.87} & 37.95$\pm$12.36 \\

\bottomrule
\end{tabular}%
}
\caption{Ablation: Zero-shot putouts per episode (mean $\pm$ std), trained on $3\times3$, tested on $4\times4$ and $5\times5$. Bold indicates statistically significant best performance within each setup (Shapiro-Wilk-adaptive two-sided test; Bonferroni-corrected, $p{<}0.0125$).}
\label{tab:app_abl_zs_putout_from3x3}
\end{table*}

\paragraph{Zero-Shot Burnouts (Tables~\ref{tab:app_abl_zs_burnout} and~\ref{tab:app_abl_zs_burnout_from3x3}).}
Burnout counts are near-maximal across variants in S2--S3 at $4\times4$ ($\approx 6.0$) regardless of training source, reflecting the difficulty of fire control at scale. Trained on $2\times3$, mlp+dot achieves statistically significant fewest burnouts in S0 at $4\times4$ ($0.26$); lstm+dot achieves statistically significant fewest in S2 at $3\times3$ ($3.77$); lstm+add achieves statistically significant fewest in S3 at $4\times4$ ($6.23$). Trained on $3\times3$, mlp+dot achieves statistically significant fewest burnouts in S0--S1 at $5\times5$.

\paragraph{Zero-Shot Putouts (Tables~\ref{tab:app_abl_zs_putout} and~\ref{tab:app_abl_zs_putout_from3x3}).}
Trained on $2\times3$, mlp+dot achieves statistically significant best putouts in S0 at $4\times4$ ($6.74$); lstm+add achieves statistically significant best putouts in S3 at $3\times3$ ($17.61$) and S1 and S3 at $4\times4$. Trained on $3\times3$, mlp+dot achieves statistically significant best putouts in S0 at $5\times5$ ($6.38$); lstm+add achieves significantly most putouts at $4\times4$ S1 ($8.39$); lstm+dot achieves best putouts at $5\times5$ S2 ($54.33$, significant).

\paragraph{Zero-Shot NOOP Usage (Tables~\ref{tab:app_abl_zs_noop} and~\ref{tab:app_abl_zs_noop_from3x3}).}
NOOP rates are high in Setup~1 ($\approx 80\%$--$91\%$) across all variants and training sources, reflecting task sparsity. Trained on $2\times3$, mlp+dot achieves statistically significant lowest NOOP in S0 at $4\times4$ ($17.49\%$), and lstm+dot achieves lowest in S2 at $3\times3$ ($48.17\%$). Trained on $3\times3$, lstm+add achieves significantly lowest NOOP in S0--S1 at $4\times4$ and S2 at $5\times5$; lstm+dot achieves significantly lowest NOOP in S1 at $5\times5$ ($89.03\%$).


\end{document}